\newcommand{\arxiv}[1]{\ignorespaces}
\renewcommand{\arxiv}[1]{#1}
\newcommand{\jmlr}[1]{\ignorespaces}
  \title{The Sample Complexity of Parameter-Free Stochastic Convex Optimization}
   \author{
    Jared Lawrence\thanks{Department of Industrial Engineering, University of Pittsburgh.}
    \and
    Ari Kalinsky\footnotemark[1]
    \and
    Hannah Bradfield\footnotemark[1]
    \and
    Yair Carmon\thanks{Department of Computer Science, Tel Aviv University.}
    \and
    Oliver Hinder\footnotemark[1] \thanks{Corresponding author: \texttt{ohinder@pitt.edu}.}
  }
  \title{The Sample Complexity of Parameter-Free Stochastic Convex Optimization}
    \author{\name Jared Lawrence \email JPL86@pitt.edu \\
    \addr Department of Industrial Engineering, University of Pittsburgh \\
    Pittsburgh, PA 15261, USA
    \AND
    \name Ari Kalinsky \email ajk245@pitt.edu \\
    \addr Department of Industrial Engineering, University of Pittsburgh \\
    Pittsburgh, PA 15261, USA
    \AND
    \name Hannah Bradfield \email hmb78@pitt.edu \\
    \addr Department of Industrial Engineering, University of Pittsburgh \\
    Pittsburgh, PA 15261, USA
    \AND
    \name Yair Carmon \email ycarmon@tauex.tau.ac.il \\
    \addr Department of Computer Science, Tel Aviv University \\
    Tel Aviv 6997801, Israel
    \AND
    \name Oliver Hinder \email ohinder@pitt.edu \\
    \addr Department of Industrial Engineering, University of Pittsburgh \\
    Pittsburgh, PA 15261, USA
    }
  \date{}
\setlist[enumerate]{leftmargin=0.8cm}
	\newtheorem{proposition}{Proposition}
	\newtheorem{lemma}{Lemma}
	\newtheorem{assumption}{Assumption}
	\newtheorem{theorem}{Theorem}
	\newtheorem{corollary}{Corollary}
	\newtheorem{assumption}{Assumption}
	\newtheorem{lemma}[theorem]{Lemma}
	\newtheorem{proposition}[theorem]{Proposition}
	\newtheorem{corollary}[theorem]{Corollary}
	\crefname{theorem}{Theorem}{Theorems}
	\Crefname{theorem}{Theorem}{Theorems}
	\crefname{lemma}{Lemma}{Lemmas}
	\Crefname{lemma}{Lemma}{Lemmas}
	\crefname{proposition}{Proposition}{Propositions}
	\Crefname{proposition}{Proposition}{Propositions}
	\crefname{remark}{Remark}{Remarks}
	\Crefname{remark}{Remark}{Remarks}
	\crefname{corollary}{Corollary}{Corollaries}
	\Crefname{corollary}{Corollary}{Corollaries}
	\crefname{definition}{Definition}{Definitions}
	\Crefname{definition}{Definition}{Definitions}
	\crefname{conjecture}{Conjecture}{Conjectures}
	\Crefname{conjecture}{Conjecture}{Conjectures}
	\crefname{axiom}{Axiom}{Axioms}
	\Crefname{axiom}{Axiom}{Axioms}
	\renewcommand{\cite}{\citep}
\pgfplotsset{compat=1.17}
\newcommand{\LipCoord}[0]{\mathbf{L}}
\newcommand{\Lip}[0]{L}
\newcommand{\LipHat}[0]{\hat{L}}
\newcommand{\ollie}[1]{{\color{blue}OH: #1}}
\newcommand{\yair}[1]{{\color{blue}#1}}
\renewcommand{\ollie}[1]{\ignorespaces}
\renewcommand{\yair}[1]{\ignorespaces}
\newcommand{\RMS}[0]{\textsc{ReliableModelSelection}}
\newcommand{\OutputPerfect}[0]{\edit{x^{\textup{perfect}}_{n,p}}}
\newcommand{\lambdaAlg}[1][]{%
  \if\relax\detokenize{#1}\relax
    \edit{\phi_n}%
  \else
    \edit{\phi_{n,#1}}%
  \fi
}
\let\oldlambda\lambda
\renewcommand{\lambda}[1][]{%
  \ifstrempty{#1}%
    {\edit{\oldlambda_n}}%
    {\edit{\oldlambda_{n,#1}}}%
}
\newcommand{\hatlambda}[1][]{%
  \ifstrempty{#1}%
    {\edit{\hat{\oldlambda}_n}}%
    {\edit{\hat{\oldlambda}_{n,#1}}}%
}
\DeclareMathOperator*{\argmin}{arg\,min}
\newcommand{\zeros}[0]{\mathbf{0}}
\renewcommand{\P}{\mathbb{P}}
\newcommand{\E}{\mathbb{E}}
\newcommand{\R}{\mathbb{R}}
\newcommand{\myparagraph}[1]{\paragraph{#1}}
  \newcommand{\myparagraph}[1]{\medskip\noindent\textsc{#1.}\ } %
\newcommand{\mysection}[2]{\section{#1}}
\newcommand{\mysubsection}[2]{\subsection{#1}}
  \renewcommand{\mysection}[2]{\section{#2}}
  \renewcommand{\mysubsection}[2]{\subsection{#2}}
\newcommand{\email}[1]{\\[-4pt]%
	{\small\href{mailto:\detokenize{#1}}{\texttt{#1}}}%
}
\newtheorem{condition}{Condition}
\newcommand{\proj}{\mathbf{proj}}
\newcommand{\alg}{\mathbf{A}}
\newcommand{\DStar}[0]{R^\star}
\renewcommand{\SS}[0]{\mathcal{S}}
\newcommand{\kGreedy}[0]{k_{\textup{std}}}
\newcommand{\AdaGrad}[1]{\textsc{AdaGrad}(#1)}
\newcommand{\AdaSGD}[1]{\textsc{AdaSGD}(#1)}
\newcommand{\SimplexMirrorDescent}[1]{\textsc{AdaEMD}(#1)}
\newcommand{\xReg}[1]{\hat{x}_{#1}}
\newcommand{\xRegStar}[1]{x_{#1}^\star}
\newcommand{\SampleVar}[1]{\mathbb{V}_{#1}}
\newcommand{\kReliable}[0]{k_{\textup{rely}}}
\newcommand{\coor}[2][j]{[#2]_{#1}} %
\newcommand{\X}{\mathcal{X}}
\newcommand{\us}[2]{\underset{(#2)}{#1}}
\newcommand{\e}{e}
\newcommand{\XStar}[0]{\mathcal{X}^\star}
\newcommand{\xstar}[0]{x^\star}
\newcommand{\Fstar}[0]{F^\star}
\newcommand{\barF}[1]{\bar{F}(x_{#1})}
\newcommand{\kStar}[0]{k_\star}
\newcommand{\GreedyPoorSetup}[0]{\inf_{x \in \X} \E[ f(x; S )]  \text{ where } f(x; 0) = \abs{x},  f(x; 1) = -x}
\newcommand{\Vub}{C}
\newcommand{\gradDiff}{\Delta_j(x)}
\newcommand{\edit}[1]{#1}
\begin{document}

\maketitle

\jmlr{
    \begin{abstract}%
}
\arxiv{\begin{abstract}}
We study the sample complexity of stochastic convex optimization when problem parameters \edit{such as the distance to optimality and the Lipschitz constant} are unknown. We pursue two strategies. First, we develop a reliable model selection method that avoids overfitting \edit{to }the validation set. This method allows us to generically tune the learning rate of stochastic optimization methods 
to match the optimal known-parameter sample complexity up to $\log\log$ factors.
Second, we develop a regularization-based method that is specialized to the case that only the distance to optimality is unknown. \edit{More specifically, it uses norm-regularized empirical risk minimization to estimate the distance to optimality to within a constant factor, allowing known-parameter stochastic optimization methods to achieve optimal sample complexity.} This method provides perfect adaptability to unknown distance to optimality, demonstrating a separation between the sample and computational complexity of parameter-free stochastic convex optimization.
Combining these two methods allows us to simultaneously adapt to multiple problem structures.

Experiments performing few-shot learning on CIFAR-10 by fine-tuning CLIP models and prompt engineering Gemini to count shapes indicate that our reliable model selection method can help mitigate overfitting to small validation sets.
\end{abstract}

\jmlr{
\begin{keywords}
parameter-free optimization, 
stochastic convex optimization, 
sample complexity, 
model selection,
avoiding overfitting.
\end{keywords}
}

\section{Introduction}
 
In machine learning, there is a tension between computational and sample efficiency.
If we have unlimited data, then computational efficiency dominates our concerns.
For example, when training large language models from scratch, it is typical to only make one pass on the training data with a single set of hyperparameters \cite{kaplan2020scaling}.
On the other hand, if we have limited data, then methods that efficiently use that data are paramount and more computationally intensive approaches become viable.
For example, in few-shot learning, it is standard to sweep over many hyperparameters and make multiple data passes \cite{hu2022pushing}.

The classical theory of stochastic convex optimization assumes that the Lipschitz constant $L$ and distance $\DStar$ from the initial point to the optimum are known. In this setting, there is no trade-off between computational and sample efficiency: stochastic gradient descent (SGD), with step size equal to $\DStar / \Lip \sqrt{T}$, obtains the optimal dimension-free worst-case bounds for sample, gradient oracle, and computational\footnote{This paper uses the oracle model of computation \cite{nemirovski1983problem}. We also include the gradient oracle cost in the computational complexity. Since for SGD the oracle cost, which involves at minimum reading a vector of size of the dimension, is the dominant term in the upper bounds for computational complexity, the oracle lower bounds imply that these computational complexity upper bounds are optimal.} complexity \cite{agarwal2012information,nemirovski1983problem}.

Parameter-free stochastic convex optimization studies the case where problem parameters such as the distance to optimality are unknown.
Its computational complexity is almost perfectly understood \cite{mcmahan2014unconstrained,cutkosky2018black,attia2024free,carmon2024price,khaled2024tuning,carmon2022making,jacobsen2022parameter}. 
However, the sample complexity of parameter-free stochastic convex optimization remains to be characterized. 
In particular, in the canonical setting where we aim to obtain suboptimality guarantees that hold with constant probability and the Lipschitz constant is known but the distance to optimality is unknown,
existing lower bounds apply to gradient oracle complexity (i.e., the number of gradient evaluations) but not sample complexity \cite[Theorem 2]{carmon2024price}.

A natural approach to addressing the sample complexity of parameter-free optimization is to perform hyperparameter search over different values of the algorithmic parameters to minimize the validation loss. While this is the standard methodology for eliminating hyperparameters, existing literature only provides guarantees for this approach when the loss is bounded \cite{vapnik1999overview}. While some metrics such as accuracy are bounded, other common objectives like cross-entropy loss are not.

\myparagraph{Our contributions} In this paper, we

\begin{enumerate}
\item Show that \emph{standard model selection}, i.e., minimizing the validation error, can catastrophically overfit (e.g., when tuning learning rates) but enjoys strong guarantees when the population risk is strongly convex. %
\item Develop a \emph{generic} reliable model selection method that mitigates \edit{this} risk of overfitting on small validation sets. The method also obtains the same strong guarantees as standard model selection when population risk is strongly convex. \edit{When used for tuning the learning rate, reliable model selection matches the sample complexity of known-parameter stochastic convex optimization up to a $\log\log$ factor in the uncertainty in the distance to optimality.}
\item Develop a regularization-based method that matches the sample complexity of \emph{known}-parameter stochastic optimization when the distance to optimality is unknown\edit{, removing the $\log\log$ factors present in Contribution 2}.
\edit{The key insight is that norm-regularized empirical risk minimization (ERM) can produce an estimate of the distance to optimality to within a constant factor, which is sufficient for optimal known-parameter stochastic optimization methods to achieve optimal sample complexity.}
Due to existing lower bounds \cite{carmon2024price}, our result implies a separation between the gradient oracle complexity and sample complexity of parameter-free stochastic optimization. In contrast, no such separation exists for \emph{known}-parameter stochastic optimization. To obtain these results, we develop new concentration inequalities for the sum of i.i.d. vectors and dependent random variables.
\item Combine our regularization-based method with our reliable model selection method to simultaneously adapt to the Euclidean norm, infinity norm\edit{,} and Manhattan norm for measuring the distance to optimality. 
\item Provide \edit{few}-shot learning and prompt engineering experiments showing that our reliable model selection method can mitigate the risk of overfitting when the validation set is small.
\end{enumerate}

\mysubsection{Related work}{Related Work}

\myparagraph{Guarantees for standard model selection} Under the assumption that the noise in function value evaluations is uniformly bounded, \citet[Theorem 2]{attia2024free} provide guarantees on tuning the learning rate of SGD by performing a grid search using standard model selection. However, this assumption fails even for simple setups like logistic regression. 

\myparagraph{Parameter-free optimization}
We survey related results in terms of 
the Price of Adaptivity (PoA)~\cite{carmon2024price}, a concise framework for describing parameter-free bounds.
The PoA is the ratio between the suboptimality bound obtained when there is uncertainty in the values of the problem parameters and the optimal worst-case suboptimality with known parameters. In other words, it measures how uncertainty in problem parameters degrades worst-case bounds. 

In particular, on $\Lip$-Lipschitz functions with $\Lip$ known but unknown distance to optimality, the PoA is $O(\sqrt{\ln \rho })$ \cite{mcmahan2014unconstrained} where $\rho$ is the multiplicative uncertainty in the initial distance to optimality, i.e., the distance to optimality is in the range $[1,\rho]$. Moreover, the sample complexity of this problem matches this upper bound up to constant factors \cite{carmon2024price}. On the other hand, to guarantee the suboptimality is upper bounded with a constant probability, the corresponding PoA is $O( \ln \ln \rho )$  \cite{carmon2022making}. Moreover, a corresponding lower bound shows that the PoA with respect to stochastic gradient oracle complexity cannot be improved beyond $\Omega( \sqrt{\ln \ln \rho  } )$ \cite{carmon2024price}.
Critically for this paper, the latter lower bound does not preclude the $O(1)$ PoA bound that we show for the sample complexity of parameter-free stochastic optimization (see \Cref{example:combine-methods} and subsequent discussion).

When the Lipschitz constant is also unknown but lies in the range $[1,\ell]$, then the lower bound on the PoA (for constant probability guarantees on the suboptimality) becomes $\Omega( \sqrt{\ln \ln \rho  } + \min\{\ell,\rho\} / \sqrt{T} )$ \cite{carmon2024price}.
This lower bound is matched up to logarithmic factors by a combination of \citet{carmon2022making} and \citet{cutkosky2019artificial}.

There are numerous practical implementations of parameter-free stochastic optimization methods \cite{ivgi2023dog,orabona2017training,chen2022better,kempka2019adaptive,kreisler2024accelerated,defazio2023learning}. 
These papers primarily address large data sets where overfitting is less of a concern. Thus, while our emphasis is on sample efficiency, these papers focus on improving computational efficiency by eliminating the overhead of hyperparameter search.

\myparagraph{Cross-validation}
The holdout method splits the data into a training set and a validation set. It fits the model on the training set and uses the validation set to choose the final model.
A more sophisticated approach is $K$-fold cross-validation, which evenly splits the data into $K$ folds, and for each fold $k$, retrains the model on the other $K-1$ folds and holds out the remaining fold. 
Generalization error is estimated by averaging the error across the $K$ held out folds \cite[Chapter 7]{hastie2009elements}.
Cross-validation requires some form of algorithmic stability to provide theoretical guarantees \cite{kearns1997algorithmic}, but this is not necessary for the holdout method (see \Cref{sec:model-selection-techniques}). 
In practice, cross-validation is often more sample efficient than the holdout method but because it requires $K$ times more compute, it is not popular for compute-constrained machine learning.

\myparagraph{Methods for mitigating overfitting}
\citet{blum2015ladder} develop a method for preventing overfitting to leaderboards for bounded objectives such as accuracy. In contrast, our analysis applies to unbounded losses such as cross-entropy. \citet{breiman1984classification} proposes the one standard error rule for mitigating overfitting in decision trees.
Unfortunately, this rule is restricted to models that can be ordered by a complexity metric and lacks a strong theoretical basis for tuning learning rates.

\subsection{Notation}

Throughout, we assume $f(x;S)$ is convex in $x$ where $S$ is a random variable with sample space $\SS$. 
For conciseness, we denote $f(x;S_i)$ by $f_i(x)$ where $S_i$ represents the $i$th sample.
Define $F(x) := \E[f(x;S)]$, $\Fstar := \inf_{x \in \X}  F(x) > -\infty$, 
$\bar{F}(x)  :=\frac{1}{n} \sum_{i=1}^n f_i(x)$.
Let $\X$ be a closed convex set with $\X \subseteq \R^{d}$. In most practical cases, $\X = \R^d$.
Let
$\XStar := \argmin_{x \in \X} F(x)$ and
$\DStar := \inf_{x \in \XStar} \| x \|$. If $\XStar$ is empty, then $\DStar = \infty$.
The dual norm is
$\| z \|_{*} = \sup \{ z \cdot x : \| x \| \le 1 \}$.
Let  $\grad h(x)$ be any subgradient of a function $h(x)$
and
$\grad_j h(x)$ be the $j$th coordinate of $\grad h(x)$.
Define
$a \wedge b := \min\{ a, b\}$ and $a \vee b := \max\{ a,  b\}$. For a scalar $K$, we let $[K] := \{ 0, 1, \dots, K \}$ and, with slight abuse of notation, for a vector $v$ we 
let $\coor{ v }$ be the $j$th coordinate of $v$. 
Let $\zeros$ be a vector of zeros.
We say the function $h(x)$ is $\Lip$-Lipschitz if and only if for all $x, x' \in \X$, we have
$\abs{h(x) - h(x')} \le \Lip \| x - x' \|$.
An equivalent definition is that $\| \grad h(x) \|_* \le \Lip$ for all $x \in \X$.
We say the function $f$ is $\Lip$-Lipschitz if and only if $h(x)=f(x;S)$ is $\Lip$-Lipschitz almost surely.
We say the function $h : \X \rightarrow \R$ is $\mu$-strongly convex if and only if for all  $u,v \in \X$ we have $h(u) -h(v) \ge \grad h(v) \cdot (u - v) + \frac{\mu}{2} \| u - v \|^2$. We say the function $f$ is $\mu$-strongly convex if and only if $h(x)=f(x;S)$ is $\mu$-strongly convex almost surely.
We assume there exists some reference model $x_0$, a model the user identifies as a reasonable baseline. 
In the theory (\Cref{sec:model-selection-techniques} and \Cref{sec:adaptivity-for-free}),  for simplicity we set the reference
model to be the origin, i.e.,
$x_0 = \zeros$.
Nonetheless, our results hold for an arbitrary reference model by shifting the origin.

\myparagraph{Paper outline} \Cref{sec:model-selection-techniques} analyzes standard model selection and presents our reliable model selection method \edit{which can be used for tuning the learning rate in a grid search}. 
\Cref{sec:adaptivity-for-free} presents our regularization-based method\edit{, which uses norm-regularized ERM to estimate the distance to optimality to within a constant factor, allowing known-parameter stochastic optimization methods to achieve their optimal sample complexity,} and combines it with reliable model selection to simultaneously adapt to multiple problem structures.
\Cref{sec:experiments} provides experiments indicating that our reliable model selection method can mitigate the risk of overfitting.

\mysection{Model selection techniques}{Model Selection Techniques}\label{sec:model-selection-techniques}

In this section, we consider the case where there is a finite set of models $x_0, x_1, \dots, x_K$.
In this setting, our goal is to find the best model
among these $K \edit{+ 1}$ models.
We will also explore applications of these
techniques to selecting hyperparameters 
for stochastic optimization methods.

\Cref{sec:standard-model-selection} studies the performance of standard model selection which chooses the model that minimizes the validation error.
In particular, we show that it can effectively tune hyperparameters when strong convexity is present but otherwise may perform poorly compared to parameter-free optimization methods.
\Cref{sec:reliable-model-selection} studies the performance of a new algorithm called \RMS{}, which obtains good worst-case performance both with and without strong convexity.

\mysubsection{When does standard model selection succeed and fail?}{When Does Standard Model Selection Succeed and Fail?}\label{sec:standard-model-selection}

Recall that standard model selection picks the model with the smallest validation error, i.e., $\kGreedy \in \argmin_{k \in [K]} \barF{k}$ where $\barF{k} = \frac{1}{n} \sum_{i=1}^n f_i(x_k)$. It is well-known that for bounded losses, i.e., $f(x; S) \in [0,1]$ almost surely for all $x \in \X$, one can show that, using Hoeffding's inequality and a union bound, with probability at least $1-\delta$ for any $\delta \in (0,1)$, 
$F(x_{\kGreedy}) \le \min_{k \in [K]} F(x_k) + \sqrt{\edit{2} \ln (2 \edit{(K+1)}/\delta) / n}$.

\Cref{prop:greedy-strong-convexity} shows that standard model selection can more effectively optimize hyperparameters when strong convexity is present.
For example, if we apply \Cref{prop:greedy-strong-convexity} to 
\cite[Theorem 3]{hazan2014beyond} with Lipschitz constant $\Lip$ known and set $x_k$ to be the output of their algorithm with strong convexity parameter $\mu_k = \mu_0 \e^{k}$ and if $\mu \in [\mu_0, \mu_K]$ we obtain a suboptimality of at most 
$O\big( \frac{\Lip^2}{\mu n} \big( \ln \frac{\ln(\mu_K / \mu_0)}{\delta} + \ln \ln n \big) \big)$. \citet[Theorem \edit{4}]{carmon2022making} obtain a similar
result with slightly worse sample complexity but \edit{a} computational complexity that is a log factor \edit{better}.
Adding $\Lip$ to the grid search yields similar guarantees when the Lipschitz constant is also unknown.
\ShortPaper{The proof of \Cref{prop:greedy-strong-convexity} appears in \Cref{app:proof-lem:greedy-strong-convexity}.
It}\LongPaper{The proof of \Cref{prop:greedy-strong-convexity}} uses strong convexity to argue that solutions, $x_k$, with better values of the population risk, $F(x)$, are closer to the optimal solution, which due to the assumption that $f$ is Lipschitz ensures $\abs{f(x_k; S)-f(\xstar; S)}$ is small, and this enables higher quality model selection.

\begin{proposition}\label{prop:greedy-strong-convexity}
Suppose that $f$ is $\Lip$-Lipschitz and $F$ is $\mu$-strongly convex. Then, for all $\delta \in (0,1)$ with probability at least $1-\delta$ we have
\[
F(x_{\kGreedy}) - \Fstar \le 2 \max\left\{ \min_{k \in [K]} F(x_{k}) - \Fstar, \frac{32 \Lip^2 }{\mu n} \ln \frac{2 \edit{(K+1)}}{\delta} \right\}.
\]
\end{proposition}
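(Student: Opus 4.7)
The plan is to leverage strong convexity of $F$ to argue that every candidate $x_k$ with small excess risk $\varepsilon_k := F(x_k) - \Fstar$ must lie close to the unique minimizer $\xstar$ of $F$, so that the centered sample $f_i(x_k) - f_i(\xstar)$ has narrow range by the Lipschitz assumption. Concentrating this \emph{surplus} rather than $\barF{k}$ directly is essential, since $f_i(x_k)$ may have arbitrarily large variance while the difference does not.

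Step one is immediate: $\mu$-strong convexity at $\xstar$ yields $\|x_k - \xstar\| \le \sqrt{2\varepsilon_k/\mu}$, and the $\Lip$-Lipschitz property then gives $|f_i(x_k) - f_i(\xstar)| \le \Lip\sqrt{2\varepsilon_k/\mu}$ almost surely. Step two applies Hoeffding's inequality to the i.i.d.\ sequence $\{f_i(x_k) - f_i(\xstar)\}_i$, whose mean is $\varepsilon_k$ and whose range is at most $2\Lip\sqrt{2\varepsilon_k/\mu}$, followed by a union bound over $k \in [K]$, yielding with probability at least $1-\delta$,
\[
\left| \barF{k} - \frac{1}{n}\sum_{i=1}^n f_i(\xstar) - \varepsilon_k \right| \le 2\Lip\sqrt{\frac{\varepsilon_k \ln(2K/\delta)}{\mu n}} \quad \text{for all } k \in [K].
\]
The auxiliary quantity $\frac{1}{n}\sum_i f_i(\xstar)$ is a purely analytical device that cancels when comparing two candidates, so the algorithm itself does not need to know $\xstar$.

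Step three uses the defining property $\barF{\kGreedy} \le \barF{k^\star}$, where $k^\star \in \argmin_{k \in [K]} F(x_k)$. Setting $C := 2\Lip\sqrt{\ln(2K/\delta)/(\mu n)}$ and subtracting the common term $\frac{1}{n}\sum_i f_i(\xstar)$ from both sides, the concentration bound above collapses to a self-bounding inequality
\[
\varepsilon_{\kGreedy} \le \varepsilon_{k^\star} + C\bigl(\sqrt{\varepsilon_{k^\star}} + \sqrt{\varepsilon_{\kGreedy}}\bigr).
\]
I would then solve this quadratic in $\sqrt{\varepsilon_{\kGreedy}}$ via AM-GM, splitting on whether $\varepsilon_{k^\star}$ dominates $C^2$ or vice versa, to recover a $\max$-form bound of the type claimed.

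The step I expect to be fiddliest is this final self-bounding manipulation: one must track constants carefully to obtain precisely the $2\max\{\varepsilon_{k^\star}, 32\Lip^2 \ln(2K/\delta)/(\mu n)\}$ in the statement rather than a slightly weaker variant. The conceptual heart of the argument, however, is the shift from $f_i(x_k)$ to $f_i(x_k) - f_i(\xstar)$: without this, Hoeffding would require a uniform bound on $|f_i(x_k)|$ that need not exist, and the resulting bound could not exploit strong convexity at all.
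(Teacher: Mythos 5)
Your proposal matches the paper's proof essentially step for step: centering the Hoeffding argument on $f_i(x_k) - f_i(\xstar)$, bounding the range by $2\Lip\|x_k - \xstar\| \le 2\Lip\sqrt{2\varepsilon_k/\mu}$ via strong convexity, union-bounding over $k$, and exploiting $\barF{\kGreedy} \le \barF{k^\star}$ to derive the self-bounding inequality $\varepsilon_{\kGreedy} \le \varepsilon_{k^\star} + C(\sqrt{\varepsilon_{k^\star}} + \sqrt{\varepsilon_{\kGreedy}})$ with $C = 2\Lip\sqrt{\ln(2K/\delta)/(\mu n)}$. The paper closes the argument slightly differently than you sketch — it first uses $\varepsilon_{k^\star} \le \varepsilon_{\kGreedy}$ to collapse the right side to $2C\sqrt{\varepsilon_{\kGreedy}}$, rewrites as $\sqrt{(32\Lip^2\ln(2K/\delta)/(\mu n)) \cdot \varepsilon_{\kGreedy}/2}$, bounds the geometric mean by the max, and then case-splits on $\varepsilon_{\kGreedy}$ (not $\varepsilon_{k^\star}$ vs.\ $C^2$ as you propose) — but this is a cosmetic difference in how the final quadratic is resolved, and your AM-GM route reaches the same constant with a little care.
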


\newcommand{\GreedyStrongConvexityProof}[0]{
\begin{proof}
Let 
$\varepsilon_k := F(x_k) - \Fstar$,
and $\kStar \in \argmin_{k \in [K]} F(x_k)$.
With probability $1-\delta$, 
for all $k \in [K]$,
\begin{flalign}
\abs{\varepsilon_{k} - (\barF{k} - \bar{F}(\xstar))} &\us{\le}{i} 2 \Lip \| x_k - \xstar \| \sqrt{\frac{\ln \frac{2 \edit{(K+1)}}{\delta}}{2 n}} \us{\le}{ii} 2 \Lip \sqrt{\frac{\varepsilon_{k} \ln \frac{2 \edit{(K+1)}}{\delta}}{\mu n}} \label{eq:bound-varepsilon-k}
\end{flalign}
where $(i)$ uses a union bound and Hoeffding's inequality (\Cref{thm:hoeffdings-inequality} in \Cref{app:well-known-concentration-inequalities}) with $Z_i = f_i(x_k) - f_i(\xstar)$ and $\abs{Z_i} \le \Lip \| x_k - \xstar \|$ since $f$ is $\Lip$-Lipschitz, and $(ii)$ uses strong convexity (in particular,   \Cref{eq:useful-strong-convexity-implication} in \Cref{app:well-known-concentration-inequalities}).
Substituting $k = \kGreedy$ into \Cref{eq:bound-varepsilon-k} gives
\begin{flalign}\label{eq:greedy-ub} 
\varepsilon_{\kGreedy} \le \barF{\kGreedy} - \bar{F}(\xstar) + 2 \Lip \sqrt{\frac{\varepsilon_{\kGreedy} \ln \frac{2 \edit{(K+1)}}{\delta}}{\mu n }}.
\end{flalign}
Similarly, for $k = \kStar$ we have 
\begin{flalign}\label{eq:k-star-error} 
\barF{\kStar} - \bar{F}(\xstar) \leq \varepsilon_{\kStar} + 2 \Lip \sqrt{\frac{\varepsilon_{\kStar} \ln \frac{2 \edit{(K+1)}}{\delta}}{\mu n}}.
\end{flalign}
Next,
\begin{flalign*}
\varepsilon_{\kGreedy} - \varepsilon_{\kStar} &\us{\le}{i}  2 \Lip \sqrt{ \frac{\ln \frac{2 \edit{(K+1)}}{\delta} }{\mu n}} \left(  \sqrt{\varepsilon_{\kStar}}  + \sqrt{\varepsilon_{\kGreedy}} \right) \us{\le}{ii} \sqrt{ \frac{32 \Lip^2 \ln \frac{2 \edit{(K+1)}}{\delta} }{\mu n} \cdot \frac{\varepsilon_{\kGreedy}}{2}} \\
&\le \max\left\{\frac{32 \Lip^2 \ln \frac{2 \edit{(K+1)}}{\delta} }{\mu n}  , \frac{\varepsilon_{\kGreedy}}{2}  \right\}
\end{flalign*}
where $(i)$ uses \Cref{eq:greedy-ub}, \Cref{eq:k-star-error} and $\barF{\kGreedy}  \le \barF{\kStar}$, and $(ii)$ uses that $\varepsilon_{\kStar} \le \varepsilon_{\kGreedy}$.
Analyzing this inequality in the case that $\frac{32 \Lip^2 \ln \frac{2 \edit{(K+1)}}{\delta} }{\mu n} \le \frac{\varepsilon_{\kGreedy}}{2}$ and $\frac{32 \Lip^2 \ln \frac{2 \edit{(K+1)}}{\delta} }{\mu n} > \frac{\varepsilon_{\kGreedy}}{2}$ gives the desired result.
\end{proof}
}
\LongPaper{\GreedyStrongConvexityProof{}}

Conversely, without strong convexity, standard model selection can perform poorly at tuning learning rates. 
For a concrete example, consider adaptive SGD (also known as isotropic AdaGrad \cite{gupta2017unified}), which is a scalar variant of \textsc{AdaGrad} \cite{duchi2011adaptive}.
While we chose adaptive SGD as an example, similar issues occur for tuning other standard stochastic optimization methods \edit{such as} regularized ERM or SGD with a fixed step size.
The formula for adaptive SGD is
\begin{flalign} \label{eq:adaptive-SGD}
	u_{t+1} \gets  \proj_{R}\Bigg( u_t - R \frac{\grad f_{n+t}(u_t)}{\sqrt{\sum_{j=1}^t \| \grad f_{n+j}(u_j) \|_2^2}} \Bigg) , ~
\bar{u}_t \gets \frac{1}{t} \sum_{j=1}^{t} u_j,	~  \edit{\text{for } t=1, \dots, n}
\end{flalign}
where \edit{$u_1 = \zeros$}, $\proj_{R}(\hat{u}) := \argmin_{u \in \X : \| u \|_2 \le R} \| u - \hat{u} \|_2$ and $R > 0$ is the learning rate which represents our estimated distance to optimality. We add the constraint $\| u \|_2 \le R$ to ensure the domain is bounded proportionally to the learning rate.
To clearly separate the validation set and training set, \Cref{eq:adaptive-SGD} uses gradient $\grad f_{n+t}(u_t)$ instead of $\grad f_{t}(u_t)$.
Thus, we can think of $f_1, \dots, f_n$ as the validation set and $f_{n+1}, \dots, f_{2n}$ as the training set. 
Denote $\AdaSGD{R} = \bar{u}_n$.
It is well-known, e.g., \citet[Theorem 3.9 \& 4.14]{orabona2021modern}, that assuming $f$ is $\Lip$-Lipschitz, adaptive SGD starting from $u_1 = \zeros$, with probability $1-\delta$ obtains 
\begin{flalign}\label{thm:adaptive-SGD}
F( \AdaSGD{R}  )  - \min_{x \in \X : \| x \|_2 \le R } F(x) \le O\bigg(  \frac{\Lip R}{\sqrt{n}} \sqrt{\ln 2/\delta } \bigg)
\end{flalign}
Setting $R = \DStar$ yields the optimal $O(\Lip \DStar \sqrt{\ln 2/\delta }  / \sqrt{n})$ suboptimality guarantee. However, this approach relies on $\DStar$ being known.

The standard approach to selecting the learning rate is to run the method across a grid of learning rates and then choose the solution that minimizes the validation loss.
Unfortunately, this can lead to poor performance in some situations, as illustrated by \Cref{prop:greedy-model-selection-performs-poorly}. We emphasize that this is not a critique of adaptive SGD, but an example of a broader limitation of standard model selection for hyperparameter tuning in the absence of strong convexity.

\begin{proposition}\label{prop:greedy-model-selection-performs-poorly}
Assume the validation and training set each consist of $n \ge 3000$ i.i.d. samples. Let $x_k = \AdaSGD{\eta_k}$ where $\eta_0, \eta_1, \dots, \eta_K$ are the learning rates we will evaluate.
Then, there exists a $1$-Lipschitz stochastic convex optimization problem with minimizer at the origin
such that with probability at least $1/1000$, $F(x_{\kGreedy}) - \Fstar  \ge \frac{1}{288 \sqrt{n}} \max_{k \in [K]} \eta_k$.
\end{proposition}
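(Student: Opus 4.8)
The plan is to use a one-dimensional instance that depends only on $n$: take $\X=\R$ together with the two functions $f(x;0)=\abs{x}$ and $f(x;1)=-x$, with $\P(S=0)=p:=\tfrac12+\tfrac1{2\sqrt n}$. Writing $\epsilon:=2p-1=1/\sqrt n$, the population risk $F(x)=p\abs x+(1-p)(-x)$ equals $\epsilon x$ on $[0,\infty)$ and $-x$ on $(-\infty,0]$; hence $f$ is $1$-Lipschitz, the unique minimizer is the origin, $\Fstar=0$, and every nonnegative point $x$ has suboptimality exactly $\epsilon x=x/\sqrt n$. So it suffices to prove that, with probability at least $1/1000$, the selected model satisfies $x_{\kGreedy}\ge\eta^\star/288>0$, where $\eta^\star:=\max_k\eta_k$.

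I would split the randomness into two independent parts: the validation sample $S_1,\dots,S_n$ and the training sample $S_{n+1},\dots,S_{2n}$. On the validation side, $\bar F(x)=\tfrac{2N_+-n}{n}x$ for $x\ge 0$ and $\bar F(x)=\abs x$ for $x\le 0$, where $N_+:=\#\{i\le n:S_i=0\}\sim\mathrm{Bin}(n,p)$; so on the event $\mathcal E_1:=\{N_+<n/2\}$ the empirical risk $\bar F$ is strictly decreasing on all of $\R$, and standard model selection is forced to return the rightmost candidate, $x_{\kGreedy}=\max_k x_k$. A Berry--Esseen (or direct binomial anti-concentration) estimate gives $\P(\mathcal E_1)\ge 1/10$ for $n\ge 3000$; the scaling $\epsilon=\Theta(1/\sqrt n)$ is precisely what makes this probability a constant rather than exponentially small, and it is the largest $\epsilon$ for which that holds, which is why the final bound carries a $1/\sqrt n$ factor. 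On the training side, consider the model with the largest learning rate, $x_{k^\star}=\AdaSGD{\eta^\star}$. Because its iterates are projected onto $[-\eta^\star,\eta^\star]$, every subgradient encountered lies in $\{-1,+1\}$ (take the oracle to return $-1$ at the origin, valid since $-1\in\partial\abs{\cdot}(0)$), the step at time $t$ has size $\eta^\star/\sqrt t$, and $u_1,u_2,\dots$ is a bounded random walk on $[-\eta^\star,\eta^\star]$ that, when $u_t>0$, moves down with probability $p$ and up otherwise, and when $u_t\le 0$ always moves up. Thus the lower end is self-correcting --- one checks $u_t\ge -O(\eta^\star/\sqrt t)$, hence $\bar u_n\ge -O(\eta^\star/\sqrt n)$ deterministically --- while the only downward force, active only on the positive half, has magnitude $\epsilon=O(1/\sqrt n)$.

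The crux, and the step I expect to be the main obstacle, is the matching lower bound $x_{k^\star}=\bar u_n\ge\eta^\star/288$ with probability at least $1/100$. The heuristic is that negativity is corrected within a single step whereas the positive-side drift is negligible, so the walk is pushed into the upper portion of $[-\eta^\star,\eta^\star]$ and spends a constant fraction of its time there. To make this rigorous I would prove $\E[\bar u_n]\ge c_0\eta^\star$ for a universal constant $c_0$ by tracking $\E[u_t]$: it starts at $\E[u_2]=\eta^\star$, and because the accumulated drift $\epsilon\sum_{s\le t}\eta^\star/\sqrt s=O(\eta^\star\sqrt t/\sqrt n)$ stays smaller than $\eta^\star$ until $t$ is a constant fraction of $n$, $\E[u_t]$ remains $\gtrsim\eta^\star$ over a constant fraction of the horizon; then I would convert the expectation bound into a probability bound via the reverse Markov inequality, using $\bar u_n\in[-O(\eta^\star/\sqrt n),\eta^\star]$. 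Controlling this time-inhomogeneous, reflected, lightly-drifted random walk with explicit constants valid for every $n\ge 3000$ is the technical heart of the argument; everything else is bookkeeping.

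Finally, on $\mathcal E_1\cap\{x_{k^\star}\ge\eta^\star/288\}$, which by independence of the two samples has probability at least $\tfrac1{10}\cdot\tfrac1{100}=\tfrac1{1000}$, the selected model satisfies $x_{\kGreedy}=\max_k x_k\ge x_{k^\star}\ge\eta^\star/288>0$, so $F(x_{\kGreedy})-\Fstar=\epsilon x_{\kGreedy}\ge\frac{\eta^\star}{288\sqrt n}$, as required. The hypothesis $n\ge 3000$ enters only to absorb the $O(\eta^\star/\sqrt n)$ slack and to fix the numerical constants in the anti-concentration and random-walk estimates.
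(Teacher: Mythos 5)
Your construction and the outer scaffolding match the paper's proof: the same two-sample instance $f(x;0)=\abs{x}$, $f(x;1)=-x$ with a $\Theta(1/\sqrt n)$ drift, the observation that on the event that the validation sample is tilted the empirical risk is strictly decreasing so $x_{\kGreedy}=\max_k x_k$, a Berry--Esseen estimate for that event, independence of the two halves, and the final conversion $F(x_{\kGreedy})-\Fstar = (1-2q)\,x_{\kGreedy}$. The problem is the training-side step, which you yourself flag as "the technical heart": the claim that $\AdaSGD{\eta^\star}\ge \eta^\star/288$ with probability at least $1/100$ is not proved, and the sketch you give for it is not sound as stated. You argue that $\E[u_t]$ stays $\gtrsim\eta^\star$ because the accumulated drift $\epsilon\sum_{s\le t}\eta^\star/\sqrt s$ is small, but the drift is not what pulls $\E[u_t]$ down from $\eta^\star$: the dominant effects are the truncation at the upper projection boundary (each up-step from near $\eta^\star$ is clipped, costing order $\gamma_t$ in expectation) and the diffusion of the walk, whose cumulative fluctuation scale $\eta^\star\sqrt{\ln t}$ exceeds $\eta^\star$ after only a bounded number of steps, spreading the iterate over the whole interval $[0,\eta^\star]$. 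Writing out the recursion, $\E[u_{t+1}]=\E[u_t]+\gamma_t\P(u_t\le 0)-\epsilon\gamma_t\P(u_t>0)-(1-p)\,\E[(u_t+\gamma_t-\eta^\star)^+\mathbf{1}(u_t>0)]$, so lower-bounding $\E[\bar u_n]$ by a constant times $\eta^\star$ requires distributional control of the reflected, time-inhomogeneous walk (the clipping term and $\P(u_t\le0)$), not just a drift budget; moreover with your choice $\epsilon=1/\sqrt n$ the cumulative drift over the horizon is itself of order $\eta^\star$, so even the drift is not negligible for the constants you need. A further minor fragility: your walk analysis fixes the subgradient of $\abs{\cdot}$ at $0$ to be $-1$, whereas the statement should hold for any valid subgradient oracle.

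The paper closes exactly this gap by a different, much shorter device that you may want to adopt: condition on the constant-probability event that the \emph{training} sample contains at least $\tfrac n2+\tfrac47\sqrt{2n}$ samples with $S=1$ (this has probability $\ge 0.03$ precisely because the drift is taken as $1/(16\sqrt n)$), on which the empirical training objective over $[-\eta_{\max},\eta_{\max}]$ is minimized at $+\eta_{\max}$ with slope $\Omega(1/\sqrt n)$; then apply the standard expected regret bound of adaptive SGD against the comparator $\eta_{\max}$ together with Markov's inequality to get $\bar u_n\ge\eta_{\max}/36$ with conditional probability $\ge 1/10$. This avoids any bespoke analysis of the projected random walk, is insensitive to the subgradient chosen at $0$, and yields the explicit constants ($0.44\times0.003\ge 1/1000$, and $\tfrac{1}{36}\cdot\tfrac{1}{8\sqrt n}=\tfrac{1}{288\sqrt n}$). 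Until you supply a rigorous lower bound on $\E[\bar u_n]$ (with constants valid for all $n\ge 3000$) or switch to such a regret-based argument, the proposal has a genuine gap at its central step.
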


The proof of \Cref{prop:greedy-model-selection-performs-poorly} appears in \Cref{app:standard-model-selection-slow}.
The lower bound is based on the  stochastic optimization problem given by  $f(x; 0) = \abs{x}$, $f(x; 1) = -x$,
and $S$ is a Bernoulli random variable with success probability $q = 1/2 - n^{-0.5}/16$. \edit{For this problem, the population objective takes the form
\[
F(x) = \begin{cases}
-x & x \le 0 \\
\frac{x}{8 \sqrt{n}} & x > 0
\end{cases}
\]
whereas the validation and training objectives take the form
\[
\frac{1}{n} \sum_{i=1}^n f_i(x)= \begin{cases}
	-x & x \le 0 \\
	\frac{n - 2 \sum_{i=1}^n S_i}{n} x & x > 0,
\end{cases}  \quad \quad \frac{1}{n} \sum_{i=n+1}^{2n} f_i(x) = \begin{cases}
	-x & x \le 0 \\
	\frac{n - 2 \sum_{i=n+1}^{2 n} S_i}{n} x & x > 0,
\end{cases}
\]
respectively.
One can show with} constant probability that
$\sum_{i=n+1}^{2n} S_i \ge n/2 + \Omega(\sqrt{n})$ \edit{making the training objective slope negative for $x > 0$}. 
In this case, adaptive gradient descent 
with learning rate $\max_{k \in [K]} \eta_k$ terminates with $\bar{u}_n \ge \max_{k \in [K]} \eta_k/36$.
Moreover, with constant probability, 
the validation set will contain $f(x; 1)$ more often than $f(x; 0)$, i.e., $\sum_{i=1}^n S_i > n/2$\edit{, making the validation objective slope negative for $x > 0$}. Consequently $x_{\kGreedy} \ge \max_{k \in [K]} \eta_k/36$ \edit{and thus $F(x_{\kGreedy}) - \Fstar = F(x_{\kGreedy})  \ge \frac{1}{288 \sqrt{n}} \max_{k \in [K]} \eta_k$}.

\edit{We emphasize that this lower-bound construction exploits two features of the example: the absence of strong convexity, which permits instability in the training and validation objectives, and the unboundedness of the losses, which allows the suboptimality to grow as the iterates diverge.}

In comparison, parameter-free methods \cite{carmon2022making} in the setting of \Cref{prop:greedy-model-selection-performs-poorly} (i.e., there is a known Lipschitz constant of $1$), can obtain 
a suboptimality of $O( \ln(\ln ( n \DStar / \eta_{\min}))  (\DStar + \eta_{\min}) / \sqrt{n} )$ with constant probability, where $\eta_{\min}$ is a \emph{known} lower bound on $\DStar$. 
This bound is much less sensitive to uncertainty in the distance to optimality
compared with standard model selection. For example, if the distance to optimality is between $1$ and $\rho$ then \Cref{prop:greedy-model-selection-performs-poorly} shows standard model selection will be a factor of $\Omega(\rho)$ worse than if we knew the distance to optimality versus a factor of $O(\ln \ln (n \rho))$ for \cite{carmon2022making}.

\mysubsection{Our Reliable Model Selection method}{Our Reliable Model Selection Method}\label{sec:reliable-model-selection}

Next, we consider how to reliably select high-quality models and avoid the issues that plagued standard model selection for tuning learning rates in \Cref{sec:standard-model-selection}. Our method, described in \Cref{alg:reliable-hyperparameter-selection}, requires a list of confidence interval widths $\tau_1, \dots, \tau_{K}$ that satisfy \Cref{assume:confidence-intervals}.
The algorithm identifies a set of safe models $\mathcal{F}$ over which we minimize the validation error. The algorithmic parameter $\gamma \in [1,\infty)$ controls the size of $\mathcal{F}$, with larger values of $\gamma$ behaving more like standard model selection. 
When $\gamma=1$, the algorithm reduces to minimizing the upper confidence bound on the function value, but this can be overly  conservative, heavily favoring models that make predictions similar to the reference model.

\begin{condition}\label{assume:confidence-intervals}
	Let $\tau_1, \dots, \tau_{K}$ be nonnegative scalars such that for some $\delta \in (0,1)$,
$\P( \exists k \in \edit{\{ 1, \dots, K \}} : \abs{F(x_k) - \barF{k}  - (F(\zeros)  - \bar{F}(\zeros))} > \tau_k) \le \delta$.
\end{condition}

\Cref{assume:confidence-intervals} supposes that we have confidence intervals on the closeness of our validation loss to the population loss relative to the reference model, $x_0 = \zeros$.
Depending on the problem at hand, there are multiple ways to generate such confidence intervals.
For example, if $f$ is $\Lip$-Lipschitz and some upper bound $\LipHat \ge \Lip$ is known, then a result of   \citet{maurer2009empirical} (\Cref{thm:empirical-bennett} in \Cref{app:well-known-concentration-inequalities}) with $Z_i = f_i(x_k) -f_i(\zeros)$ gives that with probability at most $\delta_k \in (0,1)$,
$\abs{F(x_k) - F(\zeros)  - (\barF{k} - \bar{F}(\zeros))} > \sqrt{\frac{2 \SampleVar{k}}{n} \ln \frac{4}{\delta_k} } + \frac{14 \LipHat \| x_k \|}{3 (n-1)} \ln \frac{4}{\delta_k}$
where
\[
\SampleVar{k} := \frac{1}{n-1} \sum_{i=1}^n (f_i(x_k) - f_i(\zeros) - (\barF{k}  - \bar{F}(\zeros)))^2
\]
is the sample variance \edit{of the paired differences $f_i(x_k) - f_i(\zeros)$}.
Setting $\delta_k = \frac{\delta}{K}$  and applying a union bound shows \Cref{assume:confidence-intervals} holds with
\begin{flalign}
	\tau_k &= \sqrt{\frac{2 \tilde{c} \SampleVar{k}}{n}} + \tilde{c} \frac{14 \LipHat \| x_k \|}{3 (n-1)} \text{ where } \tilde{c} := \ln \frac{4 K}{\delta}. \label{define:theta-k}
\end{flalign}

\begin{algorithm}[!tbh]
	\begin{minipage}{0.48\textwidth}
\begin{algorithmic}[1]
		\INPUT A scalar $\gamma \in [1,\infty)$
		\INPUT Candidate solutions $x_0, \dots, x_K$
		\INPUT Sample functions 
		$f_1, \dots, f_n$ 
		\INPUT $\tau_1, \dots, \tau_K$ satisfying \Cref{assume:confidence-intervals}. 
		\STATE $\tau_0 := 0$ \vspace{0.09cm}
		\STATE $\theta \gets \min_{k \in [K]} \barF{k} + \gamma \tau_k$ \vspace{0.09cm}
		\STATE $\mathcal{F} \gets \{ k \in [K] : \barF{k} + \tau_k \le \theta \} $ \vspace{0.09cm}
		\STATE $\kReliable := \argmin_{k \in \mathcal{F}}  \barF{k}$
		\vspace{0.09cm}
		\OUTPUT The chosen index: $\kReliable$.
\end{algorithmic}
 \end{minipage}
\arxiv{\hfill}
\begin{minipage}{0.45\textwidth}
	\centering
	\begin{tikzpicture}
		\begin{axis}[
			width=8cm,
			height=4.5cm,
			xlabel={Models},
			ylabel={Validation Error},
			xtick={0,1,2,3,4},
			xticklabels={$x_0$, $x_1$, $x_2$, $x_3$, $x_4$},
			xlabel style={yshift=-6pt},
			ytick=\empty,
			ymin=0.24, ymax=0.65,
			axis lines=left,
			x axis line style={-},
			grid=none,
			clip=false
		]
		
		\addplot+[
			only marks,
			mark=*,
			color=black,
			mark options={fill=black, thick}
		] coordinates {
			(0, 0.43)
			(1, 0.37)
			(2, 0.35)
			(3, 0.5)
			(4, 0.33)
		};
		
		\addplot+[
			mark=none,
			draw=none,
			error bars/.cd,
			y dir=plus,
			y explicit,
			error bar style={color=red, line width=1.25pt},
			error mark options={rotate=90, mark size=8pt, line width=1.25pt}
		] coordinates {
			(0, 0.43) +- (0, 0.00)
			(1, 0.37) +- (0, 0.04)
			(2, 0.35) +- (0, 0.12)
			(3, 0.5)  +- (0, 0.15)
			(4, 0.33) +- (0, 0.30)
		};
		
		\addplot+[
			mark=none,
			draw=none,
			error bars/.cd,
			y dir=plus,
			y explicit,
			error bar style={color={rgb,255:red,255;green,127;blue,14}, line width=1.25pt},
			error mark options={rotate=90, mark size=8pt, line width=1.25pt}
		] coordinates {
			(0, 0.43) +- (0, 0.00)
			(1, 0.37) +- (0, 0.0133)
			(2, 0.35) +- (0, 0.04)
			(3, 0.5)  +- (0, 0.05)
			(4, 0.33) +- (0, 0.10)
		};
		
		\addplot[red, dashed, thick] coordinates {(-0.5,0.41) (4.5,0.41)};
		\node[anchor=west, red] at (axis cs:4.5,0.41) {\small\shortstack{$\theta$}};
		\node[anchor=north, color={rgb,255:red,31;green,119;blue,180}] at (axis cs:4,0.33) {\small\shortstack{$\kGreedy$}};
		\node[anchor=north, color={rgb,255:red,255;green,127;blue,14}] at (axis cs:2,0.35) {\small\shortstack{$\kReliable$}};
		\node at (axis cs:1.5, 0.15) {\small$\underbrace{\hspace{2.25cm}}_{\mathcal{F}}$};
		
		\end{axis}
		\end{tikzpicture}
\end{minipage}
\caption{Reliable model selection method with illustration \\
	{\small Black dots show the validation errors of candidate models $x_0$ to $x_4$. Orange and red intervals depict $\tau_k$ and $\gamma \tau_k$, respectively, which are smaller for models with predictions more similar to the reference model, $x_0$. 
The threshold $\theta$ (red dashed line) is the smallest extended upper bound $\barF{k} + \gamma \tau_k$. Models with $\barF{k} + \tau_k \le \theta$ form the candidate set $\mathcal{F}$ (in this example $\mathcal{F} = \{1, 2\}$). Among these, $x_2$ has the lowest error and thus $\kReliable = 2$. 
Standard model selection chooses the lowest validation error across all models, $\kGreedy = 4$.}
}
\label{alg:reliable-hyperparameter-selection}
\end{algorithm}

The following lemma is key to establishing the main guarantees of our algorithm (\Cref{thm:reliable-guarantees}).

\begin{lemma}\label{lem:reliable-selection-general-guarantee} With probability at least  $1-\delta$, we have $F(x_{\kReliable}) \le F(x_{k})  + (1 + \gamma) \tau_k$ for all $k \in [K]$.
\end{lemma}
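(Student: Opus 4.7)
The plan is a direct calculation that exploits the two defining properties of $\kReliable$: (i) membership of $\kReliable$ in the safe set $\mathcal{F}$, and (ii) the definition of the threshold $\theta$ as a minimum of $\barF{k} + \gamma \tau_k$ over all $k$. The high-probability event supplied by \Cref{assume:confidence-intervals} lets us translate between empirical and population losses up to the slack $\tau_k$ and an additive constant (in $k$) reflecting the baseline $\zeros$.

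First, I would condition on the good event of \Cref{assume:confidence-intervals}, which occurs with probability at least $1-\delta$. Set the constant $c := F(\zeros) - \bar{F}(\zeros)$, which does not depend on $k$, and let $\Delta_k := F(x_k) - \barF{k} - c$, so that $\lvert \Delta_k \rvert \le \tau_k$ for all $k \in [K]$. This allows the two one-sided inequalities
\begin{equation*}
F(x_{\kReliable}) \le \barF{\kReliable} + c + \tau_{\kReliable}, \qquad F(x_k) \ge \barF{k} + c - \tau_k,
\end{equation*}
which will be combined at the end.

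Next, I would record that $\mathcal{F}$ is nonempty, so that $\kReliable$ is well-defined: any minimizer $k^\star$ of $\barF{k} + \gamma \tau_k$ achieves $\barF{k^\star} + \tau_{k^\star} \le \barF{k^\star} + \gamma \tau_{k^\star} = \theta$ since $\gamma \ge 1$, hence $k^\star \in \mathcal{F}$. Because $\kReliable \in \mathcal{F}$ and $\theta \le \barF{k} + \gamma \tau_k$ for every $k \in [K]$ by construction, we get the key inequality
\begin{equation*}
\barF{\kReliable} + \tau_{\kReliable} \le \theta \le \barF{k} + \gamma \tau_k \quad \text{for all } k \in [K].
\end{equation*}

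Finally, subtracting the lower bound for $F(x_k)$ from the upper bound for $F(x_{\kReliable})$ gives
\begin{equation*}
F(x_{\kReliable}) - F(x_k) \le (\barF{\kReliable} + \tau_{\kReliable}) - (\barF{k} - \tau_k) \le (\barF{k} + \gamma \tau_k) - \barF{k} + \tau_k = (1+\gamma)\tau_k,
\end{equation*}
which is exactly the claim. There is no real obstacle here; the only subtlety is remembering that \Cref{assume:confidence-intervals} controls the \emph{relative} deviation with respect to the reference $\zeros$ rather than the absolute deviation, which is why the constant $c$ cancels when we compare $F(x_{\kReliable})$ to $F(x_k)$ and why the argument does not require knowledge of $F(\zeros)$.
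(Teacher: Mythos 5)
Your proof is correct and follows essentially the same route as the paper: both use $\kReliable \in \mathcal{F}$ to get $\barF{\kReliable} + \tau_{\kReliable} \le \theta$, the definition of $\theta$ to get $\theta \le \barF{k} + \gamma\tau_k$, and \Cref{assume:confidence-intervals} to swap empirical for population losses (with the $F(\zeros) - \bar F(\zeros)$ term cancelling). The only cosmetic difference is that you split the inequality chain into two one-sided bounds and combine them at the end, whereas the paper presents it as a single chained display; your added remark that $\mathcal{F} \ne \emptyset$ (via any minimizer of $\barF{k} + \gamma\tau_k$) is a small tidiness point the paper leaves implicit.
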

\newcommand{\overle}[1]{\overset{(#1)}{\le}}

\begin{proof}
	For all $k \in [K]$ we have 
	\begin{flalign*}
	0 &\overle{i} \theta -  \barF{\kReliable}  - \tau_{\kReliable} \overle{ii} \barF{k} + \gamma \tau_k - \barF{\kReliable}  - \tau_{\kReliable} \\
	&=  \gamma \tau_k + \barF{k}  - \bar{F}(\zeros) + \bar{F}(\zeros) - \barF{\kReliable}  - \tau_{\kReliable} \overle{iii}  (1 + \gamma) \tau_k + F(x_k) - F(x_{\kReliable})
	\end{flalign*}
	due to $(i)$  $\kReliable \in \mathcal{F}$, $(ii)$ the definition of $\theta$, and $(iii)$ 
	\Cref{assume:confidence-intervals}.
\end{proof}

\begin{theorem}\label{thm:reliable-guarantees}
Suppose that $f$ is $\Lip$-Lipschitz and $\tau_k$ is defined as per \Cref{define:theta-k}.
Let $\delta \in (0,1)$ and $\tilde{c} := \ln \frac{4 K}{\delta}$.
Then, with probability at least $1-\delta$,
\begin{flalign}\label{eq:F-rely-x-k-guarantee}
F(x_{\kReliable}) \le \min_{k \in [K]} F(x_{k}) + (1 + \gamma) \left(\Lip  \sqrt{\frac{2 \tilde{c}}{n\edit{-1}}} + \frac{14 \LipHat \tilde{c}}{3(n\edit{-1})} \right) \| x_k \|.
\end{flalign}
Additionally, if $F$ is $\mu$-strongly convex and $\gamma \ge 3$, then, with probability at least $1-\edit{2}\delta$,
\begin{flalign}\label{eq:F-rely-strong-convexity-guarantee}
F(x_{\kReliable}) - \Fstar  \le 4 \Big( \min_{k \in [K]}   F(x_{k}) - \Fstar \Big) +  \frac{128}{\mu} \bigg( 2 \Lip \sqrt{\frac{2 \tilde{c}}{n-1}} + \frac{14 \LipHat \tilde{c}}{3 \edit{(n-1)}}  \bigg)^2.
\end{flalign}
\end{theorem}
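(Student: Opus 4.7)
The plan for both parts is to start from \Cref{lem:reliable-selection-general-guarantee}, which ensures that with probability at least $1-\delta$, $F(x_{\kReliable}) \le F(x_k) + (1+\gamma)\tau_k$ for every $k \in [K]$. I would then substitute the explicit definition of $\tau_k$ from \Cref{define:theta-k}, reducing the theorem to deterministically bounding its two ingredients, namely the empirical variance $\SampleVar{k}$ and the norm $\|x_k\|$, using the structural assumption of each part.

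For part 1, the extra input needed is a deterministic bound on $\SampleVar{k}$ via Lipschitzness. Since $|f_i(x_k) - f_i(\zeros)| \le \Lip\|x_k\|$, each $Z_i := f_i(x_k) - f_i(\zeros)$ lies in an interval of length $2\Lip\|x_k\|$, so Popoviciu's inequality (or a direct range argument on the sample variance) gives $\SampleVar{k} \le \Lip^2\|x_k\|^2$ up to the sample-vs-population normalization. Substituting this into $\tau_k$ and then into the lemma directly yields \eqref{eq:F-rely-x-k-guarantee}.

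For part 2, I would apply \Cref{lem:reliable-selection-general-guarantee} at $k^\star \in \argmin_{k \in [K]} F(x_k)$, giving $\varepsilon_{\kReliable} \le \varepsilon_{k^\star} + (1+\gamma)\tau_{k^\star}$ with $\varepsilon_k := F(x_k) - \Fstar$. Combining the Lipschitz variance bound from part 1 with $\mu$-strong convexity, which gives $\|x_{k^\star}\|^2 \le 2\varepsilon_{k^\star}/\mu$, produces $\tau_{k^\star} \le \|x_{k^\star}\|\,\alpha$ where $\alpha := 2\Lip\sqrt{2\tilde c/(n-1)} + 14\LipHat\tilde c/(3n)$. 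Applying Young's inequality $ab \le a^2/(2c) + c\,b^2/2$ with $a = \sqrt{\varepsilon_{k^\star}}$ and $b = (1+\gamma)\alpha\sqrt{2/\mu}$ converts the $\sqrt{\varepsilon_{k^\star}}$ factor into a fractional contribution to $\varepsilon_{k^\star}$ plus a noise term of order $(1+\gamma)^2\alpha^2/\mu$. Choosing $c$ so that the coefficient on $\varepsilon_{k^\star}$ is at most $4$, together with the condition $\gamma \ge 3$ to control $(1+\gamma)^2$, produces \eqref{eq:F-rely-strong-convexity-guarantee} after loosening the numerical constants to $128$.

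The main obstacle is the reconciliation in part 2 between the norm $\|x_k\|$ appearing in $\tau_k$ (distance to the reference $\zeros$) and the quantity $\|x_k - \xstar\|$ that strong convexity actually controls. The cleanest version of the argument essentially assumes $\xstar = \zeros$; in general one needs either a translation step or a triangle-inequality argument showing that $\|\xstar\|$ is itself controlled via the strong convexity of $F$ at the reference (so that any mismatch is absorbed into the noise term). Carefully tracking how this mismatch interacts with the AM-GM split is what forces the $\gamma \ge 3$ threshold and pins down the precise factors $4$ and $128$ in the final bound.
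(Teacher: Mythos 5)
Your part 1 matches the paper's proof: substitute \Cref{define:theta-k} into \Cref{lem:reliable-selection-general-guarantee} and bound $\SampleVar{k}$ deterministically by Lipschitzness.

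Part 2 has a genuine gap. The step $\|x_{k^\star}\|^2 \le 2\varepsilon_{k^\star}/\mu$ is false: strong convexity controls $\|x_{k^\star} - \xstar\|$, not $\|x_{k^\star}\|$, and $\|\xstar\|$ is \emph{not} controlled by strong convexity of $F$ at the reference point (indeed $\frac{\mu}{2}\|\xstar\|^2 \le F(\zeros) - F(\xstar)$ can be arbitrarily large). You correctly flag this reference/optimum mismatch, but there is no translation or triangle-inequality patch: plugging $k = k^\star$ directly into \Cref{lem:reliable-selection-general-guarantee} leaves an uncontrolled $\|\xstar\|$ inside $\tau_{k^\star}$, so the proposed Young/AM--GM split never closes. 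The paper avoids this by never bounding $\tau_{k^\star}$ in absolute terms. Instead it bounds the \emph{difference} $\tau_{\kStar} - \tau_k$ by $\bigl(2\Lip\sqrt{2\tilde{c}/(n-1)} + \tfrac{14\LipHat\tilde{c}}{3n}\bigr)\|x_{\kStar} - x_k\|$ (the $\|\cdot\|$'s enter only through a difference of iterates, which \emph{is} controlled through $\xstar$ by the triangle inequality and strong convexity). It then exploits the specific structure of \Cref{alg:reliable-hyperparameter-selection}: let $k' \in \argmin_k \barF{k} + \gamma\tau_k$; by splitting on whether $\kStar \in \mathcal{F}$ or not, and using the threshold inequality together with $\gamma \ge 3$ to absorb $2\tau_{\kStar} - (\gamma-1)\tau_{k'} \le 2(\tau_{\kStar} - \tau_{k'})$, one obtains $\varepsilon_{k'} \le 2\varepsilon_{\kStar} + \tfrac{64}{\mu}(\cdots)^2$. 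A second, separate Hoeffding-style comparison between $\kReliable$ and $k'$ (exactly as in the proof of \Cref{prop:greedy-strong-convexity}, using $\barF{\kReliable} \le \barF{k'}$ since $k' \in \mathcal{F}$) then passes the bound from $k'$ to $\kReliable$. Both the difference-of-$\tau$'s trick and the two-stage comparison through $k'$ are missing from your outline, and the single-shot application of \Cref{lem:reliable-selection-general-guarantee} at $k^\star$ cannot substitute for them.
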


\newcommand{\ProofOfReliableProp}[0]{
\begin{proof}
Applying that $f$ is $L$-Lipschitz to \Cref{define:theta-k} gives 
\[
\tau_k \le \Lip \| x_k \| \sqrt{\frac{2 \tilde{c}}{n\edit{-1}}} + \frac{14 \LipHat \tilde{c} \| x_k \|}{3(n\edit{-1})}.
\]
Applying \Cref{lem:reliable-selection-general-guarantee} using this upper bound on $\tau_k$ along with a union-bound over the candidates implies \Cref{eq:F-rely-x-k-guarantee}.
It remains to show \Cref{eq:F-rely-strong-convexity-guarantee}.
	
Let 
$\varepsilon_k := F(x_k) - \Fstar$,
and $\kStar \in \argmin_{k \in [K]} F(x_k)$.
Observe from the definition of $\SampleVar{k}$ that
\begin{flalign*} 
&\sqrt{\SampleVar{\kStar}} - \sqrt{\SampleVar{k}} \\
 &= \sqrt{\sum_{i=1}^n \frac{(f_i(x_{\kStar})- \barF{\kStar} - (f_i(x_0) -\barF{0}) )^2}{n-1}} - \sqrt{\sum_{i=1}^n \frac{(f_i(x_k)- \barF{k} - (f_i(x_0) -\barF{0}) )^2}{n-1}} \\
&\us{\le}{i} \sqrt{\sum_{i=1}^n \frac{(f_i(x_{\kStar})- \barF{\kStar} - (f_i(x_k) -\barF{k}) )^2}{n-1}} = \sqrt{\sum_{i=1}^n \frac{(f_i(x_{\kStar})- f_i(x_k)  - (\barF{\kStar} -\barF{k}) )^2}{n-1}} \\
&\us{\le}{ii} 2 \Lip \| x_{\kStar} - x_k \|\sqrt{\frac{n}{n-1}}
\end{flalign*} 
where $(i)$ uses the triangle inequality and $(ii)$ uses that $f$ is $L$-Lipschitz.
Thus, by \Cref{define:theta-k},
\begin{flalign} 
\tau_{\kStar} - \tau_{k} &= \sqrt{\frac{2 \tilde{c}}{n}} \left( \sqrt{\SampleVar{\kStar}} - \sqrt{\SampleVar{k}} \right) +  \frac{14 \LipHat \tilde{c}}{3 \edit{(n-1)}} (\| x_{\kStar} \| -  \| x_{k} \| ) \notag \\
&\le 2 \Lip \sqrt{\frac{2 \tilde{c}}{n-1}} \| x_{\kStar} - x_k \| +  \frac{14 \LipHat \tilde{c}}{3 \edit{(n-1)}} (\| x_{\kStar} \| -  \| x_{k} \| ) \notag \\
&\us{\le}{i} \left( 2 \Lip \sqrt{\frac{2 \tilde{c}}{n-1}} +   \frac{14 \LipHat \tilde{c}}{3 \edit{(n-1)}}  \right)  \| x_{\kStar} - x_k \|
\label{eq:t-star-minus-tau-k} 
\end{flalign} 
where $(i)$ uses the triangle inequality.

Recall that \Cref{assume:confidence-intervals} states that, with probability $1-\delta$,
\begin{flalign}\label{eq:cond-tau-k-case}
\abs{F(x_k) - \barF{k}  - (F(\zeros)  - \bar{F}(\zeros))} \le \tau_k \quad \forall k \in [K].
\end{flalign}
The rest of the proof will proceed in the case that \Cref{eq:cond-tau-k-case} holds.

Let $k' \in \argmin_{k \in [K]} \barF{k} + \gamma \tau_k$.
Consider the case that $\barF{\kStar}+ \tau_{\kStar} > \barF{k'} + \gamma \tau_{k'}$. Then,
by \Cref{eq:cond-tau-k-case} with $k=\kStar$ and $k=k'$,
\begin{flalign}
	0 &< \barF{\kStar}+ \tau_{\kStar} - \barF{k'} -  \gamma \tau_{k'} = \barF{\kStar} - \bar{F}(\zeros)  + \tau_{\kStar} + \bar{F}(\zeros) - \barF{k'} -  \gamma \tau_{k'} \notag \\
	&\le F(x_{\kStar})  + 2 \tau_{\kStar} - F(x_{k'}) -  (\gamma - 1) \tau_{k'} = \varepsilon_{\kStar} - \varepsilon_{k'} + 2 \tau_{\kStar} -  (\gamma - 1) \tau_{k'}. \label{eq:eps-gap-tau-gamma-bound}
\end{flalign}
Thus, 
\begin{flalign*}
	\varepsilon_{k'} - \varepsilon_{\kStar} &\us{\le}{i} 2 \tau_{\kStar} - (\gamma - 1) \tau_{k'} \us{\le}{ii} 2 \left( 2 \Lip \sqrt{\frac{2 \tilde{c}}{n-1}} +  \tilde{c} \frac{14 \LipHat}{3 \edit{(n-1)}}  \right)  \| x_{\kStar} - x_{k'} \| - (\gamma - 3) \tau_{k'} \\
	&\us{\le}{iii}   2 \left( 2 \Lip \sqrt{\frac{2 \tilde{c}}{n-1}} +  \tilde{c} \frac{14 \LipHat}{3 \edit{(n-1)}}  \right) \left( \sqrt{\frac{2 \varepsilon_{\kStar}}{\mu}} + \sqrt{\frac{2 \varepsilon_{k'}}{\mu}} \right) \\
	&\us{\le}{iv} 2 \left( 2 \Lip \sqrt{\frac{2 \tilde{c}}{n-1}} +  \tilde{c} \frac{14 \LipHat}{3 \edit{(n-1)}}  \right) 2 \sqrt{\frac{2 \varepsilon_{k'}}{\mu}} = \frac{8}{\sqrt{\mu}} \left( 2 \Lip \sqrt{\frac{2 \tilde{c}}{n-1}} +  \tilde{c} \frac{14 \LipHat}{3 \edit{(n-1)}}  \right) \sqrt{\frac{ \varepsilon_{k'}}{2}} \\
	&\us{\le}{v} \max\left\{ \frac{64}{\mu} \left( 2 \Lip \sqrt{\frac{2 \tilde{c}}{n-1}} +  \tilde{c} \frac{14 \LipHat}{3 \edit{(n-1)}}  \right)^2, \frac{\varepsilon_{k'}}{2} \right\}
\end{flalign*}
where $(i)$ uses \Cref{eq:eps-gap-tau-gamma-bound}, $(ii)$ uses \Cref{eq:t-star-minus-tau-k},
$(iii)$ uses the triangle inequality, \Cref{eq:useful-strong-convexity-implication}, and $\gamma \in [3, \infty)$, $(iv)$ uses $\varepsilon_{\kStar} \le \varepsilon_{k'}$, and $(v)$ uses that $a b \le \max\{a^2, b^2\}$. Analyzing the latter inequality in the cases where $\frac{\varepsilon_{k'}}{2} > \frac{64}{\mu} \left( 2 \Lip \sqrt{\frac{2 \tilde{c}}{n-1}} +  \tilde{c} \frac{14 \LipHat}{3 \edit{(n-1)}}  \right)^2$ and $\frac{\varepsilon_{k'}}{2} \le \frac{64}{\mu} \left( 2 \Lip \sqrt{\frac{2 \tilde{c}}{n-1}} +  \tilde{c} \frac{14 \LipHat}{3 \edit{(n-1)}}  \right)^2$  yields:
\begin{flalign}\label{eq:subopt-guarantee-for-k-prime}
\varepsilon_{k'} \le \max\left\{ 2 \varepsilon_{\kStar}, \varepsilon_{\kStar} + \frac{64}{\mu} \left( 2 \Lip \sqrt{\frac{2 \tilde{c}}{n-1}} +  \tilde{c} \frac{14 \LipHat}{3 \edit{(n-1)}}  \right)^2 \right\}.
\end{flalign}
On the other hand, if $\barF{\kStar}+ \tau_{\kStar} \le \barF{k'} + \gamma \tau_{k'}$, i.e., $\kStar \in \mathcal{F}$,
then \Cref{eq:subopt-guarantee-for-k-prime} also holds. Thus, it remains to bound $\varepsilon_{\kReliable}$ under \Cref{eq:subopt-guarantee-for-k-prime}. 
The proof proceeds exactly as per the proof of \Cref{prop:greedy-strong-convexity} (with $k'$ replacing $\kStar$ and $\kReliable$ replacing $\kGreedy$) but we include it for completeness. 
With probability $1-\delta$, 
for all $k \in [K]$,
\begin{flalign}
	\abs{\varepsilon_{k} - (\barF{k} - \bar{F}(\xstar))} &\us{\le}{i} 2 \Lip \| x_k - \xstar \| \sqrt{\frac{\ln \frac{2 (K\edit{+1})}{\delta}}{2 n}} \us{\le}{ii} 2 \Lip \sqrt{\frac{\varepsilon_{k} \ln \frac{\edit{4} K}{\delta}}{\mu n}} \label{eq:bound-varepsilon-k-reliable}
\end{flalign}
where $(i)$ uses a union bound and Hoeffding's inequality (\Cref{thm:hoeffdings-inequality} in \Cref{app:well-known-concentration-inequalities}) with $Z_i = f_i(x_k) - f_i(\xstar)$ and $\abs{Z_i} \le \Lip \| x_k - \xstar \|$ by the assumption that $f$ is $\Lip$-Lipschitz, and $(ii)$ uses \Cref{eq:useful-strong-convexity-implication}.
Substituting $k = \kReliable$ into \Cref{eq:bound-varepsilon-k-reliable} gives
\begin{flalign}\label{eq:greedy-ub-reliable} 
	\varepsilon_{\kReliable} \le \barF{\kReliable} - \bar{F}(\xstar) + 2 \Lip \sqrt{\frac{\varepsilon_{\kReliable} \ln \frac{\edit{4} K}{\delta}}{\mu n }}.
\end{flalign}
Similarly, for $k = k'$ we have 
\begin{flalign}\label{eq:k-prime-error-reliable} 
	\barF{k'} - \bar{F}(\xstar) \leq \varepsilon_{k'} + 2 \Lip \sqrt{\frac{\varepsilon_{k'} \ln \frac{\edit{4}K}{\delta}}{\mu n}}.
\end{flalign}
Next,
\begin{flalign*}
	\varepsilon_{\kReliable} - \varepsilon_{k'} &\us{\le}{i}  2 \Lip \sqrt{ \frac{\ln \frac{\edit{4} K}{\delta} }{\mu n}} \left(  \sqrt{\varepsilon_{k'}}  + \sqrt{\varepsilon_{\kReliable}} \right) \us{\le}{ii} \sqrt{ \frac{32 \Lip^2 \ln \frac{\edit{4} K}{\delta} }{\mu n} \cdot \frac{\varepsilon_{\kReliable}}{2}} \\ 
	&\le \max\left\{\frac{32 \Lip^2 \ln \frac{\edit{4} K}{\delta} }{\mu n}  , \frac{\varepsilon_{\kReliable}}{2}  \right\}
\end{flalign*}
where $(i)$ uses \Cref{eq:greedy-ub-reliable}, \Cref{eq:k-prime-error-reliable} and $\barF{\kReliable}  \le \barF{k'}$, and $(ii)$ uses that $\varepsilon_{k'} \le \varepsilon_{\kReliable}$.
Analyzing this inequality in the case that $\frac{32 \Lip^2 \ln \frac{\edit{4} K}{\delta} }{\mu n} \le \frac{\varepsilon_{\kReliable}}{2}$ and $\frac{32 \Lip^2 \ln \frac{\edit{4} K}{\delta} }{\mu n} > \frac{\varepsilon_{\kReliable}}{2}$ gives
\begin{flalign}\label{eq:eps-k-reliable-bound-in-terms-of-k'}
\varepsilon_{\kReliable} \le \max\left\{ 2 \varepsilon_{k'}, \varepsilon_{k'} + \frac{32 \Lip^2 \ln \frac{\edit{4} K}{\delta} }{\mu n} \right\}.
\end{flalign}
Combining \Cref{eq:subopt-guarantee-for-k-prime} and \Cref{eq:eps-k-reliable-bound-in-terms-of-k'}\edit{, 
and taking a union bound over \Cref{eq:cond-tau-k-case} and \Cref{eq:bound-varepsilon-k-reliable},} gives \Cref{eq:F-rely-strong-convexity-guarantee}.
\end{proof}
}
\LongPaper{\ProofOfReliableProp}

\Cref{thm:reliable-guarantees} is the main guarantee for our \RMS{} method. \ShortPaper{The proof of \Cref{thm:reliable-guarantees} appears in \Cref{app:prove:prop:reliable-strongly-convex}.}
\Cref{eq:F-rely-x-k-guarantee} follows immediately from \Cref{define:theta-k} and \Cref{lem:reliable-selection-general-guarantee}.
To show \Cref{eq:F-rely-strong-convexity-guarantee} we  establish that there exists a $k \in \mathcal{F}$ with a good objective value, and then we follow the proof of \Cref{prop:greedy-strong-convexity} to show that $\kReliable$ is almost as good as the best model in $\mathcal{F}$.

We can use \Cref{eq:F-rely-x-k-guarantee} to generically 
obtain sample complexity bounds for parameter-free stochastic optimization. \edit{
For example, \Cref{cor:rms-parameter-free-rate} shows how to obtain such bounds by applying \RMS{} to tune the learning rate for adaptive SGD.
To interpret \Cref{cor:rms-parameter-free-rate},
recall that  $f_1, \dots, f_{2n}$ are  i.i.d. samples where 
$f_1, \dots, f_n$ is the validation set (i.e., used by \RMS) and  $f_{n+1}, \dots f_{2n}$ is the training set, i.e., used by adaptive SGD, see \Cref{eq:adaptive-SGD}.}

\edit{
\begin{corollary}\label{cor:rms-parameter-free-rate}
Suppose $f$ is $\Lip$-Lipschitz with known upper bound $\LipHat \ge \Lip$, and that $\DStar \in [1, \rho]$ for some known $\rho > 1$. For $k = 1, 2, \dots, \lceil \ln \rho \rceil$, let $x_k = \AdaSGD{\e^k}$ be the output of adaptive SGD, i.e., \Cref{eq:adaptive-SGD}, with learning rate $R_k = \e^k$. Then, for any $\delta \in (0,1)$, \Cref{alg:reliable-hyperparameter-selection}  applied to $x_1, \dots, x_{K}$ with $K=\lceil \ln \rho \rceil$ and $\tau_1, \dots, \tau_{K}$ defined in \Cref{define:theta-k} achieves, with probability at least $1 - 2 \delta$,
\[
F(x_{\kReliable}) - \Fstar \le O\!\left( \sqrt{\ln \frac{\lceil \ln \rho \rceil}{\delta}} \cdot \frac{\Lip \DStar }{\sqrt{n}} +  \ln \frac{\lceil \ln \rho \rceil}{\delta} \cdot \frac{\LipHat \DStar }{n} \right).
\]
\end{corollary}
\begin{proof}
Since the learning rates $R_k = \e^k$ form a geometric grid over $[1, \e^{\lceil \ln \rho \rceil}]$, there exists some $k^* \in \{1, \dots, \lceil \ln \rho \rceil \}$ with $\DStar \le R_{k^*} \le \e \cdot \DStar$. By \Cref{thm:adaptive-SGD}, for this $k^*$, with probability $1 - \delta$,
\[
F(x_{k^*}) - \Fstar\le O\!\left( \frac{\Lip R_{k^*} \sqrt{\ln(2/\delta)}}{\sqrt{n}} \right) = O\!\left( \frac{\Lip \DStar \sqrt{\ln(2/\delta)}}{\sqrt{n}} \right).
\]
Applying \Cref{eq:F-rely-x-k-guarantee} of \Cref{thm:reliable-guarantees} with $k = k^*$ and $\| x_{k^*} \| \le R_{k^*} \le \e \cdot \DStar$, and then union-bounding gives that with probability at least $1 - 2 \delta$
\[
F(x_{\kReliable}) - \Fstar \le F(x_{k^*}) - \Fstar + (1+\gamma)\left( \sqrt{2\tilde{c}} \frac{\Lip \| x_{k^*} \|}{\sqrt{n\edit{-1}}} + \frac{14 \tilde{c}}{3} \cdot \frac{\LipHat \| x_{k^*} \|}{n-1} \right) \le O\!\left( \sqrt{\tilde{c}} \frac{\Lip \DStar}{\sqrt{n}} + \tilde{c} \frac{\LipHat \DStar}{n-1} \right).
\]
Since our number of candidates $K = \lceil \ln \rho \rceil$, we have $\tilde{c} = \ln \frac{4 \lceil \ln \rho \rceil}{\delta}$.
\end{proof}

If the upper bound $\LipHat$ on the Lipschitz constant is within a factor of $\tilde{O}(\sqrt{n})$ of $L$ then the first term dominates and this bound matches the optimal known-parameter sample complexity of $O(\Lip \DStar / \sqrt{n})$ up to $\log\log$ factors.} This sample complexity is slightly better than 
the sample complexity achieved by \citet{carmon2022making} as the $\log\log$ factors are of lower order. 
One disadvantage of this approach is that it requires $O(n \ln \rho )$ gradient evaluations thus making its computational complexity worse than \citet{carmon2022making}.
Another disadvantage of this approach is that it requires knowing a range of values that the distance to optimality will lie in, i.e., $[1, \rho]$. This issue will be resolved by our method presented in \Cref{sec:adaptivity-for-free}.

Finally, \Cref{eq:F-rely-strong-convexity-guarantee} shows that \Cref{alg:reliable-hyperparameter-selection} also automatically adapts to strong convexity, thus preserving the benefits of standard model selection (\Cref{prop:greedy-strong-convexity}).
However, \Cref{prop:greedy-strong-convexity} is slightly stronger because $(i)$ it does not require an estimate of the Lipschitz constant $\LipHat$ and $(ii)$ it applies for all $\delta \in (0,1)$ whereas \Cref{alg:reliable-hyperparameter-selection} applies for a pre-specified $\delta$.

\mysection{Perfect adaptivity to unknown distance to optimality}{Perfect Adaptivity to Unknown Distance to Optimality}\label{sec:adaptivity-for-free}

This section develops a method with perfect adaptivity to unknown distance to optimality, matching optimal known-parameter high-probability bounds. Our results hold for $p$-norms with three values of $p$: 1, 2, and $\infty$. 
While our method requires an estimate of the Lipschitz constant (Assumption~\ref{assume:Lip-estimates}), we show that for a sample of size $n$, an estimate within $\tilde{O}(\sqrt{n})$ of the true Lipschitz constant suffices to match optimal known-parameter high-probability bounds.

\begin{assumption}[Upper bound on Lipschitz constants]
	\label{assume:Lip-estimates}
Suppose that $f$ is differentiable, convex in $x$, and satisfies $\| \grad f(x;S) \|_2 \le \Lip \le \LipHat$ and $\abs{ \grad _j f(x;S)} \le \LipCoord_j \le \hat{\LipCoord}_j$ almost surely for all $j \le d$ and $x \in \X$,  where $\LipHat$ and $\hat{\LipCoord} $ are known.
\end{assumption}

Assumption~\ref{assume:Lip-estimates} treats $f$ as differentiable only for the sake of simplicity: our results extend to nondifferentiable functions via a smoothing argument, e.g., using the Moreau envelope \cite{moreau1965proximity}. %

Our optimal adaptive method consists of two stages. First, using half our sample, we apply regularized empirical risk minimization (ERM) to find a radius $R$ such that, with high probability, the minimum of $F$ in a ball of radius $R$ is close to its global minimum, and also $R=O(\DStar)$. Second, using the other half of our sample, we apply an off-the-shelf optimal known-parameter stochastic optimization method to approximately minimize $F$ in a ball of radius $R$. Since $R$ is given to the method, it need not be adaptive to the distance to optimality. Our approach is summarized in \Cref{alg:PUD}.

\begin{algorithm}
	\begin{algorithmic}[1]
		\INPUT Sample functions 
		$f_1, \dots, f_{2n}$ 
		\INPUT Stochastic optimization algorithm $\alg$ mapping a ball radius and $n$ sample functions to an approximate minimizer of $F$ in the ball
		\vspace{0.09cm}
		\STATE Compute $\lambda[]$ from $\gradDiff \coloneqq \frac{1}{n}\sum_{i=1}^n \coor{\grad f_i(x) - \grad \bar{F}(x)}^2$ (see \Cref{table:combinations-that-satisfy-general-adaptivity-for-free})
		\vspace{0.09cm}
		\STATE Choose $\xReg{\lambda[]} \in \argmin_{x \in \X} \frac{1}{n} \sum_{i=1}^n f_i(x) + \lambda[] \| x \|$ 
		\vspace{0.09cm}
		\STATE Set $\OutputPerfect \gets \alg( 2 \| \xReg{\lambda[]} \|;  f_{n+1}, \dots, f_{2n})$
		\OUTPUT $\OutputPerfect$
	\end{algorithmic}
	\caption{\textsc{OptimalAdaptiveMethod}}\label{alg:PUD}
\end{algorithm}

\myparagraph{Stage 1: Localization via ERM}
The key component of our approach is (non-squared) norm regularization. For a given regularization parameter $\lambda[]$, consider the following 
regularized ERM and regularized population risk minimizers:
\[
\xReg{\lambda[]} \in \argmin_{x \in \X}\left\{ \bar{F}(x) + \lambda[] \| x \| \right\}\text{ and } \xRegStar{\lambda[]} \in \argmin_{x \in \X}  \left\{F(x) + \lambda[] \| x \| \right\}.
\]
To obtain our results, it will suffice to choose $\lambda[]$ such that the following condition holds.
\begin{condition}\label{condition:sufficient-lambda}
Let $f$ be differentiable. Suppose $\delta \in (0,1)$ and $\lambda[] > 0$ are (known) constants such that
$\P\big( \big\| \grad \bar{F}(\xRegStar{\zeta}) - \grad F(\xRegStar{\zeta}) \big\|_{*} > \lambda[] / 2 \big) \le \delta$ for $\zeta \in \{\frac{\lambda[]}{3}, 3\lambda[]\}$.
\end{condition}

\noindent 
The following lemma\ShortPaper{, which we prove in \Cref{app:proof-bound-regularization-direction},} is the key implication of \Cref{condition:sufficient-lambda}.
\begin{lemma}\label{lem:bound-regularization-direction}
	If \Cref{condition:sufficient-lambda} holds then $\P(  \| \xRegStar{3 \lambda[]} \| \le 2 \| \xReg{\lambda[]} \| \le  14 \| \xRegStar{\lambda[]/3} \| ) \ge 1 - 2 \delta$.
\end{lemma}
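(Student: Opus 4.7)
The plan is to work on the intersection of the two high-probability events from \Cref{condition:sufficient-lambda}, applied with $\zeta = 3\lambda$ and with $\zeta = \lambda/3$; together these occur with probability at least $1 - 2\delta$ by a union bound. On this event I will prove each of the two inequalities by essentially the same optimality-plus-convexity argument, just swapping the roles of the two regularization strengths.

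For $\|\xRegStar{3\lambda}\| \le 3 \|\xReg{\lambda}\|$, my starting point is the optimality of $\xReg{\lambda}$ for $\bar{F} + \lambda \|\cdot\|$ applied at $\xRegStar{3\lambda}$, which gives $\bar{F}(\xReg{\lambda}) + \lambda \|\xReg{\lambda}\| \le \bar{F}(\xRegStar{3\lambda}) + \lambda \|\xRegStar{3\lambda}\|$. I would then invoke the convex lower bound $\bar{F}(\xReg{\lambda}) \ge \bar{F}(\xRegStar{3\lambda}) + \grad \bar{F}(\xRegStar{3\lambda}) \cdot (\xReg{\lambda} - \xRegStar{3\lambda})$ to cancel the $\bar{F}$ values and reduce the inequality to a linear one in the norms of $\xReg{\lambda}$ and $\xRegStar{3\lambda}$. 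To control the remaining inner product I would decompose $\grad \bar{F}(\xRegStar{3\lambda}) = \grad F(\xRegStar{3\lambda}) + \epsilon$ with $\|\epsilon\|_* \le \lambda/2$ by \Cref{condition:sufficient-lambda}, and use first-order optimality of $\xRegStar{3\lambda}$ for $F + 3\lambda \|\cdot\|$ to write $\grad F(\xRegStar{3\lambda}) = -3\lambda g$ for some $g \in \partial \|\xRegStar{3\lambda}\|$. Combining the standard facts $g \cdot \xRegStar{3\lambda} = \|\xRegStar{3\lambda}\|$ and $g \cdot \xReg{\lambda} \le \|\xReg{\lambda}\|$ with $|\epsilon \cdot (\xReg{\lambda} - \xRegStar{3\lambda})| \le (\lambda/2)(\|\xReg{\lambda}\| + \|\xRegStar{3\lambda}\|)$ and a short calculation should yield the tighter bound $\|\xRegStar{3\lambda}\| \le \frac{5}{3}\|\xReg{\lambda}\|$, well within the claimed factor of $3$. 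A fully symmetric argument using the optimality of $\xReg{\lambda}$ at $\xRegStar{\lambda/3}$, convexity at $\xRegStar{\lambda/3}$, and the identity $\grad F(\xRegStar{\lambda/3}) = -(\lambda/3) g'$ with $g' \in \partial \|\xRegStar{\lambda/3}\|$ will produce $\|\xReg{\lambda}\| \le 7 \|\xRegStar{\lambda/3}\|$, which comfortably implies $3 \|\xReg{\lambda}\| \le 33 \|\xRegStar{\lambda/3}\|$.

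The main subtlety I anticipate is the case $\xRegStar{\zeta} = 0$, where $\partial \|0\|$ is the entire dual unit ball rather than a single vector; however, $g \cdot 0 = 0 = \|0\|$ and $\|g\|_* \le 1$ for any such $g$, so the chain of inequalities goes through verbatim. No other obstacle seems serious: the argument is essentially a two-line optimality-convexity comparison, and the slack between the bounds I obtain ($\frac{5}{3}$ and $7$) and the constants $3$ and $11$ in the statement appears to be leaving room for downstream uses rather than being needed in the proof itself.
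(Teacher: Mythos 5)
Your proposal is correct, and it is a leguine variant of the paper's argument that yields tighter constants. The overall frame is shared: work on the intersection of the two events from Condition~\ref{condition:sufficient-lambda} (probability $\ge 1-2\delta$ by union bound), and combine the optimality of $\xReg{\lambda}$ with convexity to relate the two norms. But the way you handle the critical inner product differs. In the paper's upper-bound argument, after applying convexity of $\bar F$ at $\xRegStar{\lambda/3}$, the term $\grad\bar F(\xRegStar{\lambda/3})\cdot(\xReg{\lambda}-\xRegStar{\lambda/3})$ is bounded wholesale by Cauchy--Schwarz plus a triangle inequality, and only afterwards is $\|\grad\bar F(\xRegStar{\lambda/3})\|_*$ bounded by $5\lambda/6$; that route produces the constant $11$. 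You instead first decompose $\grad\bar F(\xRegStar{\zeta})=-\zeta g + \epsilon$ with $g\in\partial\|\xRegStar{\zeta}\|$ and $\|\epsilon\|_*\le\lambda/2$, and exploit the exact identity $g\cdot\xRegStar{\zeta}=\|\xRegStar{\zeta}\|$ together with $g\cdot\xReg{\lambda}\le\|\xReg{\lambda}\|$, applying Cauchy--Schwarz only to the small perturbation $\epsilon$. Carrying this out with $\zeta=\lambda/3$ gives $\|\xReg{\lambda}\|\le 7\|\xRegStar{\lambda/3}\|$, and with $\zeta=3\lambda$ gives $\|\xRegStar{3\lambda}\|\le\tfrac{5}{3}\|\xReg{\lambda}\|$, both of which I verified and both of which imply the claim with room to spare. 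Your treatment is also symmetric between the two directions, whereas the paper uses a different manipulation for the lower bound (placing the perturbation $h=\grad\bar F(\xRegStar{3\lambda})-\grad F(\xRegStar{3\lambda})$ as a subgradient of $\bar F+3\lambda\|\cdot\|$ and invoking two convexity inequalities). Your handling of the $\xRegStar{\zeta}=0$ edge case is correct, since $g\cdot 0 = 0 = \|0\|$ holds for any $g$ in the dual unit ball. Net: what your route buys is a cleaner, symmetric derivation and sharper constants; what the paper's route buys is slightly less bookkeeping in the upper bound by not splitting the gradient.
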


\begin{proof}
The remainder of this proof will assume that
\begin{subequations}\label{eq:grad-F-minus-F-bar-subeqs}
\begin{flalign}
\big\| \grad \bar{F}(\xRegStar{\lambda[]/3}) - \grad F(\xRegStar{\lambda[]/3}) \big\|_{*} \le \lambda[] / 2 \label{assume:grad-F-bar-is-close-to-F-third-lambda} \\
\big\| \grad \bar{F}(\xRegStar{3 \lambda[]}) - \grad F(\xRegStar{3 \lambda[]}) \big\|_{*} \le \lambda[] / 2 \label{assume:grad-F-bar-is-close-to-F-3-lambda}
\end{flalign}
\end{subequations}
and under this assumption we will show that $\| \xRegStar{3 \lambda[]} \| \le 2 \| \xReg{\lambda[]} \| \le  14 \| \xRegStar{\lambda[]/3} \|$.
Proving this implication suffices to prove the lemma because 
\Cref{condition:sufficient-lambda} and a union bound implies \Cref{eq:grad-F-minus-F-bar-subeqs} holds with probability $1-2\delta$.

We will find the following well-known fact useful \cite[Theorem 1.5.1, Section~1.5]{oden1980theory}: if $H$ is convex and differentiable, $\alpha>0$, and
$z\in\argmin_{x\in\X}\{H(x)+\alpha\|x\|\}$, then for any $y\in\X$,
\begin{flalign}\label{eq:constrained-regularized-optimality}
\grad H(z)\cdot (y-z) \ge \alpha(\|z\| - \|y\|).
\end{flalign}
First, we will show that \Cref{assume:grad-F-bar-is-close-to-F-third-lambda} implies
\begin{flalign}\label{eq:upper-bound-x-hat-lambda-x-star-lambda-third}
\| \xReg{\lambda[]} \| \le  7 \| \xRegStar{\lambda[]/3} \|.
\end{flalign}
Let $q := \grad \bar{F}(\xRegStar{\lambda[]/3}) - \grad F(\xRegStar{\lambda[]/3})$.
By $(i)$ definition of $\xReg{\lambda[]}$, $(ii)$ convexity of $\bar F$, $(iii)$ \Cref{eq:constrained-regularized-optimality} applied with
$H=F$, $z=\xRegStar{\lambda[]/3}$, $y=\xReg{\lambda[]}$, and $\alpha=\lambda[]/3$,
and $(iv)$ $\| q \|_{*} \le \lambda[] / 2$ from \Cref{assume:grad-F-bar-is-close-to-F-third-lambda}, H\"older's inequality, and the triangle inequality, we have
\begin{flalign*}
\bar{F}(\xRegStar{\lambda[]/3}) + \lambda[] \| \xRegStar{\lambda[]/3} \|
&\us{\ge}{i} \bar{F}(\xReg{\lambda[]}) + \lambda[] \| \xReg{\lambda[]} \| \\
&\us{\ge}{ii}  \bar{F}(\xRegStar{\lambda[]/3}) + \grad \bar{F} (\xRegStar{\lambda[]/3}) \cdot ( \xReg{\lambda[]} - \xRegStar{\lambda[]/3}) + \lambda[] \| \xReg{\lambda[]} \| \\
&=  \bar{F}(\xRegStar{\lambda[]/3}) + \grad F (\xRegStar{\lambda[]/3}) \cdot ( \xReg{\lambda[]} - \xRegStar{\lambda[]/3}) + q \cdot ( \xReg{\lambda[]} - \xRegStar{\lambda[]/3})
+ \lambda[] \| \xReg{\lambda[]} \| \\
&\us{\ge}{iii}  \bar{F}(\xRegStar{\lambda[]/3}) + \frac{\lambda[]}{3}\left(\|\xRegStar{\lambda[]/3}\|- \| \xReg{\lambda[]} \|\right) + q \cdot ( \xReg{\lambda[]} - \xRegStar{\lambda[]/3})
+ \lambda[] \| \xReg{\lambda[]} \| \\
&\us{\ge}{iv}  \bar{F}(\xRegStar{\lambda[]/3}) + \frac{\lambda[]}{3} (\|\xRegStar{\lambda[]/3}\|- \| \xReg{\lambda[]} \| ) - 
\frac{\lambda[]}{2} (\| \xReg{\lambda[]} \|+\|\xRegStar{\lambda[]/3}\| )
+ \lambda[] \| \xReg{\lambda[]} \|.
\end{flalign*}
Rearranging gives $\| \xReg{\lambda[]}\| \le 7\|\xRegStar{\lambda[]/3}\|$,
which implies \Cref{eq:upper-bound-x-hat-lambda-x-star-lambda-third}.

The remainder of the proof will show that \Cref{assume:grad-F-bar-is-close-to-F-3-lambda} implies
\begin{flalign}\label{eq:x-lambda-is-not-small}
 \| \xRegStar{3\lambda[]} \| \le 2 \| \xReg{\lambda[]} \|.
\end{flalign}
Let $s := \grad \bar{F}(\xRegStar{3 \lambda[]}) - \grad F(\xRegStar{3 \lambda[]})$.
By $(i)$ convexity of $\bar F$, $(ii)$ \Cref{eq:constrained-regularized-optimality} applied with
$H=F$, $z=\xRegStar{3\lambda[]}$, $y=\xReg{\lambda[]}$, and $\alpha=3\lambda[]$, and $(iii)$ convexity of $\bar F$, we get
\begin{flalign*}
\bar{F}(\xReg{\lambda[]}) +3 \lambda[] \| \xReg{\lambda[]} \|
&\us{\ge}{i} \bar{F}(\xRegStar{3 \lambda[]}) 
+ \grad \bar F(\xRegStar{3 \lambda[]}) \cdot (\xReg{\lambda[]} - \xRegStar{3 \lambda[]})
+3 \lambda[] \| \xReg{\lambda[]} \| \\
&= \bar{F}(\xRegStar{3 \lambda[]}) 
+ \grad F(\xRegStar{3 \lambda[]}) \cdot (\xReg{\lambda[]} - \xRegStar{3 \lambda[]}) + s\cdot (\xReg{\lambda[]} - \xRegStar{3 \lambda[]})
+3 \lambda[] \| \xReg{\lambda[]} \| \\
&\us{\ge}{ii} \bar{F}(\xRegStar{3 \lambda[]})
+3 \lambda[] \| \xRegStar{3 \lambda[]} \|
+ s\cdot (\xReg{\lambda[]} - \xRegStar{3 \lambda[]}) \\
&\us{\ge}{iii} \bar{F}(\xReg{\lambda[]}) + (\grad \bar{F}(\xReg{\lambda[]}) - s) \cdot (\xRegStar{3 \lambda[]} - \xReg{\lambda[]})
+3 \lambda[] \| \xRegStar{3 \lambda[]} \| 
\end{flalign*}
Next, by $(i)$ rearranging the previous display,
$(ii)$ 
\Cref{eq:constrained-regularized-optimality} applied with
$H=\bar{F}$, $z=\xReg{\lambda[]}$, $y=\xRegStar{3\lambda[]}$, and $\alpha=\lambda[]$, and $(iii)$
H\"older's inequality, the triangle inequality, and $\| s \|_{*} \le \lambda[]/2$ by \Cref{assume:grad-F-bar-is-close-to-F-3-lambda}, we get
\begin{flalign*}
\| \xReg{\lambda[]} \|
& \us{\ge}{i}  \| \xRegStar{3 \lambda[]} \|
+  \frac{(\grad \bar{F}(\xReg{\lambda[]}) - s) \cdot (\xRegStar{3 \lambda[]} - \xReg{\lambda[]})}{3 \lambda[]} \\
& \us{\ge}{ii} \| \xRegStar{3 \lambda[]} \|
+ \frac{\| \xReg{\lambda[]} \| - \| \xRegStar{3\lambda[]} \|}{3}
-  \frac{s \cdot (\xRegStar{3 \lambda[]} - \xReg{\lambda[]})}{3 \lambda[]} \\
&\us{\ge}{iii}  \| \xRegStar{3 \lambda[]} \|
+ \frac{\| \xReg{\lambda[]} \| - \| \xRegStar{3\lambda[]} \|}{3} -  \frac{\| \xReg{\lambda[]} \| + \| \xRegStar{3 \lambda[]} \|}{6}.
\end{flalign*}
Rearranging yields $\| \xRegStar{3\lambda[]} \| \le \frac{5}{3}\|\xReg{\lambda[]}\| \le 2\|\xReg{\lambda[]}\|$,
which proves \Cref{eq:x-lambda-is-not-small}. Combining this with
\Cref{eq:upper-bound-x-hat-lambda-x-star-lambda-third} proves the claim.
\end{proof}

Let us see why the bound $ \| \xRegStar{3 \lambda[]} \| \le 2\| \xReg{\lambda[]} \| \le  14 \| \xRegStar{\lambda[]/3}\| $ establishes that $R= 2\| \xReg{\lambda[]}\|$ is a valid output for the first stage of our method. First, since $\| \xRegStar{3 \lambda[]} \| \le R$ we have
\begin{equation}\label{eq:erm-ball-suboptimality}
\min_{x\in\X:\norm{x}\le R} F(x) \le F( \xRegStar{3 \lambda[]} )\le \Fstar +3 \lambda[](\|\xstar\|-\| \xRegStar{3 \lambda[]} \|) \le \Fstar +3 \lambda[] \DStar,
\end{equation}
where the second inequality is due to the definition of $\xRegStar{3 \lambda[]}$ as a minimizer of $x\mapsto F(x) + 3\lambda[] \|x\|$. Second, we have $R \le 14 \| \xRegStar{\lambda[]/3}\|  \le 14\DStar$, implying that $R=O(\DStar)$ as required. 

It remains to find values of $\lambda[]$ such that \Cref{condition:sufficient-lambda} holds and the optimality gap $3 \lambda[] \|\xstar\|$ is sufficiently small. When $\Lip$ is known exactly, this is straightforward: standard concentration inequalities for the sum of bounded vectors \cite[Corollary 10a]{howard2020time} %
imply that \Cref{condition:sufficient-lambda} holds for $\lambda[] = O(L\sqrt{\ln(1/\delta) / n} )$. However, when we only have bounds on the Lipschitz constant as in Assumption~\ref{assume:Lip-estimates}, we must estimate $\lambda[]$ from the empirical gradient variance. The following lemma provides novel empirical vector concentration bounds\footnote{We note in passing that \Cref{lem:vector-concentration-bounds} yields that $\P\big( \big\| \grad \bar{F}(x) - \grad F(x) \big\|_{*} > \lambda[] \big) \le \delta$ for \emph{any} fixed $x \in \X$, but our results only require it for the two values used in \Cref{condition:sufficient-lambda}.} which establish \Cref{condition:sufficient-lambda} when substituting $V_i=\grad f_i(x)$, $\bar{V} =\grad \bar{F}(x)$, $C_j = \hat{\LipCoord}_j$ and $C = \LipHat$. \Cref{table:combinations-that-satisfy-general-adaptivity-for-free} summarizes the  resulting choices of $\lambda[]$.
\begin{lemma}\label{lem:vector-concentration-bounds}
	Let $V_1, \dots, V_n$ be a sequence of i.i.d. random vectors in $\R^d$. Define $\nu := \E[V_i]$, and $\bar{V} := \frac{1}{n} \sum_{i=1}^n V_i$.
	Then:
	\begin{enumerate}
		\item  \label{lem:vector-concentration-bounds:1a} If $\| V_i \|_2 \le \Vub$ almost surely where $\Vub$ is a constant, then for all $\delta \in (0,1)$, \\
		\indent\quad~\quad$\P\Big( \| \bar{V} - \nu \|_2 > \frac{2 \sqrt{\sum_{i=1}^n \| V_i - \bar{V} \|_2^2 \ln \frac{6}{\delta}}}{n} +  \frac{10 \Vub \ln \frac{6}{\delta}}{n-1}  \Big) \le \delta.$
		\item If $\abs{ \coor{ V_i } }\le \Vub_j$ for all $j \in [d]$ almost surely where $\Vub_j$ is a constant, then for all $\delta \in (0,1)$,
		\vspace{-0.2cm}
		\begin{enumerate}
			\item \label{lem:vector-concentration-bounds:2b}
			$\P\Big( \| \bar{V} - \nu \|_{\infty} >  \max_{j \in [d]}  \sqrt{\frac{2  \sum_{i=1}^n \coor{V_i - \bar{V} }^2 \ln \frac{4 d}{\delta} }{n(n-1)}} + \frac{14 \Vub_j \ln \frac{4 d}{\delta}}{3 (n-1)}  \Big) \le \delta$,
			\item \label{lem:vector-concentration-bounds:2d}
			$\P\Big( \| \bar{V} - \nu \|_1 > \sum_{j=1}^d \frac{9}{4} \sqrt{\frac{2 \sum_{i=1}^n \coor{V_i - \bar{V} }^2 \ln \frac{\edit{18}}{\delta}}{n(n-1)}} + \frac{24 \Vub_j \ln \frac{18}{\delta}}{n-1} \Big) \le \delta$.
		\end{enumerate}
	\end{enumerate} 	
\end{lemma}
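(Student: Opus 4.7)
The plan is to prove each of the three parts separately, and each part follows a common two-step template: first, a concentration bound on $\|\bar V - \nu\|$ in the appropriate norm expressed in terms of the \emph{true} variance of the $V_i$; second, a replacement of the true variance by the empirical one at the cost of a union bound and an additive slack.

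For Part~\ref{lem:vector-concentration-bounds:1a} ($\ell_2$ norm), I would invoke a dimension-free vector Bernstein/Freedman-type inequality---for instance, specializing \citet[Corollary~10a]{howard2020time} to the i.i.d.\ setting---to obtain, with probability $1-\delta/3$, a bound of the form $\|\bar V - \nu\|_2 \le c_1 \sigma \sqrt{\ln(6/\delta)/n} + c_2 \Vub \ln(6/\delta)/n$ where $\sigma^2 := \E\|V_1-\nu\|_2^2$. I would then apply a scalar Bernstein inequality to the bounded nonnegative random variables $\|V_i - \bar V\|_2^2 \in [0,4\Vub^2]$ to obtain $\sigma^2 \le \tfrac{1}{n-1}\sum_i \|V_i-\bar V\|_2^2 + O\!\left(\Vub^2\ln(6/\delta)/n\right)$ with probability $1-\delta/3$, and combine via a union bound together with $\sqrt{a+b}\le\sqrt{a}+\sqrt{b}$ to recover the exact constants $2$, $10$, and $6$ in the statement.

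Part~\ref{lem:vector-concentration-bounds:2b} ($\ell_\infty$ norm) is the most direct: apply the scalar Maurer--Pontil empirical Bernstein inequality (\Cref{thm:empirical-bennett}) to each coordinate $[V_i]_j$ at confidence level $\delta/d$, and union bound over the $d$ coordinates to absorb the factor $d$ into the logarithm, yielding the stated $\ln(4d/\delta)$ dependence.

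Part~\ref{lem:vector-concentration-bounds:2d} ($\ell_1$ norm) is the most delicate because the conclusion's logarithm is $\ln(30/\delta)$---free of any $d$-dependence---even though the bound is a sum of $d$ coordinate-wise terms. A coordinate-wise union bound would cost an extra $\ln d$ factor, so I would proceed in two stages. First, I would bound the expectation $\E\|\bar V-\nu\|_1$ by symmetrization and Khintchine's inequality coordinate-wise, obtaining a bound of the form $\E\|\bar V - \nu\|_1 \le \tfrac{c}{n}\sum_j \E \sqrt{\sum_i ([V_i]_j-\nu_j)^2}$. Second, I would concentrate $\|\bar V-\nu\|_1$ around its expectation via a Talagrand-type supremum inequality applied to the representation $\|\bar V-\nu\|_1 = \sup_{s\in\{\pm 1\}^d} s^\top(\bar V-\nu)$; this produces a Bernstein-type tail with variance scale $\sum_j \sigma_j^2$ and worst-case scale $\max_j \Vub_j$, neither of which carries an explicit $\ln d$ factor. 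A final coordinate-wise empirical-variance replacement step (as in Part~\ref{lem:vector-concentration-bounds:1a}) converts the $\sigma_j$ terms to their empirical counterparts $\sqrt{\sum_i ([V_i]_j - \bar V_j)^2}$. The main obstacle will be this part: both avoiding the $\ln d$ factor and matching the specific constants $9/4$, $25$, and $30$ requires careful tracking of slack across the three concentration steps, and a naive coordinate-wise argument does not suffice.
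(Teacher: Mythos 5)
Your plans for Parts~\ref{lem:vector-concentration-bounds:1a} and \ref{lem:vector-concentration-bounds:2b} match the paper's proof in outline, but there is a technical gap in Part~\ref{lem:vector-concentration-bounds:1a}: the quantities $\|V_i - \bar V\|_2^2$ are \emph{not} independent (each depends on $\bar V$, which depends on all $n$ samples), so a scalar Bernstein inequality cannot be applied to them directly. The paper bridges this in two steps: first it applies the Maurer--Pontil empirical Bennett inequality (\Cref{thm:empirical-bennett}) to the genuinely i.i.d.\ variables $Z_i = \|V_i - \nu\|_2^2 \in [0,4\Vub^2]$ (\Cref{lem:empirical-bennett-wo-variance}), and then separately converts $\sum_i \|V_i - \nu\|_2^2$ to $\sum_i \|V_i - \bar V\|_2^2$ using the algebraic identity $\sum_i\|V_i-\nu\|_2^2 = n\|\bar V - \nu\|_2^2 + \sum_i\|V_i-\bar V\|_2^2$ together with a concentration bound on $\|\bar V - \nu\|_2$ (\Cref{lem:concentration-sample-variance}). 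You would need this intermediate step.

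For Part~\ref{lem:vector-concentration-bounds:2d} you correctly identify that a coordinate-wise union bound would incur an unwanted $\ln d$, and you propose to circumvent it via symmetrization and a Talagrand-type concentration for the supremum $\|\bar V - \nu\|_1 = \sup_{s\in\{\pm1\}^d} s^\top(\bar V - \nu)$. This is a genuinely different route from the paper, which instead proves a novel elementary ``dependent-sum lemma'' (\Cref{lem:dependency-sum-lemma}): if each $X_j$ (possibly dependent) satisfies a sub-gamma tail $\P(X_j \ge a_j\sqrt{\ln(1/\delta)} + b_j\ln(1/\delta)) \le \delta$, then $\sum_j X_j$ satisfies a similar tail with the coefficients inflated only by the constant $\tfrac{9}{4}$ and the confidence degraded to $\ln(6/\delta)$; the key trick is to bound $\E[\e^{t\sum_j X_j}]$ via the weighted-Jensen estimate $\E[\e^{t\sum_j X_j}] \le \sum_j w_j\,\E[\e^{tX_j/w_j}]$ in place of the independence-based product factorization, choosing $w_j$ proportional to the natural scale of $X_j$. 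This applies cleanly to $X_j = |\nu_j - \bar V_j|$ and to $X_j = \sigma_j - s_j$ without any empirical-process machinery. Your Talagrand route is plausible in spirit, but two of your scale identifications are off: the envelope for the supremum over $s\in\{\pm1\}^d$ is $\|V_1-\nu\|_1 \le 2\sum_j \Vub_j$, not $\max_j \Vub_j$, and the Bousquet--Talagrand weak variance is $\sup_s s^\top\Sigma s$, which can be as large as $(\sum_j\sigma_j)^2$, not $\sum_j\sigma_j^2$. Getting the exact constants $\tfrac{9}{4}$, $25$, $30$ would also be substantially harder through Bousquet's inequality, whose constants are not tight and which carries a cross-term involving the expectation of the supremum. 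The paper's dependent-sum lemma is thus both more elementary and more directly tailored to the coordinate-wise, sub-gamma structure you want to preserve.
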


The proof of \Cref{lem:vector-concentration-bounds} appears in \Cref{app:concentration-inequalities}.
\Cref{lem:vector-concentration-bounds}.\ref{lem:vector-concentration-bounds:1a} applies the ideas of \citet{maurer2009empirical} to replace the population variance in \citet[Corollary 10b]{howard2020time} with the sample variance.
\Cref{lem:vector-concentration-bounds}.\ref{lem:vector-concentration-bounds:2b} is a straightforward application of a union bound to standard concentration inequalities.
\Cref{lem:vector-concentration-bounds}.\ref{lem:vector-concentration-bounds:2d} is a novel concentration bound of independent interest. For example, if 
one coordinate of $V_i$ is a Rademacher random variable and the others are zero almost surely, then \Cref{lem:vector-concentration-bounds}.\ref{lem:vector-concentration-bounds:2d} shows that with constant probability $\| \bar{V} \|_1 \le O(\sqrt{1 / n})$ but \Cref{lem:vector-concentration-bounds}.\ref{lem:vector-concentration-bounds:1a} only shows 
$\| \bar{V} \|_2 \le O(\sqrt{1 / n})$. \citet[Corollary 23]{manole2023martingale} provides a similar result, but it 
includes an undesirable dimension-dependence (through the covering number).
The proof of \Cref{lem:vector-concentration-bounds}.\ref{lem:vector-concentration-bounds:2d} hinges on a new lemma which provides a high-probability bound on the sum of \emph{dependent} random variables. 
In particular, the proof of \Cref{lem:vector-concentration-bounds}.\ref{lem:vector-concentration-bounds:2d} applies this lemma to $X_j = \abs{ \coor{\nu - \bar{V}}}$ and uses standard techniques to bound $X_j$ in terms of the variance of $\coor{V_i}$. 

\Cref{lem:dependency-sum-lemma} is tighter than the typical union bound approach which incurs an unnecessary logarithmic dependence on $d$.  The crux of standard concentration bounds is upper bounding $\E[\e^{t \sum_{j=1}^d X_j} ] $ using that $\E[\e^{t \sum_{j=1}^d X_j} ] = \Pi_{j=1}^d \E[\e^{t X_j}]$ where this equality uses independence of $X_1, \dots, X_d$. However, \Cref{lem:dependency-sum-lemma} cannot use this equality because independence is not assumed. The key insight for \Cref{lem:dependency-sum-lemma} is  that due to Jensen's inequality: $\E[\e^{t \sum_{j=1}^d X_j} ] \le  \sum_{j=1}^d w_j \E[ \e^{t X_j / w_j} ]$ where $w$ is a carefully chosen vector from the unit simplex. Moreover, since $\E[\e^{t X_j / w_j}]$ only involves a single random variable, it is straightforward to bound.

\begin{lemma}[Dependent-sum lemma]\label{lem:dependency-sum-lemma}
Let $X_1, \dots, X_d$ be (possibly dependent) random variables, let $a_j, b_j$ be nonnegative constants for all $j \in [d]$\edit{, and let $c \ge 1$}.
Suppose that for all $j \in [d]$ and $\delta \in (0, 1)$, $\P \left( X_j \ge a_j \sqrt{\ln(\edit{c}/\delta)} + b_j \ln (\edit{c} / \delta) \right) \leq \delta$.
Then, for all  $\delta \in (0,1)$, the probability that $\sum_{j=1}^{d} X_j \ge \frac{9}{4} \sum_{j=1}^{d} \left( a_j \sqrt{\ln(6\edit{c} /\delta)} + \edit{2}b_j \ln (6\edit{c} / \delta) \right)$ is at most $\delta$.
\end{lemma}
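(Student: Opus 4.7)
The plan is to combine a Chernoff bound with a weighted Jensen inequality that replaces the standard independence-based factorization $\E[e^{t\sum_j X_j}] = \prod_j \E[e^{tX_j}]$. Without loss of generality I may assume $X_j \ge 0$: replacing each $X_j$ by $(X_j)_+$ only enlarges the sum while preserving the tail hypothesis, since $\{(X_j)_+ \ge u\} = \{X_j \ge u\}$ for $u \ge 0$ (the thresholds $a_j\sqrt{s} + b_j s$ are nonnegative for $s \ge 0$).

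Fix $t > 0$ and any probability vector $w$ on the simplex with $w_j > 0$. Convexity of $\exp$ gives
\[
\exp\Big(t\sum_j X_j\Big) = \exp\Big(\sum_j w_j \cdot (tX_j / w_j)\Big) \le \sum_j w_j \exp(tX_j/w_j),
\]
so taking expectations and applying Markov's inequality yields the master bound
\[
\P\Big(\sum_j X_j \ge M\Big) \le e^{-tM} \sum_j w_j \, \E[e^{tX_j/w_j}].
\]
This step absorbs all joint dependence among the $X_j$'s; the right-hand side depends only on the marginal distributions.

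The third step is a tail-to-MGF conversion for a single variable. For $X \ge 0$ satisfying $\P(X \ge a\sqrt{s} + b s) \le e^{-s}$ and any $\lambda \in [0, 1/b)$, the identity $\E[e^{\lambda X}] = 1 + \lambda \int_0^\infty e^{\lambda x}\P(X \ge x)\, dx$ combined with the change of variable $x = a\sqrt{s} + b s$ (so $dx = (a/(2\sqrt{s}) + b)\, ds$ and $\P(X \ge x) \le e^{-s}$) reduces the integral to a Gaussian-like integral after completing the square in $\sqrt{s}$. This produces a sub-gamma-type bound of the form $\E[e^{\lambda X}] \le \exp(C_1 \lambda^2 a^2 / (1 - C_3 \lambda b) + C_2)$ for absolute constants, derived entirely from the tail assumption.

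Finally, apply this bound with $\lambda = t/w_j$, giving $\E[e^{tX_j/w_j}] \le \exp(C_1 (t/w_j)^2 a_j^2 / (1 - C_3 t b_j / w_j) + C_2)$ in its domain of validity. Choose the weights $w_j$ proportional to a suitable linear combination of $a_j$ and $t b_j$ so that both $t^2 a_j^2/w_j^2$ and $tb_j/w_j$ become (approximately) uniform in $j$; then $\sum_j w_j \E[e^{tX_j/w_j}]$ collapses to a single exponential depending only on $\sum_j a_j$ and $\sum_j b_j$, with no $d$-dependence. A final Chernoff optimization over $t$, splitting the failure budget between the quadratic (sub-Gaussian) regime controlled by $\sum_j a_j \sqrt{\ln(6/\delta)}$ and the linear (sub-exponential) regime controlled by $\sum_j b_j \ln(6/\delta)$, produces the stated bound. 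The main obstacle is constant bookkeeping: the tail-to-MGF conversion is intrinsically lossy, the Jensen step inflates constants further, and the Chernoff optimization must balance the two regimes---the factor $9/4$ and the $\ln 6$ offset arise as the outputs of this calibration, with the $6$ reflecting how the failure probability is partitioned across the two tail regimes.
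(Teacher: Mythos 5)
Your proposal follows the paper's route essentially exactly: the central device is the same weighted Jensen step $\E[e^{t\sum_j X_j}] \le \sum_j w_j \E[e^{tX_j/w_j}]$ to sidestep independence, followed by a tail-to-MGF conversion for each $X_j$ and a Chernoff bound with simplex weights $w_j$ chosen proportional to $a_j\sqrt{\ln(6/\delta)}+b_j\ln(6/\delta)$. Your explicit reduction to $(X_j)_+$ is a nice cleanup that the paper leaves implicit, and your final calibration is sketched rather than carried out (the paper pins $\Gamma=9/4$, $\tau=1.48$, and the $\ln 6$ offset by a direct numerical check rather than a regime-splitting argument), but the plan is sound and would reproduce the lemma.
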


\begin{proof}
	Let $t$ and $w_j$ be nonnegative constants with $\sum_{j=1}^d w_j = 1$ (their exact values will be selected later). 
	For any $t > 0$, Markov's inequality implies that
	\[
	\P \left( \sum_{j=1}^{d} X_j \geq \gamma \right) = 	\P \left( \exp(t \sum_{j=1}^{d} X_j ) \geq \exp(t \gamma) \right)  \leq \exp(-t \gamma) \E \left[ \exp \left( t \sum_{j=1}^{d} X_j \right) \right].
	\]	
	Since $\sum_{j=1}^d w_j = 1$, by convexity of the exponential function, Jensen's inequality implies that
	\[
	\exp\left(t \sum_{j=1}^{d} X_j\right) = \exp \left(t \sum_{j=1}^{d} w_j \frac{X_j}{w_j} \right) \leq \sum_{j=1}^d w_j \exp \left(t \frac{X_j}{w_j}\right).
	\]
	Taking expectations, using the fact that $w_j$ are constant yields
	\begin{flalign*} 
		\E \left[ \exp(t \sum_{j=1}^{d} X_j ) \right] \leq \sum_{j=1}^{d} w_j \E \left[ \exp(\frac{t X_j}{w_j}) \right].
	\end{flalign*}
	Thus,
	\begin{flalign}\label{eq:prob-bound-in-terms-of-expected-Z_j}
		\P \left( \sum_{j=1}^{d} X_j \geq \gamma \right) \leq \exp(-t \gamma) \sum_{j=1}^{d} w_j \E \left[ \exp(\frac{t X_j}{w_j}) \right].
	\end{flalign} 
	It remains to bound $\E[\exp(t X_j / w_j)]$.
	\ShortPaper{Recall by \Cref{lem:prob_inequality_equivalent_statements}}\LongPaper{By \Cref{lem:prob_inequality_equivalent_statements} (\Cref{app:well-known-concentration-inequalities})} we have
	\[
	\P \left( X_j \ge a_j \sqrt{\ln(\edit{c}/\delta)} + b_j \ln (\edit{c} / \delta) \right) \leq \delta \edit{\implies} \P \left( X_j \geq x \right) \le \edit{c}\exp(- \frac{x^2}{a_j^2 + \edit{2}b_j x}).
	\]
	Let
	$Z_j = \exp(t X_j / w_j)$ then
	\[
	\P(Z_j \ge z_j) = \P(w_j \ln(Z_j)/t \ge w_j \ln(z_j) / t) = \P(X_j \ge w_j \ln(z_j) / t).
	\]
	It follows that
	\[
	\resizebox{\linewidth}{!}{$\displaystyle
	\E[Z_j] = \int_{0}^\infty \P(Z_j \ge z_j) ~ dz_j 
	=  \int_{0}^\infty \P(X_j \ge w_j \ln(z_j) / t) ~ dz_j  
	\le \edit{1 + c} \int_{1}^\infty \exp \left(- \frac{ \ln(z_j)^2 \frac{w_j^2}{t^2} }{a_j^2 + \frac{\edit{2} b_j w_j}{t} \ln(z_j)} \right) \, dz_j .
	$}	
	\]
	Substituting $z_j = \exp(t x_j / w_j)$ gives
	\[
	\E[Z_j] \le \edit{1 + c} \int_{0}^\infty h(x_j) ~ d x_j = \edit{1 + c } \left( \int_{0}^{\hat{x}_j} h(x_j) ~ dx_j + \int_{\hat{x}_j}^{\infty} h(x_j) ~ dx_j \right) 
	\]
	where $h(x_j) := \frac{t}{w_j} \exp \left(\frac{t  x_j}{w_j}- \frac{ x_j^2}{a_j^2 + \edit{2}b_j x_j} \right)$ and $\hat{x}_j = \frac{3 a_j^2}{\edit{2}b_j}$.
	We will now bound each of these terms.
	First,
	\begin{flalign*}
		\int_{0}^{\hat{x}_j} h(x_j) ~ dx_j &\le \int_{0}^{\hat{x}_j}  \frac{t}{w_j} \exp \left( \frac{t x_j}{w_j} -  \frac{1}{4} \left(\frac{x_j}{a_j} \right)^2  \right) dx_j \le 2 \sqrt{\pi} \cdot \frac{t a_j}{w_j}  \exp\left( \frac{t^2 a_j^2}{w_j^2} \right) .
	\end{flalign*}
	Second, if 
	\begin{flalign}\label{eq:if-small-b-j-relative-to-w_j-divided-by-t}
		\frac{t}{w_j} \cdot \frac{\Gamma}{\tau} \leq \frac{1}{\edit{2}b_j} \text{ and } \tau \in (0, 3 \Gamma/4)
	\end{flalign}
	with $\tau$ and $\Gamma$ to be chosen later to satisfy these requirements, then
	by $(i)$ the fact that 
	$x_j / (a_j^2 + \edit{2}b_j x_j) = x_j / (\edit{2} \hat{x}_j b_j / 3 + \edit{2} b_j x_j) \ge \frac{1}{\edit{2} b_j (1/3 + 1)} = \frac{3}{\edit{8} b_j}$, $(ii)$ \Cref{eq:if-small-b-j-relative-to-w_j-divided-by-t}, and $(iii)$ using $\tau \in (0, 3 \Gamma / 4)$   we get 
	\begin{flalign*}
		\int_{\hat{x}_j}^{\infty} h(x_j) ~ dx_j 
		&\us{\le}{i} \int_{\hat{x}_j}^{\infty} \frac{t}{w_j} \exp \left(x_j \left( \frac{t}{w_j} - \frac{3}{\edit{8} b_j} \right) \right) dx_j \us{\le}{ii} \int_{\hat{x}_j}^{\infty}  \frac{t}{w_j} \exp \left(\left( \frac{4 \tau - 3 \Gamma}{4 \tau}\right)\frac{t x_j}{w_j}  \right) dx_j \\
		&\le \int_{0}^{\infty}  \frac{t}{w_j} \exp \left(\left( \frac{4 \tau - 3 \Gamma}{4 \tau}\right)\frac{t x_j}{w_j}  \right) dx_j \us{=}{iii} \frac{4\tau}{3 \Gamma-4\tau}.
	\end{flalign*}
	Combining and substituting into \Cref{eq:prob-bound-in-terms-of-expected-Z_j} gives
	\[
	\resizebox{\linewidth}{!}{$\displaystyle
		\P \left( \sum_{j=1}^{d} X_j \geq \gamma \right) \le \exp(-\gamma t) \sum_{j=1}^d w_j \E[Z_j] \left( \edit{1 +} \frac{4\tau c}{3\Gamma-4\tau} \right) \exp(-\gamma t) + 2 \edit{c} \sqrt{\pi} \sum_{j=1}^d t a_j  \exp\left( \frac{t^2 a_j^2}{w_j^2} - \gamma t \right).
	$}
	\]
	Setting
	$\gamma = \Gamma \sum_{j=1}^d a_j \sqrt{\ln(6\edit{c}/\delta) } + \edit{2}b_j \ln(6\edit{c}/\delta)$ with $\Gamma=\frac{9}{4}$, $t = \tau \frac{\ln(6\edit{c}/\delta)}{\gamma}$ with $\tau = \edit{1.44}$ and $w_j = \Gamma \frac{ a_j \sqrt{\ln(6\edit{c}/\delta) } + \edit{2}b_j \ln(6\edit{c}/\delta)}{\gamma}$ ensures we satisfy \eqref{eq:if-small-b-j-relative-to-w_j-divided-by-t} and $\sum_{j=1}^{d} w_j = 1$. Substituting these values into our bound on $\P \left( \sum_{j=1}^{d} X_j \geq \gamma \right)$ gives
	\begin{flalign*}
		\P \left( \sum_{j=1}^{d} X_j \geq \gamma \right) 
		&\le \resizebox{0.82\linewidth}{!}{$\displaystyle
        \left( \edit{1 +}\frac{4\tau c}{3\Gamma-4\tau} \right) \exp(-\tau \ln(6\edit{c}/\delta)) + 2 \edit{c} \sqrt{\pi} \frac{\tau}{\Gamma} \sqrt{\ln(6\edit{c}/\delta)} \exp\left( \left( \frac{\tau^2}{\Gamma^2} - \tau \right) \ln(6\edit{c}/\delta) \right)
    $} \\
		&= \left( \edit{1 +} \frac{4\tau c}{3\Gamma-4\tau} \right) \left( \frac{\delta}{6\edit{c}} \right)^{\tau} + 2 \edit{c} \sqrt{\pi} \frac{\tau}{\Gamma} \sqrt{\ln(6\edit{c}/\delta)} \left( \frac{\delta}{6\edit{c}} \right)^{\tau - \frac{\tau^2}{\Gamma^2}} \\
		&\le \edit{ (1 + 5.82 c) \left( \frac{\delta}{6c} \right)^{1.44} + 2.27 c \sqrt{\ln(6c/\delta)} \left( \frac{\delta}{6c} \right)^{1.0304} }\\
		&\le \delta
	\end{flalign*}
	\edit{where the last inequality follows because the preceding right-hand side divided by $\delta$, call it $g_c(\delta)$, is at most $1$ on $(0,1)$ for every $c \ge 1$. Term-by-term comparison gives $g_c(\delta) \le g_1(\delta/c)$ as the first-term ratio simplifies to $(1+5.82c)/(6.82c) \le 1$ and the second-term ratio is exactly $1$. Since $\delta/c \in (0,1)$, it suffices that $g_1 \le 1$ on $(0,1)$, which holds because $g_1$ has a unique interior local maximum ${\approx}\,0.93$ (at $\delta \approx 5.4 \times 10^{-7}$) and $g_1(1) \approx 0.996 < 1$.}
\end{proof}

While the values of $\lambda[]$ calculated in \Cref{table:combinations-that-satisfy-general-adaptivity-for-free} facilitate our sample complexity results, they involve a supremum over $\X$ which might be impractical to calculate. An alternative, more practical approach uses our \RMS{} method to grid search over $\lambda[]$. This approach still requires no knowledge of the distance to optimality, but increases the sample complexity by a $\log\log$ factor in our uncertainty in the Lipschitz constant; see \Cref{app:dealing-with-unknown-lambda} for details.

\myparagraph{Stage 2: Constrained stochastic optimization} 
Having reduced the problem to minimization of $F(x)$ in a ball of known radius $R$, our next and final step is to find an approximate minimizer in that ball. It is natural to hope that the ERM solution $ \hat{x}_{\lambda[]}$, which is in the ball by definition, is a suitable approximate minimizer. Unfortunately, even in the Euclidean case with known Lipschitz constant, the best available high probability guarantees for ERM have an additional $\log n$ factor~\cite{bousquet2020sharper}. Therefore, we consider a generic stochastic optimization algorithm $\alg$ that takes in a ball radius $R>0$ and sample functions\footnote{Since in \Cref{alg:PUD} the radius $R$ depends on the first $n$ sample functions, we apply $\alg$ to $n$ additional, fresh samples.} $f_{n+1}, \ldots, f_{2n}$, and outputs a point $\alg(R;f_{n+1}, \ldots, f_{2n})\in \edit{\{ x \in \X : \| x \| \le R \}}$. Assumption~\ref{assume:std-algorithm} parameterizes the approximation guarantee we require of $\alg$.

\begin{assumption}\label{assume:std-algorithm}
\edit{The algorithm $\alg$ satisfies Assumption~\ref{assume:std-algorithm} with parameter $\lambdaAlg > 0$ if, for a given $\delta > 0$ and all $R > 0$, $\alg(R;f_{n+1},\dots,f_{2n})  \in \{ x \in \X : \| x \| \le R \}$ almost surely and}
\[
\P\Big( F(\alg(R;f_{n+1},\dots,f_{2n})) - \min_{x \in \X : \| x \| \le R} F(x) \le \lambdaAlg R  \Big) \ge 1 - \delta.
\]
\end{assumption}
With Assumption~\ref{assume:std-algorithm}, we immediately obtain a generic error bound for \Cref{alg:PUD}.

\begin{lemma}\label{thm:general-adaptivity-for-free}
Let $\OutputPerfect$ be the output of \Cref{alg:PUD}. If $\lambda[]$ satisfies \Cref{condition:sufficient-lambda} and \edit{algorithm $\alg$ satisfies Assumption~\ref{assume:std-algorithm} with parameter $\lambdaAlg$} then, with probability at least $1 - 3 \delta$, $F(\OutputPerfect) - \Fstar \le  (\edit{14}  \lambdaAlg + 3 \lambda[]) \DStar$ and $\| \OutputPerfect \| \le 14 \DStar$.
\end{lemma}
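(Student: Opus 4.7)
The approach will be to combine the localization guarantee of Lemma~\ref{lem:bound-regularization-direction} (which only uses the first $n$ samples) with the constrained-optimization guarantee in Assumption~\ref{assume:std-algorithm} (which only uses the second $n$ samples). Because the two sample halves are independent, a union bound over the three failure events will yield total failure probability at most $3\delta$.

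First, I will apply Lemma~\ref{lem:bound-regularization-direction} to obtain, with probability at least $1-2\delta$, the sandwich $\|\xRegStar{3\lambda}\| \le R = 3\|\xReg{\lambda}\| \le 33\,\|\xRegStar{\lambda/3}\|$. The elementary bound $\|\xRegStar{\lambda/3}\| \le \DStar$ follows by comparing the regularized objective $F(x) + (\lambda/3)\|x\|$ at $\xRegStar{\lambda/3}$ and $\xstar$ and noting $F(\xRegStar{\lambda/3}) \ge \Fstar$. Hence $R \le 33\DStar$. Since the concrete choices of $\alg$ in \Cref{table:combinations-that-satisfy-general-adaptivity-for-free} project onto the ball of radius $R$, this immediately yields $\|\OutputPerfect\| \le R \le 33\DStar$, establishing the second conclusion.

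Second, the left inequality $\|\xRegStar{3\lambda}\| \le R$ tells us that $\xRegStar{3\lambda}$ is feasible for the constrained problem $\min_{\|x\|\le R} F(x)$. Reusing the short argument already spelled out in \eqref{eq:erm-ball-suboptimality}, the definition of $\xRegStar{3\lambda}$ as a minimizer of $F(x) + 3\lambda\|x\|$ gives $F(\xRegStar{3\lambda}) \le \Fstar + 3\lambda\DStar$, so $\min_{\|x\|\le R} F(x) - \Fstar \le 3\lambda\DStar$.

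Third, I will invoke Assumption~\ref{assume:std-algorithm} on the second sample batch. Since $R$ is measurable with respect to the first batch and the second batch is independent of the first, this is a legitimate use of the assumption and succeeds with probability at least $1-\delta$, giving $F(\OutputPerfect) - \min_{\|x\|\le R} F(x) \le \phi R$. Adding this to the suboptimality bound from the previous paragraph and applying the union bound produces $F(\OutputPerfect) - \Fstar \le \phi R + 3\lambda\DStar \le (c\phi + 3\lambda)\DStar$ for a small constant $c$; the stated constant $20$ should be recoverable by splitting into the case $R \ge \DStar$ (where $\min_{\|x\|\le R}F(x) = \Fstar$ and only the $\phi R$ term matters, so one can use a tighter tail bound on $R$) and the case $R < \DStar$, or by re-optimizing the numerical constants inside Lemma~\ref{lem:bound-regularization-direction}. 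The only nontrivial step is the conditioning / independence argument that licenses applying Assumption~\ref{assume:std-algorithm} with the data-dependent radius $R$; everything else is bookkeeping of constants and union bounds.
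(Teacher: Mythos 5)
Your proposal follows essentially the same route as the paper's proof: apply Lemma~\ref{lem:bound-regularization-direction} to localize, combine $\|\xRegStar{3\lambda}\| \le R$ with \eqref{eq:erm-ball-suboptimality} to bound $\min_{\|x\|\le R}F(x)-\Fstar \le 3\lambda\DStar$, use $R \le 33\|\xRegStar{\lambda/3}\| \le 33\DStar$, invoke Assumption~\ref{assume:std-algorithm} on the independent second half of the sample, and union-bound over the three failure events. Your worry about recovering the constant $20$ is well-placed but is not a defect of your argument: the paper's own proof as written yields $(33\lambdaAlg + 3\lambda)\DStar$, not $(20\lambdaAlg + 3\lambda)\DStar$, and the case split you sketch ($R\ge\DStar$ vs.\ $R<\DStar$) does not close the gap since the first case still only gives $\lambdaAlg R \le 33\lambdaAlg\DStar$; the discrepancy appears to live in the lemma statement itself rather than in your reasoning.
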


\begin{proof}
	Let $R=2 \| \xReg{ \lambda[]}\|$. 
	By \Cref{lem:bound-regularization-direction}, this lemma's premise, and a union bound, it follows that, with probability at least $1-3\delta$, both  $  \| \xRegStar{3 \lambda[]} \| \le R \le  14 \| \xRegStar{\lambda[]/3} \|$ and $F(\alg(R\edit{; f_{n+1}, \ldots, f_{2n}})) \le \lambdaAlg R + \min_{x \in \X : \| x \| \le R} F(x)$. 
	Upper bounding the latter inequality using \eqref{eq:erm-ball-suboptimality} and $R \le 14 \| \xRegStar{\lambda[]/3} \| \le 14 \DStar$  yields our \edit{desired suboptimality guarantee. The bound $\| \OutputPerfect \| \le 14 \DStar$ follows from $\| \OutputPerfect \| \le R$.}
\end{proof}

It remains to instantiate $\alg$ and $\lambdaAlg$ using specific algorithms; we match each $p$-norm we consider with a different stochastic optimization method, each attaining the minimax optimal high probability bound within a known ball of the corresponding norm, without requiring any knowledge of the Lipschitz constants. For the Euclidean ($p=2$) case, we use  \textsc{AdaSGD}. For $p=1$ we use entropic mirror descent (i.e., mirror descent with KL divergence) \cite{beck2003mirror,nemirovski1983problem} with adaptive step sizes~\cite{orabona2021modern}, which we denote \textsc{AdaEMD}. For $p=\infty$, we use \textsc{AdaGrad} \cite{duchi2011adaptive,mcmahan2010adaptive}. \Cref{table:combinations-that-satisfy-general-adaptivity-for-free} gives the value of $\lambdaAlg$ corresponding to each $p$, and we provide full details in \Cref{app:coro:applications-of-main-theorem}, leading to the following result. \edit{We focus on $p \in \{1, 2, \infty\}$ because these are the most widely studied geometries in stochastic convex optimization \cite{beck2003mirror,nemirovski1983problem,duchi2011adaptive}, each admitting a minimax-optimal parameter-free algorithm.}

\vspace{1em}

\begin{table}[h]
	\centering
	\renewcommand{\arraystretch}{1.5}
	\setlength{\tabcolsep}{3pt}
	\begin{tabular}{ c  c c  c }
		\toprule
		$p$-norm & $\lambda[p]$ & \textbf{$\alg_p$} & $\lambdaAlg[p]$ \\
		\midrule
		$2$ & 
		$\frac{4 \sqrt{\ln \frac{6}{\delta}}}{\sqrt{n}} \sup_{x \in \X} \sqrt{ \sum_{j=1}^d \gradDiff} + \frac{20 \LipHat \ln \frac{6}{\delta}}{n-1}$ & $\textsc{AdaSGD}$ & $O\Big( \frac{\Lip \sqrt{\ln \frac{1}{\delta}}}{\sqrt{n}} \Big)$
		\\
		$1$ & 
		$ \frac{\edit{2} \sqrt{2 \ln \frac{4 d}{\delta} }}{\sqrt{n-1}} \sup_{j \in [d], x \in \X} \sqrt{ \gradDiff }+\frac{28 \| \hat{\LipCoord} \|_{\infty} \ln \frac{4 d}{\delta}}{3 (n-1)}$ 
		& $\textsc{AdaEMD}$ & $O\Big( \frac{\| \LipCoord \|_{\infty} \sqrt{\ln \frac{d}{\delta}}}{\sqrt{n}} \Big)$ 
		\\
		${\infty}$ &
		$\frac{9 \sqrt{2 \ln \frac{18}{\delta}}}{2 \sqrt{n-1}} \sup_{x \in \X} \sum_{j=1}^d \sqrt{\gradDiff}+\frac{\edit{48} \| \hat{\LipCoord} \|_1 \ln \frac{18}{\delta}}{n-1} $ & $\textsc{AdaGrad}$  & 
		$O\Big( \frac{\| \LipCoord \|_{1}  \sqrt{\ln \frac{1}{\delta}}}{\sqrt{n}} \Big)$
		\\
		\bottomrule
	\end{tabular}
	\caption{Settings used in \Cref{thm:instantiates-lambda-phi-table}, where rows correspond to different norms (given in the first column) and $\hat{\LipCoord}$ is defined in Assumption~\ref{assume:Lip-estimates}. In the remaining columns, $\lambda[p]$ is the regularization parameter satisfying Condition~\ref{condition:sufficient-lambda} (for $\gradDiff \coloneqq \frac{1}{n}\sum_{i=1}^n \coor{\grad f_i(x) - \grad \bar{F}(x)}^2$), algorithm \textbf{$\alg$} is an optimal stochastic optimization method in a given norm ball, and $\lambdaAlg[p]$ is the error coefficient for which Assumption~\ref{assume:std-algorithm} holds.}\label{table:combinations-that-satisfy-general-adaptivity-for-free}
\end{table}

\begin{theorem}\label{thm:instantiates-lambda-phi-table}
Suppose that $\delta \in (0,1)$ and Assumption~\ref{assume:Lip-estimates} holds. Then, for $p\in\{1,2,\infty\}$, the output $\OutputPerfect$ of \Cref{alg:PUD} using $\lambda[p], \alg_p$  and $\lambdaAlg[p]$ given in \Cref{table:combinations-that-satisfy-general-adaptivity-for-free} satisfies, with probability $\ge 1-3 \delta$,
\[
F(\OutputPerfect) - \Fstar \le  (\edit{14}  \lambdaAlg[p] + 3 \lambda[p]) \DStar \text{~~and~~}  \| \OutputPerfect \| \le 14 \DStar.
\]
\end{theorem}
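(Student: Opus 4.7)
The plan is to invoke \Cref{thm:general-adaptivity-for-free}, which reduces the task to two ingredients for each $p \in \{1, 2, \infty\}$: (i) the regularization parameter $\lambda_p$ listed in \Cref{table:combinations-that-satisfy-general-adaptivity-for-free} satisfies \Cref{condition:sufficient-lambda}, and (ii) the algorithm $\alg_p$ from the same table satisfies \Cref{assume:std-algorithm} with coefficient $\lambdaAlg_p$. Given both, the conclusion is a direct consequence of the lemma, with the stated probability $1-3\delta$ and the two displayed bounds inherited verbatim.

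For ingredient (i), I would apply \Cref{lem:vector-concentration-bounds} to the i.i.d.\ vectors $V_i = \grad f_i(\xRegStar{\zeta})$, for which $\bar V = \grad \bar F(\xRegStar{\zeta})$ and $\nu = \grad F(\xRegStar{\zeta})$. Matching each $p$ with its dual $q$ (where $1/p+1/q=1$), I use part~\ref{lem:vector-concentration-bounds:1a} of the lemma for $p=2$, part~\ref{lem:vector-concentration-bounds:2b} for $p=1$, and part~\ref{lem:vector-concentration-bounds:2d} for $p=\infty$; the Lipschitz upper bounds $\LipHat$ and $\hat{\LipCoord}$ from \Cref{assume:Lip-estimates} supply the deterministic constants required by the lemma. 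Rewriting $\sum_{i=1}^n \|V_i - \bar V\|_2^2 = n\sum_j \gradDiff$ (and the analogous coordinatewise identity in the $\ell_\infty$ and $\ell_1$ cases), evaluated at $x = \xRegStar{\zeta}$, puts the resulting deviation bound on $\|\grad \bar F(\xRegStar{\zeta}) - \grad F(\xRegStar{\zeta})\|_*$ in exactly the closed form $\lambda_p/2$ displayed in the table.

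For ingredient (ii), I would cite known high-probability guarantees for the three adaptive methods, each applied inside the $\|\cdot\|_p$-ball of radius $R$: \Cref{thm:adaptive-SGD} already records the Euclidean ($p=2$) guarantee for \textsc{AdaSGD}; analogous bounds for \textsc{AdaEMD} (KL-divergence mirror descent with adaptive step sizes on the scaled simplex) and for \textsc{AdaGrad} on an $\ell_\infty$-ball are classical. Each yields a suboptimality of the form $\lambdaAlg_p R$ with $\lambdaAlg_p$ matching the final column of \Cref{table:combinations-that-satisfy-general-adaptivity-for-free}, which is precisely \Cref{assume:std-algorithm}.

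The main obstacle is that $\lambda_p$ is itself sample-dependent (through the empirical variance $\gradDiff$), yet \Cref{condition:sufficient-lambda} asks for a deviation bound at the data-dependent point $\xRegStar{\zeta}$ with $\zeta \in \{\lambda_p/3, 3\lambda_p\}$ in turn depending on $\lambda_p$. Taking the supremum over $x \in \X$ in the definition of $\lambda_p$ breaks this circularity: it produces a uniform upper bound on the relevant empirical variance that dominates the point-wise quantity at \emph{every} $x \in \X$ simultaneously, in particular at both $\xRegStar{\lambda_p/3}$ and $\xRegStar{3\lambda_p}$, regardless of the random realization. A union bound over the two values of $\zeta$ only adjusts the absolute constants inside $\lambda_p$, and combining (i) and (ii) with \Cref{thm:general-adaptivity-for-free} yields both the suboptimality and the norm bound at confidence $1-3\delta$.
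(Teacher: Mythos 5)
Your proposal takes essentially the same route as the paper: Theorem~\ref{thm:instantiates-lambda-phi-table} is indeed obtained by invoking Lemma~\ref{thm:general-adaptivity-for-free} and separately verifying Condition~\ref{condition:sufficient-lambda} (via Lemma~\ref{lem:vector-concentration-bounds} with $V_i=\grad f_i(\xRegStar{\zeta})$, using part~\ref{lem:vector-concentration-bounds:1a} for $p=2$, part~\ref{lem:vector-concentration-bounds:2b} for $p=1$, part~\ref{lem:vector-concentration-bounds:2d} for $p=\infty$, and rewriting $\sum_i\|V_i-\bar V\|_2^2 = n\sum_j\Delta_j(x)$ to match the table's form) and Assumption~\ref{assume:std-algorithm} (via the \textsc{AdaSGD}/\textsc{AdaEMD}/\textsc{AdaGrad} guarantees collected in the appendix). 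Your norm pairings $p\leftrightarrow q$ are correct, and the final probability $1-3\delta$ is right.

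Two small points on accuracy of exposition, neither of which changes the conclusion. First, the sentence ``A union bound over the two values of $\zeta$ only adjusts the absolute constants inside $\lambda_p$'' misattributes where the union bound acts: Condition~\ref{condition:sufficient-lambda} is stated per-$\zeta$, so verifying it requires no union bound and does not affect the constants in $\lambda_p$; the factor of $2\delta$ appears inside the proof of Lemma~\ref{lem:bound-regularization-direction} (union bound over the two $\zeta$-events), and the third $\delta$ comes from Assumption~\ref{assume:std-algorithm} in Lemma~\ref{thm:general-adaptivity-for-free}. Second, your observation about circularity --- that $\lambda_p$ is sample-dependent while Condition~\ref{condition:sufficient-lambda} as written presumes $\lambda$ is a known constant --- flags a genuine subtlety, and you are right that the supremum over $\X$ makes $\lambda_p/2$ deterministically dominate the empirical concentration bound at every $x$. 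But that domination alone does not close the loop: the probabilistic bound from Lemma~\ref{lem:vector-concentration-bounds} is stated at a fixed $x$, and $\xRegStar{\zeta}$ with $\zeta\in\{\lambda_p/3,3\lambda_p\}$ is a random point. The paper's own exposition (the footnote preceding Lemma~\ref{lem:vector-concentration-bounds}) treats these as ``fixed'' without elaboration, so your proposal is not more or less rigorous than the paper on this point --- but you should not present the supremum as a complete resolution, since it handles the deterministic comparison of bounds and not the data-dependence of the evaluation point.
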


\mysubsection{Simultaneously adapting to multiple problem structures}{Simultaneously Adapting to Multiple Problem Structures}\label{sec:multiple-problem-structures}

Finally, we combine our \RMS{} method (\Cref{alg:reliable-hyperparameter-selection}) with \Cref{thm:instantiates-lambda-phi-table} to simultaneously adapt to multiple problem structures. \edit{In
particular, we run \Cref{alg:PUD} with  $p \in \{2, 1, \infty\}$ from \Cref{table:combinations-that-satisfy-general-adaptivity-for-free} and then use \RMS{} to choose the best output.}
The proof and algorithm appear in \Cref{app:example:combine-methods}.

\begin{theorem}\label{example:combine-methods}
Suppose Assumption~\ref{assume:Lip-estimates} holds. Additionally, assume that $\delta\in (0, 1/2]$, $\LipHat \le \Lip  \sqrt{n / \ln(1/\delta)}$ and $\hat{\LipCoord}_j \le \LipCoord_j \sqrt{n / \ln(d/\delta)}$ for all $j \le d$. Then, there exists a parameter-free algorithm that samples $\edit{3}n$ functions and returns $z$ such that, with probability at least $1-\delta$,
\[
F(z) - \Fstar \le \min_{\xstar \in \XStar} O\Bigg( \frac{ \Lip \| \xstar  \|_2  \sqrt{\ln \frac{1}{\delta}}}{\sqrt{n}} \bigwedge \frac{\| \LipCoord \|_{\infty} \| \xstar \|_{1} \sqrt{\ln \frac{d}{\delta}}}{\sqrt{n}} \bigwedge \frac{\| \LipCoord \|_{1} \| \xstar \|_{\infty} \sqrt{\ln \frac{1}{\delta}}}{\sqrt{n}}  \Bigg).
\]
\end{theorem}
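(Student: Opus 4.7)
The plan is to generate three candidate solutions, one per norm $p\in\{1,2,\infty\}$, using \Cref{alg:PUD}, and then apply \RMS{} (\Cref{alg:reliable-hyperparameter-selection}) to select among them using a held-out validation set. Split the $n$ samples in half, using the first half as input to \Cref{alg:PUD} with each of the three rows of \Cref{table:combinations-that-satisfy-general-adaptivity-for-free}, producing candidates $x^{(1)},x^{(2)},x^{(\infty)}$ (augmented by $x_0=\zeros$). A union bound over the three runs of \Cref{thm:instantiates-lambda-phi-table} yields, with high probability, both the suboptimality bound $F(x^{(p)}) - \Fstar \le (20 \lambdaAlg_p + 3\lambda_p) \|\xstar\|_p$ for every $\xstar\in\XStar$ and the norm bound $\|x^{(p)}\|_p \le 33\|\xstar\|_p$. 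The hypotheses $\LipHat \le \Lip\sqrt{n/\ln(1/\delta)}$ and $\hat{\LipCoord}_j \le \LipCoord_j\sqrt{n/\ln(d/\delta)}$ make the $\LipHat$- and $\hat{\LipCoord}$-dependent (Bernstein-type) part of $\lambda_p$ comparable to its Hoeffding-type part, so $\lambda_p = O(\lambdaAlg_p)$ and each candidate already attains the $p$-th rate stated in the theorem.

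Next, run \RMS{} on the candidate set using the second half of the samples as validation. The key observation is that \Cref{assume:confidence-intervals} concerns a single scalar deviation $|F(x_k) - \bar F(x_k) - (F(\zeros) - \bar F(\zeros))|$, while the Maurer--Pontil width in \Cref{define:theta-k} is valid whenever we supply any almost-sure upper bound on $|f_i(x_k) - f_i(\zeros)|$. Accordingly, for each candidate compute three widths $\tau_k^{(p)}$ using the three dual bounds $\Lip\|x_k\|_2$, $\|\LipCoord\|_\infty\|x_k\|_1$, and $\|\LipCoord\|_1\|x_k\|_\infty$ (and the corresponding sample variances). Each $\tau_k^{(p)}$ is individually valid with high probability, so by union bounding over the (finite) candidate set and the three norms, $\tau_k := \min_p \tau_k^{(p)}$ still satisfies \Cref{assume:confidence-intervals} after a constant-factor inflation of $\delta$. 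Feed this $\tau_k$ into \Cref{alg:reliable-hyperparameter-selection} and return its output $z := x_{\kReliable}$.

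To conclude, apply \Cref{lem:reliable-selection-general-guarantee}, which gives $F(z) - \Fstar \le F(x^{(p)}) - \Fstar + (1+\gamma)\tau_{k_p}$ for every $p$, where $k_p$ indexes $x^{(p)}$. Since $\tau_{k_p} \le \tau_{k_p}^{(p)}$, since the dominant (Hoeffding) piece of $\tau_{k_p}^{(p)}$ matches the $p$-th rate in the theorem after using $\|x^{(p)}\|_p \le 33\|\xstar\|_p$, and since the remaining Bernstein $1/n$ piece is controlled by the Lipschitz hypotheses, combining with the first step shows that for every $p\in\{1,2,\infty\}$ the quantity $F(z) - \Fstar$ is at most a constant times the $p$-th term inside the minimum in the theorem; taking the minimum finishes the proof. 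The one delicate point is justifying $\tau_k = \min_p\tau_k^{(p)}$: this rests on the scalar nature of the deviation, so the event that three valid high-probability upper bounds simultaneously hold is itself high-probability, and on that intersection the minimum is a legitimate upper bound. Everything else is bookkeeping of constants, sample-splitting, and confidence levels.
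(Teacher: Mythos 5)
Your proposal is correct and follows essentially the same route as the paper's: run \Cref{alg:PUD} once per norm $p\in\{1,2,\infty\}$ (using the three rows of \Cref{table:combinations-that-satisfy-general-adaptivity-for-free}) and then feed the three candidates to \RMS{} with widths $\tau_k$ built from the Maurer--Pontil bound, exactly as in the paper's \Cref{alg:simultaneously-adapting}.

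One small place where you take a slightly heavier route than necessary: to handle the fact that the Bernstein tail term of $\tau_k$ should use the tightest of the three dual bounds, you compute three widths $\tau_k^{(p)}$ (each valid with high probability), union-bound over $p$, and then take $\tau_k := \min_p \tau_k^{(p)}$. The paper observes instead that $q_k := \LipHat\|x_k\|_2 \wedge \|\hat{\LipCoord}\|_\infty\|x_k\|_1 \wedge \|\hat{\LipCoord}\|_1\|x_k\|_\infty$ is \emph{already} a single almost-sure bound on $|f_i(x_k)-f_i(\zeros)|$, so one application of \Cref{thm:empirical-bennett} with $q_k$ plugged in as the range suffices — no union bound over $p$ is needed, only over the candidates $k$. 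Relatedly, the phrase ``and the corresponding sample variances'' in your write-up is a little misleading: the empirical variance $\SampleVar{k}$ is a single scalar quantity that does not depend on $p$; only the range term does. Neither point changes the conclusion — your extra union bound just costs a constant — but the paper's formulation is cleaner and uses one $\tau_k$ per candidate. Everything else (using \Cref{thm:instantiates-lambda-phi-table} plus \Cref{lem:bound-regularization-direction} to control both $F(x^{(p)})-\Fstar$ and $\|x^{(p)}\|_p$, invoking \Cref{lem:reliable-selection-general-guarantee}, and using the Lipschitz hypotheses to absorb the $1/n$ Bernstein piece) matches the paper's argument.
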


\Cref{example:combine-methods} shows that we can---without prior knowledge of $\DStar$---match, up to constant factors, the sample complexity lower bounds for stochastic convex optimization\footnote{The lower bounds for $p\in\{2,\infty\}$ are given by a one-dimensional construction that covers both cases \citep[Proposition 1b]{carmon2024price}. 
 The lower bound for the $p=1$ norm is implied by \citet[Proposition 1b]{carmon2024price} for $d < 1/\delta$ and by \citet[Theorem 3]{levy2019necessary} and Markov's inequality for $d \ge 1/\delta$.}
 with a known distance to optimality across three standard geometries. 
A limitation of \Cref{example:combine-methods} is that it requires an upper bound on the Lipschitz constants that is tight up to a factor of $\tilde{O}(\sqrt{n})$. This is unavoidable due to a lower bound on the sample complexity of parameter-free stochastic optimization \cite[Theorem 3]{carmon2024price}.

\section{Experiments with \RMS{}}\label{sec:experiments}

\newcommand{\MaxDiff}[1]{M(#1)}

While the primary contribution of this paper is theoretical, we provide two experiments to demonstrate that our \RMS{} method can be useful in practice.
This section provides an overview of them; complete details are provided in \Cref{sec:further-experiment-details}.
For both experiments we run our \RMS{} method (\Cref{alg:reliable-hyperparameter-selection}) with $\gamma=3$ and $\tau_k = \tfrac{1}{2} ( \sqrt{{\SampleVar{k}} / n} + \MaxDiff{x_k} / n )$
where $\MaxDiff{x}$ satisfies $\abs{f(x;S) - f(x_0;S)} \le \MaxDiff{x}$, $\forall S \in \SS$, $x \in \X$. 
This choice of $\tau_k$ is inspired by $\tau_k = \sqrt{2 \tilde{c} \SampleVar{k} / n} +  14 \tilde{c} \MaxDiff{x_k} / (3 n)$ where $\tilde{c} := \ln \frac{4 K}{\delta}$
which is a more conservative choice satisfying\footnote{By a union bound and \Cref{thm:empirical-bennett} in \Cref{app:well-known-concentration-inequalities} with $Z_i = f_i(x_k)-f_i(x_0)$, $a=-\MaxDiff{x_k}$, and $b=\MaxDiff{x_k}$.} \Cref{assume:confidence-intervals}. The function $\MaxDiff{x}$ is tailored to each of our experiments.

\myparagraph{Few-shot learning with CIFAR-10 and CLIP}  We fine-tune the last layer of a zero-shot CLIP model \cite{radford2021learning} with cross-entropy loss on CIFAR-10 \cite{krizhevsky2009learning} using \textsc{AdaGrad} with batch size of $\min\{ 0.5 \times \textsc{train set size}, 40 \}$. We search over hyperparameters in a grid over the number of 
epochs (10, 20, 30, or 40), 
the learning rate ($\eta \in \{ 4^{-10}, 4^{-9}, \dots, 4^{5} \}$),
and two calibration parameters ($\omega_1 \in \{ 0.33, 0.66, 1 \}$ and $\omega_2 \in \{ 0.5, 1.0, 2.0 \}$) which allow us to consider the combination of weights $\omega_2 ( (1- \omega_1) x + \omega_1 x_0)$  where $x$ is one of the trained model weights. This gives a total of $1 + 4 \times 16 \times 3 \times 3= 577$ hyperparameter combinations including the zero-shot model. %
We use the same number of shots for the training and validation sets.
We apply the \RMS{} method to minimize the cross-entropy loss on the validation set with $\MaxDiff{x} := 2 \| x - x_0 \|_{2,\infty}$ where $x_k$ is the weight matrix corresponding to each hyperparameter combination and $x_0$ is the initial zero-shot model; \Cref{sec:justify-norm} justifies this choice of $\MaxDiff{x}$.
The left plot in \Cref{fig:combined-experiment-plots} shows that, when the validation set is small (with 32 shots or fewer), then standard model selection is worse than the zero-shot model, while our method successfully improves upon it;
when the validation set is large (64 shots or more), both methods exhibit almost identical performance.
 
 \begin{figure}[t]
 	\includegraphics[width=0.495\textwidth]{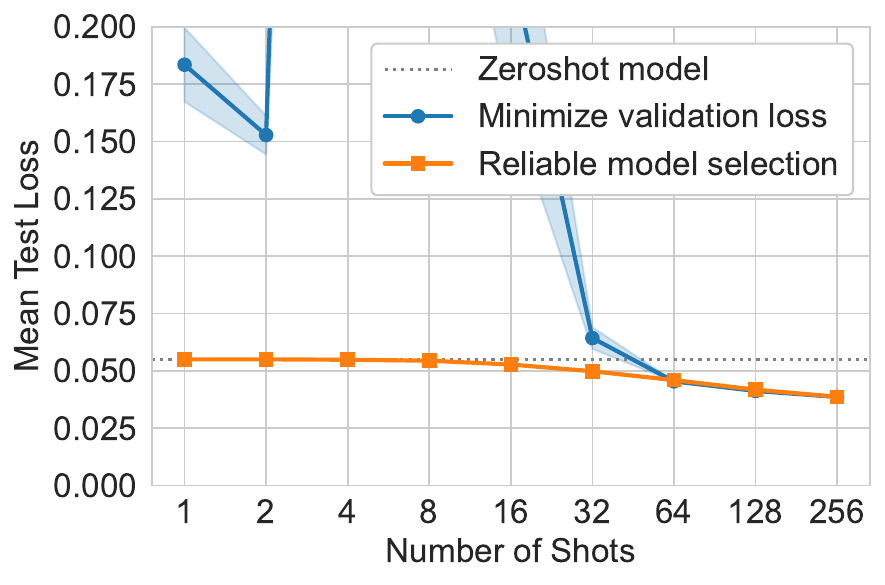}
	\includegraphics[width=0.495\textwidth]{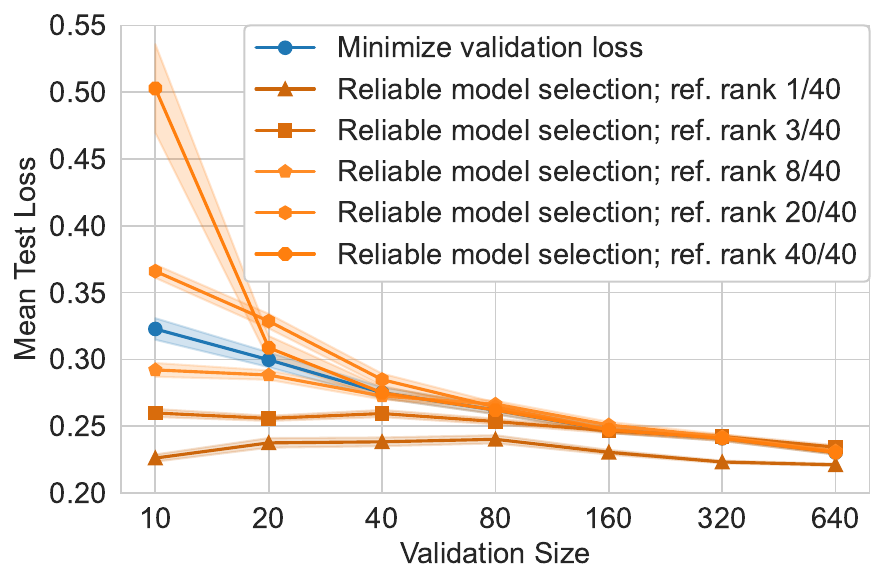}
	\vspace{-0.2cm}
 	\caption{\textbf{Left:} CLIP model fine-tuning of the last layer
 		for ViT-L/14 starting from the zero-shot weights
 		on CIFAR-10. 
		\textbf{Right:} Prompt engineering a large language model, \texttt{gemini-1.5-flash-002}, on the task of counting the number of shapes in images.
        Both experiments are based on $200$ runs, each of which resamples the training and validation sets. Shaded regions represent one standard error.
 		}\label{fig:combined-experiment-plots}
 \end{figure}

\myparagraph{Prompt engineering Gemini} We ask \texttt{gemini-1.5-flash-002} \cite{team2024gemini}, using 40 different prompts, to count the number of shapes in an image. The task is to select the prompt that yields the lowest absolute error between the LLM-reported and actual number of shapes. 
We randomly generated 5,000 images containing between 1 and 12 shapes while varying other characteristics such as color, size, and background. 
We project all predictions to the set $[0,12]$ and thus set $\MaxDiff{x} := 12$.
As the reference model, we used the best, third-best, 20th percentile, median, and worst prompts as evaluated on all 5,000 images. 
The right plot in \Cref{fig:combined-experiment-plots} shows that for a good choice of reference model, \RMS{} outperforms standard model selection, especially with smaller validation sizes;
the reverse holds for a poor reference model. 
When the validation set is sufficiently large, both methods perform similarly, regardless of the choice of reference model.

\myparagraph{Limitations}
Our method depends on both knowledge of the problem structure to compute $M(x)$ and a high-quality reference model,
which is often not possible.
Also, our approach is likely only useful for small validation sets because overfitting is rare for large validation sets \cite{roelofs2019meta,recht2019imagenet,yadav2019cold,miller2020effect}.

\newcommand{\acknowledgements}[0]{We thank Aaditya Ramdas, Cristobal Guzman, and John Duchi for helpful input. We also thank 
Itai Kreisler and Akif Khan for their helpful feedback on the paper.

This work was supported by the NSF-BSF program, under NSF grant \#2239527
and BSF grant \#2022663.
OH acknowledges support from
AFOSR grant \#FA9550-23-1-0242.
YC acknowledges support from the Israeli Science
Foundation (ISF) grant no. 2486/21, the Alon Fellowship, and the Adelis Foundation.
HB and AK were supported by the 
Allias/Holzman Undergraduate Research Award from 
the Department of Industrial Engineering at the University of Pittsburgh.
HB was also supported by an REU supplement to NSF grant \#2239527.
This research was supported in part by the University of Pittsburgh Center for
Research Computing and Data, RRID:SCR\_022735, through the resources provided. Specifically, this work used the H2P cluster, which is supported by NSF award number OAC-2117681.
}

\arxiv{
\section{Acknowledgements}	
\acknowledgements
}

\jmlr{
\acks{\acknowledgements}
}

\arxiv{\bibliographystyle{plainnat}}

\arxiv{\bibliography{bib.bib}}

\appendix

\mysection{Well-known results}{Well-Known Results}\label{app:well-known-concentration-inequalities}

For \Cref{thm:hoeffdings-inequality}, \Cref{thm:bennett-inequality}, and \Cref{thm:empirical-bennett}, let $Z, Z_1, \dots, Z_n$ be i.i.d. random variables with values in $[a, b]$ where $a$ and $b$ are constants. Let $\bar{Z}_n = \frac{1}{n} \sum_{i=1}^{n} Z_i$. Assume $\delta \in (0, 1)$.

\begin{theorem}[Hoeffding's inequality \cite{Hoeffding1963}]\label{thm:hoeffdings-inequality}
	With probability at least $1 - \delta$,
	\[
		\abs{\E Z - \bar{Z}_n} \le (b-a) \sqrt{\frac{\ln 2/\delta}{2 n}}.
	\]
\end{theorem}

\begin{theorem}[Bennett's inequality \cite{bennett1962probability}]\label{thm:bennett-inequality}
	Let $\sigma^2$ be the variance of $Z$. With probability at least $1 - \delta$,
	\[
		\abs{\E Z - \bar{Z}_n} \le \sigma \sqrt{\frac{2\ln 2/\delta}{n}} + \frac{(b - a) \ln 2/\delta}{3 n}.
	\]
\end{theorem}

\begin{theorem}[Theorem 4 of \citet{maurer2009empirical}]\label{thm:empirical-bennett}
	With probability at least $1 - \delta$,
	\[
		\E Z - \bar{Z}_n \le \sqrt{\frac{2 s_n^2 \ln 2/\delta }{n}} + \frac{7 (b - a)\ln 2/\delta}{3 (n-1)}
	\]
	 	with the empirical variance $s_n^2 = \frac{1}{n-1} \sum_{i=1}^n (Z_i - \bar{Z}_n)^2$.
\end{theorem}

Note that by a union bound, \Cref{thm:empirical-bennett} immediately implies that with probability at least $1-\delta$
\[
	\abs{\E Z - \bar{Z}_n} \le \sqrt{\frac{2 s_n^2 \ln 4/\delta }{n}} + \frac{7 (b - a)\ln 4/\delta}{3 (n-1)}.
\]

\edit{
\begin{lemma}[Tail Inversion] \label{lem:prob_inequality_equivalent_statements}
    Let $X$ be a random variable. Let $a, b > 0$ and $c \ge 1$. For all $x \ge 0$ and $\delta \in (0, 1)$, 
    \[ \P \left( X \ge a \sqrt{\ln(c/\delta)} + b \ln(c/\delta) \right) \le \delta \iff \P(X \ge x) \le c \exp\left(-\frac{2x^2}{a^2 + 2bx + a\sqrt{a^2 + 4bx}}\right). \]
    Consequently, the following simplified bounds hold
    \begin{itemize}
        \item[(i)] If $\P(X \ge x) \le c \exp\left(-\frac{x^2}{a^2 + bx}\right)$, then $\P \left( X \ge a \sqrt{\ln(c/\delta)} + b \ln(c/\delta) \right) \le \delta$.
        \item[(ii)] If $\P \left( X \ge a \sqrt{\ln(c/\delta)} + b \ln(c/\delta) \right) \le \delta$, then $\P(X \ge x) \le c \exp\left(-\frac{x^2}{a^2 + 2bx}\right)$.
    \end{itemize}
\end{lemma}

\begin{proof}
    For $x > 0$, solving the quadratic $x = a\sqrt{\ln(c/\delta)} + b\ln(c/\delta)$ for $\sqrt{\ln(c/\delta)}$ yields
    $$\sqrt{\ln(c/\delta)} = \frac{-a + \sqrt{a^2 + 4bx}}{2b} = \frac{2x}{a + \sqrt{a^2 + 4bx}}$$
    Squaring both sides gives
    $$\ln(c/\delta) = \frac{2x^2}{a^2 + 2bx + a\sqrt{a^2 + 4bx}}$$
    Rearranging this directly yields $\delta = c \exp\left(-\frac{2x^2}{a^2 + 2bx + a\sqrt{a^2 + 4bx}}\right)$, which establishes the equivalence.

    To prove the simplified bounds, we bound the denominator $a^2 + 2bx + a\sqrt{a^2 + 4bx}$ inside the exponential. 
    For (i), observing that $a \le \sqrt{a^2 + 4bx}$ lower-bounds the denominator by $2(a^2 + bx)$ yields the result. 
    For (ii), using the inequality $\sqrt{1+z} \le 1 + z/2$ for $z = 4bx/a^2$, we have $a\sqrt{a^2+4bx} \le a^2 + 2bx$. This upper-bounds the denominator by $2(a^2 + 2bx)$, completing the proof.
\end{proof}
}

\begin{lemma}
If $F : \X \rightarrow \R$ is $\mu$-strongly convex, then
\begin{flalign}\label{eq:useful-strong-convexity-implication}
\frac{\mu}{2} \| x - \xstar \|^2 \le F(x) - F(\xstar) \quad \forall x \in \X
\end{flalign}
where $\xstar$ is the unique minimizer of $F$.
\end{lemma}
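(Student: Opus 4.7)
The plan is to apply the definition of $\mu$-strong convexity with the pair $(u,v) = (x, \xstar)$ and then exploit the first-order optimality of $\xstar$ to eliminate the linear term. Concretely, strong convexity of $F$ (in its correct form) gives
\[
F(x) \ge F(\xstar) + g \cdot (x - \xstar) + \frac{\mu}{2} \| x - \xstar \|^2
\]
for every subgradient $g \in \partial F(\xstar)$. Since $\xstar \in \argmin_{x \in \X} F(x)$ and $\X$ is closed and convex, the first-order optimality condition guarantees the existence of some $g \in \partial F(\xstar)$ with $g \cdot (x - \xstar) \ge 0$ for all $x \in \X$. Choosing this particular $g$ as our subgradient in the strong convexity inequality drops the linear term and yields the desired bound $F(x) - F(\xstar) \ge \frac{\mu}{2} \| x - \xstar \|^2$.

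For uniqueness of $\xstar$, I would suppose $\xstar_1$ and $\xstar_2$ are both minimizers and apply the bound just established to each: $F(\xstar_1) - F(\xstar_2) \ge \frac{\mu}{2}\| \xstar_1 - \xstar_2 \|^2$ and $F(\xstar_2) - F(\xstar_1) \ge \frac{\mu}{2}\| \xstar_1 - \xstar_2 \|^2$. Adding these inequalities and using $F(\xstar_1) = F(\xstar_2)$ gives $0 \ge \mu \| \xstar_1 - \xstar_2 \|^2$, which forces $\xstar_1 = \xstar_2$ because $\mu > 0$.

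The only subtlety worth flagging is the constrained, possibly non-smooth setting: the unconstrained condition $\grad F(\xstar) = \zeros$ need not hold, so I must invoke the correct subgradient/normal-cone characterization of optimality on $\X$ rather than assume a zero gradient. Since the norm $\|\cdot\|$ is generic, I should also note that the strong convexity inequality and the first-order condition are both meaningful for an arbitrary norm, so no Hilbert structure is needed. Apart from this careful choice of subgradient, the argument is a one-line consequence of the definition and requires no further machinery.
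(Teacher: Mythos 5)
Your proof is correct and follows essentially the same route as the paper: instantiate the strong-convexity inequality with the minimizer as the anchor point, then kill the linear term using first-order optimality at $\xstar$. Two things you do better than the paper's proof sketch are worth noting. First, the paper's stated definition of $\mu$-strong convexity (in the notation section) reads $h(u)-h(v)\le \grad h(v)\cdot(u-v)-\frac{\mu}{2}\|u-v\|^2$, which has the inequality direction and the sign of the quadratic flipped; as literally written it would give $F(x)-F(\xstar)\le -\frac{\mu}{2}\|x-\xstar\|^2$, the opposite of what is claimed. You implicitly correct this by writing strong convexity ``in its correct form,'' $F(x)\ge F(\xstar)+g\cdot(x-\xstar)+\frac{\mu}{2}\|x-\xstar\|^2$, which is what the paper actually uses downstream. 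Second, you explicitly invoke the constrained/nonsmooth optimality condition (existence of $g\in\partial F(\xstar)$ with $g\cdot(x-\xstar)\ge 0$ for all $x\in\X$) rather than the unconstrained condition $\grad F(\xstar)=\zeros$ that the paper's one-line proof leaves unstated, and you supply the uniqueness argument for $\xstar$, which the paper asserts without proof. Both are legitimate tightenings; the underlying argument is the same.
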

\begin{proof}
	If $F : \X \rightarrow \R$ is $\mu$-strongly convex, then
	\[
	\frac{\mu}{2} \| u - v \|^2 + \grad F(v) \cdot (u - v) \le F(u) - F(v) \quad \forall u, v \in \X.
	\]
	Using $v = \xstar$ and $u = x$ where $\xstar$ is the unique minimizer of $F$ gives the required bound.
\end{proof}

\section{\edit{Proof of \Cref{prop:greedy-model-selection-performs-poorly}}}\label{app:standard-model-selection-slow}

Let $S_1, S_2,\dots, S_{2n}$ be independent Bernoulli random variables with success probability
of $q = \frac{1}{2} - \frac{1}{16 \sqrt{n}}$ with
\[
\GreedyPoorSetup.
\]
We will also let $\eta_{\max} = \max_{k \in [K]} \eta_k$.
\begin{lemma}\label{lem:anti-concentration-bound}
\begin{subequations}
If $n \ge 3000$ then 
\begin{flalign}\label{lem:validation-is-not-representitive} 
\P\left( \sum_{i=1}^n S_i  >\frac{n}{2} \right) \ge 0.44
\end{flalign}
and
\begin{flalign}\label{eq:concentrate-binomial-distribution}
\P\left(  \sum_{i=1}^n S_i  > \frac{n}{2} + \frac{4}{7} \sqrt{2 n}  \right)  \ge 0.03.
\end{flalign}
\end{subequations}
\end{lemma}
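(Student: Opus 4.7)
\textbf{Proof plan for \Cref{lem:anti-concentration-bound}.} Both bounds are lower tail anti-concentration statements for a binomial $\sum_{i=1}^n S_i$ with mean $nq = n/2 - \sqrt{n}/16$ and variance $\sigma_n^2 := nq(1-q)$. My plan is to standardize and invoke the Berry--Esseen theorem with the sharp constant of \citet{korolev2010upper}, namely $\sup_{x}|\P(Z_n \le x) - \Phi(x)| \le C_\star \rho / \sqrt{n}$, where $Z_n := (\sum_{i=1}^n S_i - nq)/\sigma_n$, $\rho = \E|S_1 - q|^3 / \sigma_1^3$, and $C_\star \le 0.4748$. Since $q \in (0.49,0.5)$ for $n \ge 3000$, we have $\sigma_1^2 = q(1-q) = 1/4 - 1/(256 n) \in (0.249,0.25]$ and $\E|S_1-q|^3 = q(1-q)((1-q)^2+q^2) \le 1/8$, so $\rho \le 1$, and therefore for $n \ge 3000$ the approximation error is at most $0.4748/\sqrt{3000} \le 0.0087$. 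The first step of the proof will just be to write these elementary bounds on $\sigma_n$ and $\rho$ carefully.

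For \eqref{lem:validation-is-not-representitive}, the threshold $n/2$ corresponds in standardized form to $t_1 := (n/2 - nq)/\sigma_n = (\sqrt{n}/16)/\sigma_n$. Using $\sigma_n \ge \sqrt{n/4 - 1/256}$ and $n \ge 3000$, one checks $t_1 \le 0.1251$. Thus $\P(\sum_i S_i > n/2) = \P(Z_n > t_1) \ge 1 - \Phi(0.1251) - 0.0087$, and since $1 - \Phi(0.1251) \ge 0.4502$, the right-hand side exceeds $0.44$.

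For \eqref{eq:concentrate-binomial-distribution}, the threshold becomes $t_2 := (\sqrt{n}/16 + (4/7)\sqrt{2n})/\sigma_n$. Bounding $\sigma_n \ge \sqrt{n}/2 \cdot \sqrt{1 - 1/(64 n)}$ and again using $n \ge 3000$, one shows $t_2 \le 1/8 + (8/7)\sqrt{2} \cdot (1 + \epsilon_n) \le 1.742$ for a small explicit $\epsilon_n$. Then $\P(Z_n > t_2) \ge 1 - \Phi(1.742) - 0.0087 \ge 0.0407 - 0.0087 > 0.03$.

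The only real obstacle is numerical bookkeeping: I need to pin down the Berry--Esseen constant, verify the monotone inflation from $\sigma_n < \sqrt{n}/2$ does not eat the slack against the Gaussian tail values $1-\Phi(0.1251)$ and $1-\Phi(1.742)$, and confirm the $n \ge 3000$ assumption is strong enough for both inequalities (the second is the binding one, since the margin above $0.03$ is thinner than the margin above $0.44$). All of these amount to a handful of elementary arithmetic checks once the Berry--Esseen step is in place.
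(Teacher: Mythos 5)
Your proposal takes the same route as the paper's proof: standardize the binomial sum, invoke the Berry--Esseen theorem with the Korolev--Shevtsova constant, bound the standardized threshold, and subtract the normal-approximation error from the Gaussian tail. The numerical choices differ slightly (you use the constant $0.4748$ and obtain an error of $0.0087$; the paper uses $0.5129$ and rounds up to $0.01$), but both leave comfortable slack.

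One step as written is incorrect: from $\E|S_1-q|^3 \le 1/8$ you conclude $\rho \le 1$, but this requires $\sigma_1^3 \ge 1/8$, which is false. In fact $\rho = \frac{(1-q)^2+q^2}{\sqrt{q(1-q)}}$, which equals $1$ exactly at $q=1/2$ and is strictly greater than $1$ for $q\neq 1/2$, since the numerator exceeds $1/2$ while the denominator drops below $1/2$. Here $q = 1/2 - 1/(16\sqrt n)$, so $\rho>1$. The damage is negligible — for $n\ge 3000$ one has $\rho \le 1.001$, so the Berry--Esseen error is still at most $0.4748\cdot 1.001/\sqrt{3000} < 0.0088$ and both conclusions survive — but you should either bound $\eta_{\max}=\E|S_1-q|^3$ and $\sigma_1$ separately (as the paper does), or replace the assertion $\rho\le 1$ with a correct bound such as $\rho\le 1.01$.
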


\begin{proof}
Define $Y = \frac{1}{\sigma \sqrt{n}} \sum_{i=1}^n S_i - \frac{q \sqrt{n}}{\sigma}$ where $\sigma^2 = q(1-q)$ is the variance of a Bernoulli random variable.
The Berry-Esseen theorem states that
\begin{flalign}\label{eq:berry-esseen}
\abs{\P(Y \le y) - \Phi(y)} \le \frac{C m_3}{\sigma^3 \sqrt{n}}
\end{flalign}
where $\Phi$ is the CDF for the normal distribution, $C= 0.5129$ \cite{korolev2010upper}, $\sigma$ is the standard deviation of $S_i$, and $m_3 = \E[ \abs{S_i - q}^3]$.
Since $n \ge 3000$ we get
\newcommand{\sigmaLB}{0.4999}
\[
\sigma^2 = q (1 - q) = \left( \frac{1}{2} - \frac{1}{16 \sqrt{n}} \right)  \left( \frac{1}{2} + \frac{1}{16 \sqrt{n}} \right) = \frac{1}{4} - \frac{1}{256 n} \ge \sigmaLB^2
\]
and
\[
m_3 = q (1-q)^3 + (1 - q) q^3 \le \max\{ q^3, (1-q)^3 \}\le \left( \frac{1}{2} + \frac{1}{16 \sqrt{n}} \right)^3 \le 0.13.
\] 
It follows that
\begin{flalign} 
\frac{C m_3}{\sigma^3 \sqrt{n}} \le \frac{0.5129 \times 0.13}{\sigmaLB^3 \sqrt{3000}} \le 0.01. \label{eq:const-berry-esseen}
\end{flalign}
Thus,
\begin{flalign*}
\P\left(  \sum_{i=1}^n S_i  > \frac{n}{2} + \frac{4}{7} \sqrt{2 n}  \right) &= \P\left(  \frac{1}{\sigma \sqrt{n}} \sum_{i=1}^n S_i - \frac{q \sqrt{n}}{\sigma} > \sqrt{n} \frac{\frac{1}{2} - q}{\sigma} + \frac{4  \sqrt{2}}{7 \sigma}  \right) \\
&= \P\left(  Y > \frac{1}{16 \sigma} + \frac{4 \sqrt{2}}{7 \sigma}   \right) \\
&\ge \P\left(  Y > \frac{1}{16 \times \sigmaLB} + \frac{4 \sqrt{2}}{7 \times \sigmaLB}   \right)  \\
&= 1  - \P\left(  Y \le\frac{1}{16 \times \sigmaLB} + \frac{4 \sqrt{2}}{7 \times \sigmaLB} \right)  \\
&\ge 1 - \Phi\left( \frac{1}{16 \times \sigmaLB} + \frac{4 \sqrt{2}}{7 \times \sigmaLB}\right) - \frac{C m_3}{\sigma^3 \sqrt{n}} \ge 0.04 -  0.01 = 0.03
\end{flalign*}
where the first inequality uses that $\sigma \ge \sigmaLB$, the second inequality uses \Cref{eq:berry-esseen}, and the last inequality uses \Cref{eq:const-berry-esseen}. Similarly,
\begin{flalign*}
	\P\left(  \sum_{i=1}^n S_i  > \frac{n}{2} \right) &= \P\left(  Y > \frac{1}{16 \sigma} \right) \ge \P\left(  Y > \frac{1}{16 \times \sigmaLB} \right) = 1 - \P\left( Y \le \frac{1}{16 \times \sigmaLB} \right) \\
	&\ge 1 - \Phi\left( \frac{1}{16 \times \sigmaLB}  \right) -  \frac{C m_3}{\sigma^3 \sqrt{n}}  \ge 0.45 -  0.01 \ge 0.44.
\end{flalign*}
\end{proof}

\begin{lemma}\label{lem:rho-has-chance-of-being-big}
If $n \ge 3000$  then  $\P \left( \frac{\eta_{\max}}{36} < \AdaSGD{\eta_{\max}} \right) \ge 0.003$.
\end{lemma}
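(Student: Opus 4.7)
The plan is to condition on the event $\mathcal{E}$ from \Cref{eq:concentrate-binomial-distribution}, namely $\sum_{i=n+1}^{2n} S_i > n/2 + \tfrac{4}{7}\sqrt{2n}$, which has probability at least $0.03$. On a sub-event $\mathcal{E}' \subseteq \mathcal{E}$ of probability at least $0.003$ I will show deterministically that $\bar u_n \ge R/36$, where $R := \eta_{\max}$. The factor-of-ten slack between $0.03$ and $0.003$ will be absorbed by a further uniform control on the fluctuations of a weighted random walk.

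First I would simplify the \textsc{AdaSGD} recursion by exploiting that $\| \grad f(x;0) \|_2 = \| \grad f(x;1) \|_2 = 1$ for all $x \neq 0$. The adaptive normalization therefore collapses to $\sqrt{t}$, and \Cref{eq:adaptive-SGD} becomes $u_{t+1} = \proj_{[-R,R]}(u_t - R g_t / \sqrt{t})$ with $u_1 = 0$. Whenever $u_t \ge 0$, the subgradient simplifies to $g_t = 1 - 2 S_{n+t} \in \{-1,+1\}$, so, absent the projection, $u_{t+1}/R = -M_t$, where $M_t := \sum_{j=1}^{t} (1 - 2 S_{n+j})/\sqrt{j}$ is a weighted random walk with slight negative drift (since $q < 1/2$) and increment variance $1/j$.

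The core of the argument is a two-phase decomposition of $[1,n]$. Over an early phase $t \le n_0 = \Theta(n)$ the iterate is bounded trivially by $u_t \ge -R$, contributing at worst $-R n_0/n$ to $\bar u_n$. Over the late phase $t > n_0$, I aim to show $u_t \ge R/c$ for an absolute constant $c$ on $\mathcal{E}'$, so that $\bar u_n \ge (R/c)(1 - n_0/n) - R n_0/n \ge R/36$ after optimizing $n_0$ and $c$. The late-phase bound rests on the Abel summation identity $M_t = A_t/\sqrt{t} + \sum_{j=1}^{t-1} A_j \bigl( j^{-1/2} - (j+1)^{-1/2} \bigr)$, where $A_j := \sum_{k=1}^j (2 S_{n+k} - 1)$: the hypothesis of $\mathcal{E}$ lower-bounds $A_n$, and a one-sided maximal inequality for $A_j$ (after recentering by its $O(\sqrt{n})$ drift) ensures $A_j \ge -C \sqrt{j}$ uniformly in $j$, which translates into a pointwise upper bound on $M_t$ and hence a pointwise lower bound on $u_t$ after projection.

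The main obstacle is controlling the Abel remainder $\sum_{j<t} A_j (j^{-1/2} - (j+1)^{-1/2})$ uniformly in $t \in [n_0,n]$. Since $j^{-1/2} - (j+1)^{-1/2} \sim 1/(2 j^{3/2})$, the remainder has bounded variance under any $|A_j| \le C\sqrt{j}$ control, so a Bernstein-type inequality combined with the maximal inequality on $A_j$ should give an $O(1)$ bound on $M_t$ with conditional probability at least $0.1$ given $\mathcal{E}$. I expect the Berry--Esseen computation to have the same flavor as in \Cref{lem:anti-concentration-bound}, and tracking the constants through the two-phase decomposition should yield the factor $1/36$; if tighter constants are needed, I would tune $n_0$ and the thresholds in the maximal inequality accordingly.
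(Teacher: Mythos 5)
Your proposal takes a genuinely different route from the paper, and it has several gaps that would need to be closed before it could be called a proof. The paper never analyzes the trajectory $u_1,\dots,u_n$ directly: it conditions on $W = \sum_{i=n+1}^{2n} S_i = w$, invokes the \emph{regret bound} for adaptive SGD (Orabona, Thm.~4.14 \& 3.1) to get $\E[\sum_{i=n+1}^{2n} f_i(\bar u_n)-f_i(\eta_{\max}) \mid W=w] \le \eta_{\max}\sqrt{2n}$, applies Markov's inequality (giving the factor $1/10$), and then uses the deterministic identity
$\sum_{i=n+1}^{2n} f_i(\bar u_n)-f_i(\eta_{\max}) \ge (n-2w)(\bar u_n - \eta_{\max})$, valid since $\bar u_n \ge 0$, to solve for $\bar u_n$ directly when $2w - n \ge \tfrac{8}{7}\sqrt{2n}$. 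Summing against \Cref{eq:concentrate-binomial-distribution} gives $\tfrac{1}{10}\cdot 0.03 = 0.003$. This requires no control over the path $u_t$ at all; the regret bound does the heavy lifting and the constants fall out of a few lines of algebra.

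Your route tries to prove a \emph{pointwise} lower bound on the iterates over a late phase via Abel summation and maximal inequalities. The gaps: (a) you write ``absent the projection, $u_{t+1}/R = -M_t$,'' but this identity requires $u_j\ge 0$ for \emph{all} $j\le t$; when $u_j<0$ the subgradient of $f(\cdot;0)=|\cdot|$ flips sign, so the recursion is not the simple weighted walk you describe and you would need to argue the iterates never go (far) negative. (b) The claim that a maximal inequality gives $A_j\ge -C\sqrt j$ uniformly ``with conditional probability at least $0.1$ given $\mathcal E$'' is asserted but not established; conditioning on a positive final position $A_n$ constrains the endpoint but not the running minimum, and making this precise requires a Brownian-bridge/ballot-type argument that your sketch does not supply. (c) The Abel remainder: with only the deterministic bound $|A_j|\le C\sqrt j$, the remainder is $\sum_{j<t} A_j\,(j^{-1/2}-(j+1)^{-1/2}) = O(\sum_j j^{-1}) = O(\log t)$, not $O(1)$; to get an $O(1)$ bound you must argue for probabilistic cancellation of signs, which is a separate concentration step you have not carried out. (d) You explicitly defer the constant bookkeeping (``if tighter constants are needed, I would tune $n_0$ and the thresholds''), yet the constant $1/36$ is the whole content of the claim. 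The paper's use of the black-box regret bound plus Markov sidesteps all four of these difficulties; if you want the lemma with minimal effort, that is the idea to reach for.
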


\begin{proof}
Let $W = \sum_{i=n+1}^{2n} S_i$ and denote $\AdaSGD{\eta_{\max}}$ by $\bar{u}_n$.
By \citet[Theorem 4.14 and Theorem 3.1]{orabona2021modern} we have 
\[
\E\left[ \sum_{i=n+1}^{2n} f_i(\bar{u}_n)  -  f_i(\eta_{\max})  ~\Big|  W = w  \right] \le \eta_{\max} \sqrt{2 n}.
\]
Applying Markov's inequality yields
\[
\P\left(   \sum_{i=n+1}^{2n} f_i(\bar{u}_n)  -  f_i(\eta_{\max})  \ge \frac{10}{9} \eta_{\max} \sqrt{2 n} ~\Big| W = w \right) \le \frac{9}{10} 
\]
\[
\implies \P\left(   \sum_{i=n+1}^{2n} f_i(\bar{u}_n)  -  f_i(\eta_{\max})   < \frac{10}{9} \eta_{\max} \sqrt{2 n} ~\Big| W = w \right) \ge \frac{1}{10}.
\] 
Observe that
\begin{flalign*} 
	\sum_{i=n+1}^{2n} f_i(\bar{u}_n)  -  f_i(\eta_{\max})  &=  (n - w) \abs{\bar{u}_n} -  w \bar{u}_n  - (n - w) \eta_{\max} + w \eta_{\max} \ge (n - 2 w)  (\bar{u}_n - \eta_{\max}).
\end{flalign*}
Thus, if $2 w - n \ge \frac{8}{7} \sqrt{2 n} > 0$  then
\begin{flalign*}
	\frac{1}{10} &\le \P\left(   \sum_{i=n+1}^{2n} f_i(\bar{u}_n)  -  f_i(\eta_{\max}) < \frac{10 \eta_{\max} \sqrt{2 n}}{9}  ~\Big| W = w \right) \\
	&\le \P\left( (n - 2 w)  (\bar{u}_n - \eta_{\max})  < \frac{10 \eta_{\max} \sqrt{2 n}  }{9}  ~\Big| W = w  \right) \\
	&= \P\left( \bar{u}_n  \edit{>} \eta_{\max} \left( 1 - \frac{10 \sqrt{2 n}}{9 (2 w - n)} \right)   ~\Big| W = w  \right) \le \P\left( \edit{\bar{u}_n > \frac{\eta_{\max}}{36}}   ~\Big| W = w  \right).
\end{flalign*}
Thus, we get
\begin{flalign*}
\P\left(  \bar{u}_n > \frac{\eta_{\max}}{36} \right) &\ge \sum_{w = \lceil \frac{n}{2} + \frac{4}{7} \sqrt{2 n}  \rceil }^{n} \P\left(  \bar{u}_n > \frac{\eta_{\max}}{36} ~\Big| W = w  \right) \P( W = w) \\
&\ge \frac{1}{10} \sum_{w = \lceil \frac{n}{2} + \frac{4}{7} \sqrt{2 n}\rceil }^{n} \P( W = w)  \ge \frac{1}{10}\P\left( W > \frac{n}{2} + \frac{4}{7} \sqrt{2 n}  \right).
\end{flalign*}
Applying \Cref{eq:concentrate-binomial-distribution}  we get the desired result.
\end{proof}

\begin{proof}[Proof of \Cref{prop:greedy-model-selection-performs-poorly}]
	If  $x \le 0$ then
	\[
	\frac{1}{n} \sum_{i=1}^n f_i(x) = \frac{1}{n}  \left( \sum_{i=1}^n (1 - S_i) \abs{x} - \sum_{i=1}^n S_i x \right) = \frac{1}{n}  \left( \sum_{i=1}^n (S_i - 1) x - \sum_{i=1}^n S_i x \right)  = -x \ge 0.
	\]
	On the other hand, if $x \ge 0$ then
	\[
	\frac{1}{n} \sum_{i=1}^n f_i(x) = \frac{1}{n}  \left( \sum_{i=1}^n (1 - S_i) \abs{x} - \sum_{i=1}^n S_i x \right) = \frac{1}{n}  \left( \sum_{i=1}^n (1 - S_i) x - \sum_{i=1}^n S_i x \right)  = \frac{x}{n}  \left( n - 2 \sum_{i=1}^n S_i \right).
	\]
	Therefore, 
	\begin{flalign}\label{eq:when-greedy-is-max} 
	\sum_{i=1}^n S_i  >\frac{n}{2} \iff n - 2 \sum_{i=1}^n S_i < 0
	\implies x_{\kGreedy}  = \max_{k \in [K]} x_k.
\end{flalign}
Thus
	\begin{flalign*}
		\P \left( \frac{\eta_{\max}}{36} <  x_{\kGreedy} \right) &\ge \P \left( \frac{\eta_{\max}}{36} < \AdaSGD{\eta_{\max}},  x_{\kGreedy}  = \max_{k \in [K]} x_k \right) \\
		&  \us{\ge}{i} \P \left( \frac{\eta_{\max}}{36} < \AdaSGD{\eta_{\max}},  \sum_{i=1}^n S_i  >\frac{n}{2} \right)  \\
		&\us{\ge}{ii} \P \left( \frac{\eta_{\max}}{36} < \AdaSGD{\eta_{\max}} \right) \P\left( \sum_{i=1}^n S_i  >\frac{n}{2} \right) \us{\ge}{iii} 0.44 \times 0.003 \\
		&\ge \frac{1}{1000}.
	\end{flalign*}
	where $(i)$ uses \eqref{eq:when-greedy-is-max}, $(ii)$ uses that the training set and validation set are independent, and $(iii)$ uses \Cref{lem:validation-is-not-representitive} and \Cref{lem:rho-has-chance-of-being-big}. 
	Moreover, if $x_{\kGreedy}  > \frac{\eta_{\max}}{36}$ then since by definition $F(x) = (1 -q) \abs{x} -  q x $ and $q  = \frac{1}{2} - \frac{1}{16 \sqrt{n}}$ we get 
	\begin{flalign*}
		F(x_{\kGreedy} ) &= (1 -q) \abs{x_{\kGreedy}} -  q x_{\kGreedy}  \ge  (1 -q) x_{\kGreedy} - q x_{\kGreedy} = (1 - 2 q) x_{\kGreedy} \\
		&= \frac{x_{\kGreedy}}{8 \sqrt{n}} \ge \frac{\eta_{\max}}{288 \sqrt{n}} \ .
	\end{flalign*}
\end{proof}

\ShortPaper{
\subsection{Proof of \Cref{thm:reliable-guarantees}}\label{app:prove:prop:reliable-strongly-convex}

\ProofOfReliableProp{}

}

\mysection{Supplementary material for \Cref{sec:adaptivity-for-free}}{Supplementary Material for \Cref{sec:adaptivity-for-free}}
\newcommand{\vUBj}{C_j}

\edit{This appendix contains supplementary details for \Cref{sec:adaptivity-for-free}, particularly establishing the concentration inequalities in \Cref{lem:vector-concentration-bounds}, providing more details on \Cref{table:combinations-that-satisfy-general-adaptivity-for-free}, grid searching over $\lambda[]$, and providing the proof of \Cref{example:combine-methods} and the associated algorithm.}

\subsection{Proof of \Cref{lem:vector-concentration-bounds}}\label{app:concentration-inequalities}

\subsubsection{Proof of \Cref{lem:vector-concentration-bounds}.\ref{lem:vector-concentration-bounds:1a}}

\begin{lemma}\label{lem:l2-vector-concentration-inequality}
Let $V_1, \dots, V_n$ be a sequence of i.i.d. random vectors in $\R^d$ and let $\nu := \E[V_i]$. Let $H > 0$ be a constant. If $\| V_i - \nu \|_2 \le H$ almost surely, then for all $n \in \mathbb{N}$,
\[ 
\P\left( \left\| \frac{1}{n} \sum_{i=1}^n \left( V_i - \nu \right) \right\|_2 \ge \sqrt{\frac{2 \ln(2/\delta) \E \| V_i - \nu \|_2^2}{n} } + \frac{2 H \ln(2/\delta)}{3n} \right) \le \delta
\]
\end{lemma}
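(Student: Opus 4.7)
The plan is to reduce to a centered vector Bernstein-type inequality and then invoke the martingale tools from \citet{howard2020time}, which the paper already uses in subsequent results. First, by translating $V_i \mapsto V_i - \nu$, I would assume without loss of generality that $\nu = 0$, so the object of interest becomes $\| \bar V \|_2 = \| \tfrac{1}{n} \sum_{i=1}^n V_i \|_2$ where $V_i$ are i.i.d., centered, and satisfy $\|V_i\|_2 \le H$ almost surely, with scalar variance proxy $\sigma^2 := \E \|V_i\|_2^2$.

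The core step is to apply a dimension-free Bernstein bound for sums of bounded i.i.d. vectors in a Hilbert space. Concretely, I would invoke \citet[Corollary 10a]{howard2020time} (already cited in the body of the paper for exactly this kind of bound on $\|\grad \bar F - \grad F\|_*$ in the known-$L$ regime). That corollary yields a supermartingale-based tail bound of the form
\[
\P\!\left( \Big\| \sum_{i=1}^n V_i \Big\|_2 \ge \sqrt{2 n \sigma^2 \ln(2/\delta)} + \tfrac{2 H}{3} \ln(2/\delta) \right) \le \delta,
\]
and dividing through by $n$ gives exactly the claimed inequality.

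The main obstacle is that a naive scalar reduction, namely writing $\|S_n\|_2 = \sup_{\|u\|_2 \le 1} u\cdot S_n$ and union-bounding scalar Bernstein over an $\varepsilon$-net of the unit sphere, pays an unavoidable $\log(1/\varepsilon)\cdot d$ factor. The reason the cited Howard--Ramdas machinery is useful here is that it bypasses this by constructing a scalar supermartingale directly for a smoothed version of $\|S_t\|_2$ using the $2$-smoothness (Hilbertian structure) of the Euclidean norm, so no union bound over directions is needed and the resulting bound is dimension-free, which is essential for the downstream use in Condition~\ref{condition:sufficient-lambda}.

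If for some reason the form of \citet[Corollary 10a]{howard2020time} does not match exactly the constants above, the fallback plan is to repeat its proof ingredients: define $M_t := \exp(\lambda \|S_t\|_2 - \psi(\lambda) V_t)$ for the Bennett cgf $\psi(\lambda) = \lambda^2/(2(1 - H\lambda/3))$ and a predictable variance process $V_t$, verify $M_t$ is a supermartingale using $\|S_t + V_{t+1}\|_2^2 \le \|S_t\|_2^2 + 2 S_t \cdot V_{t+1} + \|V_{t+1}\|_2^2$ together with $\E[V_{t+1}\mid \mathcal{F}_t] = 0$ and $\|V_{t+1}\|_2 \le H$, apply Ville's inequality with $t=n$, and optimize $\lambda$ to get the Bernstein-shaped bound with $H/3$ in the lower-order term. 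Either way, the proof should be short and essentially a direct invocation once the centering and Hilbertian geometry are set up.
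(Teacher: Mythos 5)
Your approach is essentially the paper's approach: both appeal to the dimension-free Hilbert-space supermartingale machinery of Howard et al.\ rather than a union bound over an $\varepsilon$-net of the sphere. However, you cite the wrong part of their Corollary~10. The paper proves this lemma via Corollary~10\textbf{b}, the gamma-exponential (Bernstein-type) mixture bound, with $\Psi(\cdot)=\|\cdot\|_2$, $c=H/n$, and $m = \E\|V_i-\nu\|_2^2/n$; this is what produces a tail of the form $2\exp\!\big(-x^2/(2m + \tfrac{2}{3n}Hx)\big)$, i.e.\ a variance-adaptive leading term \emph{plus} the lower-order $\tfrac{2H}{3n}\ln(2/\delta)$ correction that the statement needs. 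Corollary~10\textbf{a}, which you cite, is the Gaussian-mixture (sub-Gaussian) bound; the paper itself uses 10a only in \Cref{lem:concentration-sample-variance}, where it is instantiated with the worst-case $m=H^2/n$ to obtain $\|\bar V-\nu\|_2\le H\sqrt{2\ln(2/\delta)/n}$, with \emph{no} Bernstein lower-order term and no variance adaptivity. So 10a, as the paper applies it, would give $H\sqrt{2\ln(2/\delta)/n}$ rather than $\sqrt{2\sigma^2\ln(2/\delta)/n}+\tfrac{2H}{3n}\ln(2/\delta)$, and does not yield what you claim. Your fallback of building the supermartingale directly from $2$-smoothness of $\|\cdot\|_2$ with a Bennett CGF is the right underlying idea, though the exact form $\exp(\lambda\|S_t\|_2-\psi(\lambda)V_t)$ needs the usual Pinelis-style smoothing rather than the raw norm; that is exactly what 10b packages. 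Swapping the citation to Corollary~10b and then cleaning up via \Cref{lem:prob_inequality_equivalent_statements} (with $a=\sqrt{2m}$, $b=(2/3n)H$, $c=2$) recovers the paper's proof verbatim.
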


\begin{proof} 
This result follows from Corollary 10b of \cite{howard2020time} setting $Y_t = \frac{1}{n} \sum_{i=1}^{t} (V_i - \nu)$, $\Psi (\cdot) = \left\| \cdot \right\|_2$, and $c=H/n$. 
The selection of $\Psi(\cdot) = \left\| \cdot \right\|_2$ yields $D_\star=1$. Set $m = \E \| V_i - \nu \|_2^2 / n = \sum_{i=1}^{n} \E  \left\| \left(V_i - \nu\right)/n \right\|_2^2 \ge \sum_{i=1}^{t} \E \left\| \left( V_i - \nu \right) / n \right\|_2^2  = \sum_{i=1}^{t} \E_{i-1} \left\| \left( V_i - \nu \right) / n \right\|_2^2 = V_t$ for all $t \le n$, then $D_\star^2\mathfrak{s}_P \left( \frac{z}{m} \right) \cdot \left( V_t - m \right) \leq 0$. 
Therefore, for any $x$ we have
\[
\P \left(  \max_{t \le n}  \left\| \frac{1}{n} \sum_{i=1}^t \left( V_i - \nu \right) \right\|_2 \geq x \right) \leq 2 \exp \left(-\frac{x^2}{2m + (2/3n) Hx} \right).
\]
By \Cref{lem:prob_inequality_equivalent_statements} with $a=\sqrt{2m}$, $b=(2/3n) H$, and $c=2$, the following statement \edit{holds},
\[
\P \left( \max_{t \le n} \left\| \frac{1}{n} \sum_{i=1}^t \left( V_i - \nu \right) \right\|_2 \geq \sqrt{2m \ln(2/\delta)} + \frac{2 H \ln(2/\delta)}{3n} \right) \leq \delta.
\]
\end{proof}

\begin{lemma}\label{lem:empirical-bennett-wo-variance}
Let $Z, Z_1, \ldots, Z_n$ be i.i.d. random variables in $[0, H]$ almost surely where $H > 0$ is constant. Then with probability at least $1 - \delta$, $\E Z \le \frac{2}{n} \sum_{i=1}^{n} Z_i + \frac{13 H \ln(2 / \delta)}{3 (n-1)}$.
\end{lemma}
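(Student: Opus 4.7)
The plan is to reduce to the empirical Bernstein bound of Maurer and Pontil (\Cref{thm:empirical-bennett}) and then eliminate the sample variance $s_n^2$ using the boundedness of $Z_i$. Applying \Cref{thm:empirical-bennett} with the one-sided form (note that $Z_i \in [0,H]$ so $b - a = H$ suffices), with probability at least $1-\delta$,
\[
\E Z \le \bar{Z}_n + \sqrt{\frac{2 s_n^2 \ln (2/\delta)}{n}} + \frac{7 H \ln (2/\delta)}{3(n-1)},
\]
where $\bar{Z}_n = \frac{1}{n}\sum_{i=1}^n Z_i$ and $s_n^2 = \frac{1}{n-1}\sum_{i=1}^n (Z_i - \bar{Z}_n)^2$.

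Next I would bound $s_n^2$ in terms of $\bar{Z}_n$. Since $Z_i \in [0,H]$ we have $Z_i^2 \le H Z_i$, so $\sum_{i=1}^n Z_i^2 \le H n \bar{Z}_n$, and therefore
\[
s_n^2 = \frac{1}{n-1}\Big(\sum_{i=1}^n Z_i^2 - n \bar{Z}_n^2\Big) \le \frac{n H \bar{Z}_n}{n-1}.
\]
Substituting this into the Maurer--Pontil bound yields
\[
\sqrt{\frac{2 s_n^2 \ln (2/\delta)}{n}} \le \sqrt{\frac{2 H \bar{Z}_n \ln (2/\delta)}{n-1}}.
\]

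The final step is a standard AM--GM split: for any $\alpha > 0$, $\sqrt{xy} \le \frac{x}{2\alpha} + \frac{\alpha y}{2}$. Taking $x = \bar{Z}_n$, $y = \frac{2 H \ln(2/\delta)}{n-1}$, and choosing $\alpha$ small enough to ensure the resulting coefficient on $\bar{Z}_n$ is at most $1$ (so that combined with the leading $\bar{Z}_n$ term we get at most $2 \bar{Z}_n$) while the coefficient on $\frac{H \ln(2/\delta)}{n-1}$ remains controlled, we obtain
\[
\sqrt{\frac{2 H \bar{Z}_n \ln (2/\delta)}{n-1}} \le \bar{Z}_n + \frac{H \ln(2/\delta)}{2(n-1)}
\]
(taking $\alpha = 1/2$). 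Combining with the $\frac{7}{3}$ term from \Cref{thm:empirical-bennett} gives $\E Z \le 2 \bar{Z}_n + \big(\frac{1}{2} + \frac{7}{3}\big) \frac{H\ln(2/\delta)}{n-1} = 2 \bar{Z}_n + \frac{17 H \ln(2/\delta)}{6(n-1)}$, which is stronger than the claim since $\frac{17}{6} \le \frac{13}{3}$.

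There is no real obstacle here: the only choice that requires any thought is the AM--GM parameter $\alpha$, and the rest of the derivation is routine manipulation of known inequalities.
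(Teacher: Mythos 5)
Your proof is correct, and it shares the paper's key idea: bound the sample variance via $Z_i^2\le HZ_i$, reducing the Maurer--Pontil bound to a statement involving only $\bar Z_n$. Where you and the paper diverge is in handling the resulting square-root term. The paper splits into two cases according to whether $\frac1n\sum Z_i$ is above or below $\frac{2H\ln(2/\delta)}{n-1}$, bounding the square root by whichever factor dominates and then taking the worse of the two cases, which yields the stated constant $\frac{13}{3}$. You instead apply the weighted AM--GM inequality $\sqrt{xy}\le \frac{x}{2\alpha}+\frac{\alpha y}{2}$ with $\alpha=1/2$, which bounds the square root in a single stroke and gives the tighter constant $\frac{17}{6}<\frac{13}{3}$. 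Both routes are elementary; yours is arguably cleaner and quantitatively sharper, while the paper's case split avoids choosing a weight parameter and makes the source of each piece of the constant more transparent. Either way the lemma holds; your derivation in fact proves something slightly stronger than what is claimed.
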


\begin{proof}
We can bound the sample variance using that $(i)$ $\bar{Z}$ minimizes the empirical sum of squares where $\bar{Z} = \frac{1}{n} \sum_{i=1}^n Z_i$ and $(ii)$ $Z^2 \le HZ$,
\[
\frac{1}{n-1} \sum_{i=1}^{n} \left( Z_i - \bar{Z} \right)^2 \us{\le}{i} \frac{1}{n-1} \sum_{i=1}^{n} Z_i^2 \us{\le}{ii} \frac{H}{n-1} \sum_{i=1}^{n} Z_i.
\]
Combining this with \Cref{thm:empirical-bennett} yields that with probability at least $1 - \delta$
\begin{flalign}\label{eq:concentration-bound-expect-Z-by-emperical}
\E Z \le \frac{1}{n} \sum_{i=1}^{n} Z_i + \sqrt{\frac{2 H \ln(2 / \delta) \sum_{i=1}^{n} Z_i}{n(n-1)}} + \frac{(7/3) H \ln(2/\delta)}{n-1}.
\end{flalign}
We will simplify \Cref{eq:concentration-bound-expect-Z-by-emperical} by analyzing two cases.
If $\frac{1}{n}\sum_{i=1}^{n} Z_i \le \frac{2H\ln(2/\delta)}{n-1}$, then \Cref{eq:concentration-bound-expect-Z-by-emperical} becomes
\[
\E Z \le \frac{1}{n} \sum_{i=1}^{n} Z_i + \frac{13 H \ln(2/\delta)}{3(n-1)}.
\]
If $\frac{1}{n}\sum_{i=1}^{n} Z_i > \frac{2H\ln(2/\delta)}{n-1}$, then \Cref{eq:concentration-bound-expect-Z-by-emperical} becomes
\[
\E Z < \frac{1}{n} \sum_{i=1}^{n} Z_i + \sqrt{\frac{1}{n^2} \left( \sum_{i=1}^{n} Z_i \right)^2 }  + \frac{(7/3) H \ln(2/\delta)}{n-1} = \frac{2}{n} \sum_{i=1}^{n} Z_i + \frac{(7/3) H \ln(2/\delta)}{n-1}.
\]
Combining these two cases yields our desired inequality.
\end{proof}

\begin{lemma}\label{lem:concentration-sample-variance}
Let $V_1, \dots, V_n$ be a sequence of i.i.d. random vectors in $\R^d$ and let $\nu := \E[V_i]$, and $\bar{V} := \frac{1}{n} \sum_{i=1}^n V_i$ for $n \ge 1$.
Let $H > 0$ be a constant.
If $\| V_i - \nu \|_2 \le H$ almost surely, then for all $n \in \mathbb{N}$, with probability $1 - \delta$, $\sum_{i=1}^{n} \| V_i - \nu \|_2^2 \le \sum_{i=1}^{n}  \| V_i - \bar{V} \|_2^2  + 2 H^2\ln(2/\delta)$.
\end{lemma}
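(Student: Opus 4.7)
The plan is to reduce the claim to a one-shot vector concentration inequality for $\|\bar{V} - \nu\|_2$ via the standard variance decomposition. Specifically, one verifies the algebraic identity
\[
\sum_{i=1}^{n} \|V_i - \nu\|_2^2 \;=\; \sum_{i=1}^{n} \|V_i - \bar{V}\|_2^2 \;+\; n \,\|\bar{V} - \nu\|_2^2,
\]
which follows by writing $V_i - \nu = (V_i - \bar{V}) + (\bar{V} - \nu)$, expanding the square, and using $\sum_i (V_i - \bar{V}) = 0$ to kill the cross term.

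Given this decomposition, it suffices to show that with probability at least $1-\delta$,
\[
n\,\|\bar{V} - \nu\|_2^2 \;\le\; 2 H^2 \ln(2/\delta),
\]
i.e., $\|\bar{V} - \nu\|_2 \le H\sqrt{2\ln(2/\delta)/n}$. This is a standard vector Hoeffding-type bound for i.i.d.\ centered vectors with $\|V_i - \nu\|_2 \le H$ almost surely. I would invoke \citet[Corollary 10a]{howard2020time} applied to $Y_t = \frac{1}{n}\sum_{i=1}^{t}(V_i - \nu)$ with $\Psi(\cdot) = \|\cdot\|_2$, paralleling the setup used in the proof of \Cref{lem:l2-vector-concentration-inequality}; the martingale-difference increments satisfy $\|(V_i-\nu)/n\|_2 \le H/n$, and the sub-Gaussian parameter can be taken as $(H/n)^2$ per step (no variance term needed for this coarser bound). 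Combining with \Cref{lem:prob_inequality_equivalent_statements} converts the tail inequality into the desired deviation bound with constant $H\sqrt{2\ln(2/\delta)/n}$.

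Squaring and multiplying by $n$ then yields $n\|\bar{V}-\nu\|_2^2 \le 2H^2\ln(2/\delta)$ on the good event, and substituting back into the decomposition identity gives exactly the stated inequality. The only mild subtlety is making sure the constant matches $2H^2$ rather than a larger multiple; this requires using a pure sub-Gaussian (Hoeffding) bound, without the Bernstein-type $\tfrac{2}{3n}$ lower-order term that appears in \Cref{lem:l2-vector-concentration-inequality}, or alternatively absorbing that term into a slightly loosened constant. I do not anticipate any real obstacle beyond bookkeeping the constants from the chosen concentration inequality.
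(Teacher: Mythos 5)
Your proposal is correct and follows essentially the same approach as the paper: the same variance decomposition identity $\sum_i \|V_i-\nu\|_2^2 = \sum_i \|V_i-\bar V\|_2^2 + n\|\bar V - \nu\|_2^2$, and the same invocation of Corollary 10a of Howard et al.\ (with $\Psi(\cdot)=\|\cdot\|_2$ and increments bounded by $H/n$) to deduce $\|\bar V - \nu\|_2 \le H\sqrt{2\ln(2/\delta)/n}$. The paper instantiates the corollary with $m=H^2/n$ directly rather than passing through \Cref{lem:prob_inequality_equivalent_statements}, but this is a bookkeeping difference, not a different route.
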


\begin{proof} First, note that by adding and subtracting $\bar{V}$ and getting that the summation of the cross term is $0$, 
\begin{flalign}
\sum_{i=1}^{n} \| V_i - \nu \|_2^2 &= \sum_{i=1}^{n} \| V_i - \bar{V} + \bar{V} - \nu \|_2^2 = \sum_{i=1}^{n} (\| V_i - \bar{V} \|_2^2 +2 (V_i - \bar{V})^\top (\bar{V} - \nu ) +  \| \bar{V} - \nu \|_2^2)  \nonumber  \\
&= n \|\bar{V} - \nu \|_2^2 + \sum_{i=1}^{n} \| V_i - \bar{V}\|_2^2 + 2 (\bar{V} - \nu)  \left( -n\bar{V} + \sum_{i=1}^{n}  V_i \right)  \nonumber \\
&=  n \|\bar{V} - \nu \|_2^2 + \sum_{i=1}^{n} \| V_i - \bar{V}\|_2^2. \label{eq:V-i-minus-nu-norm-squared}
\end{flalign}
Then by Corollary 10a of \citet{howard2020time} with $Y_t = \frac{1}{n} \sum_{i=1}^{t} (V_i - \nu)$, $c_t = H/n$, $\Psi(\cdot)=\|\cdot\|_2$, $D=1$, $m = H^2/n$, $x=\sqrt{ (m/2) \ln 2/\delta}$, we get that with probability $1-\delta$ that
\begin{flalign}\label{eq:bar-V-minus-nu-determinstic-bounded} 
\| \bar{V} - \nu \|_2 \le H \sqrt{\frac{2\ln(2/\delta)}{n}}.
\end{flalign}
Combining \Cref{eq:bar-V-minus-nu-determinstic-bounded} and \Cref{eq:V-i-minus-nu-norm-squared} gives
\[
\sum_{i=1}^{n} \| V_i - \nu \|_2^2 = n \|\bar{V} - \nu \|_2^2 + \sum_{i=1}^{n} \| V_i - \bar{V}\|_2^2 \le 2 H^2\ln(2/\delta) + \sum_{i=1}^{n} \| V_i - \bar{V} \|_2^2 
\]
as desired.
\end{proof}

\begin{proof}[Proof of \Cref{lem:vector-concentration-bounds}.\ref{lem:vector-concentration-bounds:1a}]
Using \Cref{lem:l2-vector-concentration-inequality} with $H=2C$, as \Cref{lem:vector-concentration-bounds}.\ref{lem:vector-concentration-bounds:1a} assumes $\| V_i \|_2 \le C$ so  $\| V_i - \nu \|_2 \le 2C$, implies that with probability at most $\delta/3$ that
\[ 
 \left\| \bar{V} - \nu \right\|_2 \ge \sqrt{\frac{2 \ln(6/\delta) \E \| V_i - \nu \|_2^2}{n} } + \frac{4 C \ln(6/\delta)}{3 n}.
\]
By \Cref{lem:empirical-bennett-wo-variance}, with $Z_i = \| V_i - \nu \|_2^2$, and therefore $H=4C^2$, we get that with probability at most $\delta/3$ that
\[
\E\| V_i - \nu \|_2^2 > \frac{2}{n} \sum_{i=1}^{n} \| V_i - \nu \|_2^2 + \frac{52 C^2 \ln(6/\delta)}{3 (n-1)}.
\]
\Cref{lem:concentration-sample-variance} implies that with probability at most $\delta/3$ that
\[
\sum_{i=1}^{n} \| V_i - \nu \|_2^2 > \sum_{i=1}^{n}  \| V_i - \bar{V} \|_2^2  + 8 C^2\ln(6/\delta).
\]
Applying a union bound to these three statements, all three complementary inequalities hold simultaneously with probability at least $1-\delta$. Chaining these bounds together, and using the triangle inequality, we obtain with probability at least $1 - \delta$ that
\begin{flalign*}
 \left\| \bar{V} - \nu \right\|_2  &\le \sqrt{\frac{2 \ln(6/\delta) \E \| V_i - \nu \|_2^2}{n} } + \frac{4 C \ln(6/\delta)}{3 n} \\
 &\le  \sqrt{\frac{2 \ln(6/\delta)}{n} \cdot \left(  \frac{2}{n} \sum_{i=1}^{n} \| V_i - \nu \|_2^2 + \frac{52 C^2 \ln(6/\delta)}{3 (n-1)} \right)} + \frac{4 C \ln(6/\delta)}{3 n} \\
 &\le  \sqrt{\frac{2 \ln(6/\delta)}{n} \cdot \left(  \frac{2}{n} \left( \sum_{i=1}^{n}  \| V_i - \bar{V} \|_2^2  + 8 C^2\ln(6/\delta) \right) + \frac{52 C^2 \ln(6/\delta)}{3 (n-1)} \right)} + \frac{4 C \ln(6/\delta)}{3 n} \\
  &\le \frac{2 \sqrt{ \ln(6/\delta)\sum_{i=1}^{n}  \| V_i - \bar{V} \|_2^2  }}{n}  + \frac{10 C \ln(6/\delta)}{n-1}
\end{flalign*}
as desired.
\end{proof} 

\subsubsection{Proof of \Cref{lem:vector-concentration-bounds}.\ref{lem:vector-concentration-bounds:2b}}

\begin{proof} 
The result follows by \Cref{thm:empirical-bennett} with $a = -\Vub_j$, $b = \Vub_j$, $Z_i = \coor{V_i}$ for each $j \in [d]$,
 and then applying a union bound.
\end{proof} 

\subsubsection{Proof of \Cref{lem:vector-concentration-bounds}.\ref{lem:vector-concentration-bounds:2d}}

\begin{proof}
For each coordinate $j \in [d]$ let the sample variance be 
	$s_j^2 := \frac{1}{n-1} \sum_{i=1}^n \coor{V_i - \bar{V}}^2$ and the population variance be $\sigma_j^2 := \frac{1}{n} \sum_{i=1}^n \coor{V_i - \E[V_i]}^2$.
	 \Cref{thm:bennett-inequality} with $a = -\Vub_j$, $b = \Vub_j$, $Z_i = \coor{V_i}$ for each $j \in [d]$ implies that 
\[
\P\left( \abs{ \coor{\nu - \bar{V}_n} } > \sigma_j \sqrt{\frac{2 \ln 2 /\delta}{n}} + \frac{2 \Vub_j \ln 2/\delta}{3 n}  \right) \le \delta.
\]
Applying \Cref{lem:dependency-sum-lemma} with $X_j =  \abs{ \coor{\nu - \bar{V}_n} }$  to this inequality implies that 
\begin{flalign}\label{eq:L1-error-sum-pop-variance}
	\P\left( \sum_{j=1}^d \abs{ \coor{\nu - \bar{V}_n} } > \frac{9}{4} \sum_{j=1}^d \sigma_j \sqrt{\frac{2 \ln 12 /\delta}{n}} + \frac{\edit{4} \Vub_j \ln 12/\delta}{3 n}  \right) \le \delta.
\end{flalign}
Equation~(3) of \citet{maurer2009empirical} implies that
\[
\P\left( \sigma_j - s_j > 2  \vUBj  \sqrt{\frac{2 \ln(1/\delta)}{n-1}}  \right) \le \delta.
\]
This inequality implies that, for some fixed $\varepsilon > 0$,
\[
\P\left( \sigma_j - s_j - \varepsilon \ge 2  \vUBj  \sqrt{\frac{2 \ln(1/\delta)}{n-1}}  \right) \le \delta.
\]
Applying \Cref{lem:dependency-sum-lemma} to this inequality with $X_j = \sigma_j - s_j - \varepsilon$ implies that 
\begin{flalign}\label{eq:difference-empirical-and-pop-variance}
\P\left( \sum_{j=1}^d \sigma_j - s_j > \frac{9}{2}  \sqrt{\frac{2 \ln(6/\delta)}{n-1}}  \sum_{j=1}^d  \vUBj \right) \le \delta. 
\end{flalign}
Taking a union bound over \Cref{eq:L1-error-sum-pop-variance} with $\frac{2\delta}{3}$ and \Cref{eq:difference-empirical-and-pop-variance} with $\frac{\delta}{3}$ gives the result.
\end{proof}

\ShortPaper{
\subsection{Proof of \Cref{lem:bound-regularization-direction}}\label{app:proof-bound-regularization-direction}

\ProofOfBoundedNorm

}

\subsection{Explanation of \Cref{table:combinations-that-satisfy-general-adaptivity-for-free}}\label{app:coro:applications-of-main-theorem}

Note that the $\lambda[]$ values are found by substituting $V_i = \grad f_i(x)$ into \Cref{lem:vector-concentration-bounds}
and employing Assumption~\ref{assume:Lip-estimates}.

{
\sloppy
The remainder of this section defines  $\AdaSGD{R\edit{; f_{n+1}, \ldots, f_{2n}}}$, $\SimplexMirrorDescent{R\edit{; f_{n+1}, \ldots, f_{2n}}}$ and $\AdaGrad{R\edit{; f_{n+1}, \ldots, f_{2n}}}$, and recaps their known complexities.
In particular, $\AdaSGD{R\edit{; f_{n+1}, \ldots, f_{2n}}} = \bar{u}_n$ where $\bar{u}_n$ is computed from \Cref{eq:adaptive-SGD} starting from $u_1 = \zeros$.  \citet[Theorem 3.9 \& 4.14]{orabona2021modern} prove that with probability at least $1-\delta$ \par
}
\begin{flalign}\label{eq:adaptive-SGD-suboptimality-bound}
F( \AdaSGD{R\edit{; f_{n+1}, \ldots, f_{2n}}}  ) - \min_{ x \in \X : \| x \|_2 \le R } F(x) \le O\left(  \frac{\Lip R}{\sqrt{n}} \sqrt{\ln 1/\delta } \right).
\end{flalign}

Next, we consider $\SimplexMirrorDescent{R\edit{; f_{n+1}, \ldots, f_{2n}}}$. Without loss of generality assume $\X \subseteq \{ x \in \R^{d} : x \ge \zeros \}$, since we can always decompose $x = x_{+} - x_{-}$ where each component of $x_{+}$ and $x_{-}$ is greater than or equal to zero. The update equations for mirror descent starting from $u_1=\zeros$ for $t=1,\dots,n$ are
\begin{flalign*}
\bar{u}_t &\gets \frac{1}{t} \sum_{l=1}^t u_l \\
u_{t+1} &\in \argmin_{u \in \mathcal{X} : \| u \|_1 \le R} \left\{\eta_t \grad f_{n+t}(u_t)^\top u + D_{\Phi}(u, u_t)\right\} \\
\eta_t &\gets \frac{R \sqrt{2}}{2 \sqrt{\sum_{l=1}^t \| \grad f_{n+l}(u_l) \|_{\infty}^2}}
\end{flalign*}
where $\Phi(x) := \sum_{j=1}^d \coor{x} \ln \coor{x}$ and $D_{\Phi}$ is the Bregman divergence \cite{beck2003mirror}.
Setting $\SimplexMirrorDescent{R\edit{; f_{n+1}, \ldots, f_{2n}}} = \bar{u}_n$, gives \cite[Theorem 6.11 \& 3.14]{orabona2021modern} that with probability $1-\delta$,
\begin{flalign}\label{eq:mirror-simplex-suboptimality-bound}
F(\SimplexMirrorDescent{R\edit{; f_{n+1}, \ldots, f_{2n}}}) - \min_{x \in \X : \| x \|_{1} \le R} F(x) \le O\left(  \frac{\| \LipCoord \|_{\infty} R}{\sqrt{n}} \sqrt{\ln d/\delta } \right).
\end{flalign}
Finally, let
$\AdaGrad{R\edit{; f_{n+1}, \ldots, f_{2n}}} = \bar{u}_{n}$ where for $t = 1, \dots, n$,
\begin{flalign*}
\bar{u}_t &\gets \frac{1}{t} \sum_{i=1}^{t} u_i \\
 u_{t+1} &\gets \argmin_{u \in \X : \| u \|_{\infty} \le R} \| u_{t} - R G_t^{-1/2} \grad f_{n+t}(u_t) - u \|_{G_t^{1/2}} \\ 
 G_{t+1} &\gets G_{t} + \mathbf{diag}(\grad f_{n+t}(u_t))^2
\end{flalign*}
$G_{0} = \zeros$, and $\| v \|_{H} := \sqrt{v^\top H v}$.
From \cite[Section 3.2]{gupta2017unified} and \cite[Theorem 3.14]{orabona2021modern}, with probability $1-\delta$ this obtains the suboptimality guarantee: %
\begin{flalign}\label{eq:adagrad-suboptimality-bound} 
F(\AdaGrad{R\edit{; f_{n+1}, \ldots, f_{2n}}}) - \min_{x \in \X : \| x \|_{\infty} \le R } F(x) \le O\left(  \frac{\| \LipCoord \|_{1} R}{\sqrt{n}} \sqrt{\ln 1/\delta } \right).
\end{flalign}

\mysubsection{Grid searching over $\lambda[]$ to eliminate the intractable calculation in \Cref{table:combinations-that-satisfy-general-adaptivity-for-free}}{Grid Searching over $\lambda[]$ to Eliminate the Intractable Calculation in \Cref{table:combinations-that-satisfy-general-adaptivity-for-free}}\label{app:dealing-with-unknown-lambda}

This section explains how we can grid search over $\lambda[]$ using our \RMS{} method to avoid the intractable calculations for $\lambda[]$ currently present in \Cref{table:combinations-that-satisfy-general-adaptivity-for-free}.

Define,
\[
\ell_{p} := \begin{cases}
\hat{\Lip} & \text{ if } p = 2 \\
\| \hat{\LipCoord} \|_{\infty} & \text{ if } p = 1 \\
\| \hat{\LipCoord} \|_{1} & \text{ if } p = \infty,
\end{cases}
\qquad
l_{p} := \begin{cases}
\Lip & \text{ if } p = 2 \\
\| \LipCoord \|_{\infty} & \text{ if } p = 1 \\
\| \LipCoord \|_{1} & \text{ if } p = \infty,
\end{cases}
\]

\begin{flalign*}
\hatlambda[p]^{(0)} := \begin{cases}
\frac{4 \sqrt{\ln \frac{6}{\delta}}}{\sqrt{n}} + \frac{20 \ln \frac{6}{\delta}}{n-1} & \text{ if } p = 2 \\
\frac{\edit{2} \sqrt{2 \ln \frac{4 d}{\delta} }}{\sqrt{n-1}} +\frac{28 \ln \frac{4 d}{\delta}}{3 (n-1)} & \text{ if } p = 1 \\
\frac{9 \sqrt{2 \ln \frac{18}{\delta}}}{2 \sqrt{n-1}} +\frac{\edit{48} \ln \frac{18}{\delta}}{n-1} & \text{ if } p = \infty,
\end{cases}
\end{flalign*}
and
\[
\hatlambda[p]^{(k)} := \e^k \hatlambda[p]^{(0)} \text{ for } k=1, \dots, K_p ~~\text{where}~~ K_p := 1 \vee \lceil \ln  \ell_p  \rceil. 
\]
Let $\hat{x}_{k,p}$ be the output of \Cref{alg:PUD} with norm and algorithm specified in \Cref{table:combinations-that-satisfy-general-adaptivity-for-free} but regularizer $\hatlambda[p]^{(k)}$ instead of $\lambda[p]$, then due to \Cref{thm:instantiates-lambda-phi-table} we get with probability $1-3 \delta$ that for all $\hatlambda[p]^{(k)} \ge \lambda[p]$ that
\[
F(\hat{x}_{k,p}) - \Fstar \le (\edit{14} \lambdaAlg[p] + 3 \hatlambda[p]^{(k)} ) \DStar ~~\text{ and }~~ \| \hat{x}_{k,p} \| \le 14 \DStar.
\]
Applying this inequality for $\hatlambda[p]^{(k)} \in [\lambda[p], \e \lambda[p]]$ with 
\begin{flalign*}
\tau_{k,p} &= \sqrt{\frac{2 \tilde{c}_p \SampleVar{k}}{n}} + \tilde{c}_p \frac{14 \ell_p \| x_k \|}{3 (n-1)} \\
\tilde{c}_p &:= \ln \frac{4 K_p}{\delta}.
\end{flalign*}
gives by \Cref{eq:F-rely-x-k-guarantee} and a union bound that, with probability $1-4\delta$,
\[
F(x_{\kReliable,p}) - \Fstar \le  (\edit{14} \lambdaAlg[p] + \edit{3 \e} \lambda[p] ) \DStar + 14 (1 + \gamma) \left( \sqrt{2 \tilde{c}_p} \frac{l_{p} \DStar}{\sqrt{n\edit{-1}}} + 
 \tilde{c}_p \frac{14\ell_p \DStar}{3(n\edit{-1})} \right).
\]
Substituting the values for each of these terms and fixing $\gamma$ to be a constant gives
\[
F(x_{\kReliable,p}) - \Fstar = \begin{cases}
O\!\left( \DStar l_p \sqrt{\frac{\ln(\delta^{-1} \ln \ell_p)}{n-1}} + \frac{\ell_p \DStar \ln(\delta^{-1} \ln \ell_p)}{n-1} \right) & \text{if } p \in \{2, \infty\} \\[6pt]
O\!\left( \DStar l_1 \sqrt{\frac{\ln(d \, \delta^{-1} \ln \ell_1)}{n-1}} + \frac{\ell_1 \DStar \ln(d \, \delta^{-1} \ln \ell_1)}{n-1} \right) & \text{if } p = 1.
\end{cases}
\]
as desired. 
Note this approach uses $K_p$ calls to an empirical risk minimization oracle and the appropriate algorithm from \Cref{table:combinations-that-satisfy-general-adaptivity-for-free}.

\subsection{Proof of \Cref{example:combine-methods}} \label{app:example:combine-methods}

\begin{algorithm}
\caption{Simultaneously adapting to multiple problem structures}
\label{alg:simultaneously-adapting}
\begin{algorithmic}
\INPUT Sample functions 
		$f_1, \dots, f_{3n}$ 
\INPUT Maximum failure probability $\delta > 0$
\INPUT Lipschitz estimates $\LipHat$ and $\hat{\LipCoord}$

\FOR{$k = 1, 2, 3$}
    \STATE Compute $x_k = \alg(2 \| \xReg{\lambda[]} \|\edit{; f_{n+1}, \dots, f_{2n}} )$ using \edit{algorithm} $\alg$, \edit{norm} $\| \cdot \|$ and \edit{regularizer} $\lambda[]$ from $k$th row of Table~\ref{table:combinations-that-satisfy-general-adaptivity-for-free} where $\xReg{\lambda[]} \in \argmin_{x \in \X} \bar{F}(x) + \lambda[] \| x \|$.
    \STATE $\SampleVar{k} = \frac{1}{n-1} \sum_{i=1}^n (f_i(x_k) - f_i(\zeros) - (\barF{k}  - \bar{F}(\zeros)))^2$    
    \STATE $\tau_k \gets \sqrt{\frac{2 \SampleVar{k}}{n} \ln \frac{12}{\delta} } + \frac{14}{3} \cdot \frac{\ln \frac{12}{\delta}}{n-1} (\LipHat \| x_k \|_2  \wedge \| \hat{\LipCoord} \|_{\infty} \| x_k \|_1 \wedge \| \hat{\LipCoord} \|_1 \| x_k \|_{\infty} )$
\ENDFOR

\OUTPUT $z \gets x_{\kReliable}$ from applying \RMS{} to $\{(x_k, \tau_k)\}_{k=1}^3$ on $f_{2n+1}, \dots, f_{3n}$ with fixed $\gamma \in [1,\infty)$.
\end{algorithmic}
\end{algorithm}

\begin{proof} 
We use \Cref{alg:simultaneously-adapting} to obtain the desired result.

Let $Z_{k,i} := F(x_k) - f_i(x_k) - (F(\zeros)  - f_i(\zeros))$ and $q_k := \hat{\Lip} \| x_k \|_2  \wedge \| \hat{\LipCoord} \|_{\infty} \| x_k \|_1 \wedge \| \hat{\LipCoord} \|_1 \| x_k \|_{\infty}$. Note that almost surely $\abs{Z_{k,i}} \le q_k$. 
Thus, by \Cref{thm:empirical-bennett} and a union bound we deduce that \Cref{assume:confidence-intervals} holds with the $\tau_k$ value specified in \Cref{alg:simultaneously-adapting}. 
Thus, \Cref{lem:reliable-selection-general-guarantee} implies that with probability $1-\delta$
\begin{flalign}\label{eq:tau-sub-opt}
F(x_{\kReliable}) - \Fstar \le \min_{k \in [K]} F(x_{k}) - \Fstar  + (1 + \gamma) \tau_k.
\end{flalign}
Moreover, by definition of $\tau_k$ and the assumption that $\Lip \le \hat{\Lip} \le L \sqrt{n / \ln(1/\delta)}  \implies \ln(1/\delta) \le n$ we get
\begin{flalign*}
	\tau_k &= \sqrt{\frac{2 \SampleVar{k}}{n} \ln \frac{12}{\delta} } + \frac{14}{3} \cdot \frac{\ln \frac{12}{\delta}}{n-1} q_k \le \sqrt{\frac{2 \SampleVar{k}}{n} \ln \frac{12}{\delta} } + \frac{14}{3} \cdot \sqrt{\frac{n}{n-1}} \cdot \sqrt{\frac{\ln(12) \ln \frac{12}{\delta}}{n-1}} q_k. 
\end{flalign*}

By Assumption~\ref{assume:Lip-estimates} and from \Cref{lem:bound-regularization-direction} \edit{applied separately for each $k \in \{1,2,3\}$ and combined via a union bound,} with probability $1-\edit{6}\delta$ we have $\| x_1 \|_2 \le 14 \| \xstar \|_2$, $\| x_2 \|_{1} \le 14 \| \xstar \|_1$ and $\| x_3 \|_{\infty} \le 14 \| \xstar \|_{\infty}$ for all $\xstar \in \XStar$. Thus, for all $\xstar \in \XStar$ we get with probability $1-\edit{6}\delta$ that
\begin{subequations}\label{eq:tau-ubs}
\begin{flalign}
\tau_1 &\le O\left( \frac{\Lip \| \xstar  \|_2 }{\sqrt{n}} \sqrt{\ln 1/\delta} \right) \\
 \tau_2 &\le O\left( \frac{\| \LipCoord \|_{\infty} \| \xstar \|_{1} }{\sqrt{n}} \sqrt{\ln 1/\delta} \right) \\
 \tau_3 &\le O\left( \frac{\|\LipCoord \|_{1} \| \xstar \|_{\infty} }{\sqrt{n}} \sqrt{\ln 1/\delta} \right) .
\end{flalign}
\end{subequations}
\Cref{thm:instantiates-lambda-phi-table}, a union bound, $\LipHat \le \Lip  \sqrt{n / \ln(1/\delta)}$, and $\hat{\LipCoord}_j \le \LipCoord_j \sqrt{n / \ln(d/\delta)}$ implies that, with probability $1-\edit{9}\delta$, for all $\xstar \in \XStar$
\begin{flalign*}
F(x_1) -  \Fstar &\le O\left( \frac{L \| \xstar  \|_2 }{\sqrt{n}} \sqrt{\ln 1/\delta} \right) \\
F(x_2) -  \Fstar &\le O\left( \frac{\| \LipCoord \|_{\infty} \| \xstar \|_{1} }{\sqrt{n}} \sqrt{\ln d/\delta} \right) \\
F(x_3) -  \Fstar &\le O\left( \frac{\| \LipCoord \|_{1} \| \xstar \|_{\infty} }{\sqrt{n}} \sqrt{\ln 1/\delta} \right) 
\end{flalign*}
Combining these inequalities with \Cref{eq:tau-sub-opt} and \Cref{eq:tau-ubs} using a union bound gives the result \edit{with probability $1 - 10\delta$}.
\end{proof}

\mysection{Justifying the norm for the experiments}{Justifying the Norm for the Experiments}\label{sec:justify-norm}

\newcommand{\softmaxvec}[1]{\mathbf{p}(#1)}
\newcommand{\softmax}[2]{\mathbf{p}_{#1}(#2)}

In this section, we show that for multivariate logistic regression problems with normalized features (as is the case for zero-shot CLIP models \cite[Section 3.1.2]{radford2021learning}), every sample $f(\cdot; S)$ is 2-Lipschitz with respect to the $\| \cdot \|_{2,\infty}$ norm (\Cref{eq:Lip-infty-2}). 

Define the softmax function as
\[
\softmax{l}{z} := \frac{\e^{\coor[l]{z}}}{\sum_{c=1}^C \e^{\coor[c]{z}} }  \quad \softmaxvec{z} := \begin{pmatrix}
	\softmax{1}{z} \\
	\vdots \\
	\softmax{C}{z}
\end{pmatrix}
\]
where $C$ is the number of classes and $z$ is a vector of scores.
In this section, with slight abuse of notation, we will let $S$ be a random vector of size $m+1$ with $S_0$ denoting the first element and $S_{1:m}$ the remaining elements.
For a matrix $D$, let $D_c$ be its $c$th row.
Let $\mathbf{e}_c$ be the vector with a one in the $c$th position and zeros everywhere else.

\begin{proposition}\label{prop:norm-choice-experiments}
Let $C$ be the number of classes in the classification problem and $m$ the number of features in a multivariate logistic regression problem with $\X = \R^{C \times m}$,
\[
f(X; S) = -\log(\softmax{S_0}{X S_{1:m}} ),  \quad \SS = \{ S \in \R^{m+1} : S_0 \in \{1,\dots, C \} \text{ and } \| S_{1:m} \|_2 \le 1 \}.
\]
Then,
\begin{flalign}\label{eq:Lip-infty-2}
\sup_{S \in \SS} \abs{f(X;S) - f(X';S)} \le 2 \| X - X' \|_{2,\infty} \quad \forall X, X' \in \R^{C \times m}.
\end{flalign}
\end{proposition}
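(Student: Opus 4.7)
The plan is to prove the bound via a gradient computation and a duality argument tailored to the norm $\|\cdot\|_{2,\infty}$ (understood as the max over rows of the Euclidean norm of each row, whose dual is the $(2,1)$ norm $\sum_c \|\cdot_c\|_2$).

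First, I would write $f(X;S) = \log \softmax{S_0}{z}$ with $z := X S_{1:m} \in \R^C$, and differentiate in $X$. Using $\log \softmax{S_0}{z} = z_{S_0} - \log \sum_{c=1}^C e^{z_c}$ and the chain rule, the $c$th row of $\grad_X f(X;S)$ equals
\[
\bigl[\grad_X f(X;S)\bigr]_c = \bigl(\indicator{c = S_0} - \softmax{c}{z}\bigr)\, S_{1:m}^\top.
\]

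Second, I would bound $\|\grad_X f(X;S)\|_{2,1} := \sum_{c=1}^C \|[\grad_X f(X;S)]_c\|_2$. Factoring out $\|S_{1:m}\|_2 \le 1$,
\[
\|\grad_X f(X;S)\|_{2,1} \;\le\; \sum_{c=1}^C \bigl|\indicator{c = S_0} - \softmax{c}{z}\bigr|.
\]
Setting $p_c := \softmax{c}{z} \in [0,1]$ with $\sum_c p_c = 1$, the right-hand side equals $(1 - p_{S_0}) + \sum_{c \neq S_0} p_c = 2(1 - p_{S_0}) \le 2$.

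Third, I would use convexity/smoothness of $f(\cdot;S)$ (it is smooth in $X$) together with the fundamental theorem of calculus along the segment $W_t = X' + t(X - X')$, $t \in [0,1]$, and H\"older's inequality with the dual pairing $(\|\cdot\|_{2,1}, \|\cdot\|_{2,\infty})$:
\[
\abs{f(X;S) - f(X';S)} = \Bigl| \int_0^1 \langle \grad_X f(W_t;S),\, X - X' \rangle\, dt \Bigr| \;\le\; \sup_{t \in [0,1]} \|\grad_X f(W_t;S)\|_{2,1} \cdot \|X - X'\|_{2,\infty} \le 2\|X - X'\|_{2,\infty},
\]
which is the desired \Cref{eq:Lip-infty-2}. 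Taking the supremum over $S \in \SS$ on the left concludes the proof.

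There is no real obstacle here; the only subtlety is recognizing the right dual pairing (rows measured in $\ell_2$, then aggregated in $\ell_1$ versus $\ell_\infty$) and the combinatorial identity $\sum_c |\indicator{c=S_0} - p_c| = 2(1 - p_{S_0})$ that keeps the Lipschitz constant equal to $2$ (independent of $C$ and $m$). If we preferred to avoid invoking differentiability directly, the same bound follows from the well-known $1$-Lipschitzness of $z \mapsto \log \softmax{S_0}{z}$ in the $\ell_1 \to \ell_\infty$ pairing combined with $\|X S_{1:m} - X' S_{1:m}\|_\infty \le \|X - X'\|_{2,\infty} \|S_{1:m}\|_2$, which gives the constant $2$ after accounting for both the log-sum-exp term and the linear $z_{S_0}$ term.
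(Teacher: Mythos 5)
Your proof is correct and follows essentially the same route as the paper: compute the gradient of $f(\cdot;S)$ as a rank-one matrix, bound its $(2,1)$-norm by $\|\softmaxvec{z} - \mathbf{e}_{S_0}\|_1\,\|S_{1:m}\|_2 \le 2$, and invoke the duality between $\|\cdot\|_{2,1}$ and $\|\cdot\|_{2,\infty}$ to conclude Lipschitzness. The only differences are cosmetic: you make explicit the identity $\|\softmaxvec{z} - \mathbf{e}_{S_0}\|_1 = 2(1-p_{S_0})$ and the fundamental-theorem-of-calculus step that the paper leaves implicit, and you write the gradient with the correct sign $(\mathbf{e}_{S_0} - \softmaxvec{z})S_{1:m}^\top$ whereas the paper's displayed expression has the sign flipped (immaterial since only the norm is used).
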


\begin{proof}
Since $\grad f(X;S) = (\softmaxvec{X S_{1:m}} - \mathbf{e}_{S_0} ) S_{1:m}^\top$
we get that
\[
\| \grad f(X; S)  \|_{2,1} = \| (\softmaxvec{X S_{1:m}} - \mathbf{e}_{S_0} ) S_{1:m}^\top \|_{2,1}  \le \| \softmaxvec{X S_{1:m}} - \mathbf{e}_{S_0} \|_1 \| S_{1:m} \|_2 \le 2.
\]
Since $\| \cdot \|_{2,\infty}$ is dual to $\| \cdot \|_{2,1}$,
we deduce $f(\cdot ; S)$ is $2$-Lipschitz with respect to the $\| \cdot \|_{2, \infty}$ norm.
\end{proof}

\mysection{Further details of experiments}{Further Details of Experiments}\label{sec:further-experiment-details}

\myparagraph{Costs of running experiments}
We estimate the total cost of running our experiments for this paper (including unused and failed experiments), at current cloud computing prices, to be well under \$1,000  including purchasing tokens for Gemini.

\myparagraph{Assets used}
For the few-shot learning experiments with CIFAR-10 \cite{krizhevsky2009learning} and CLIP we used the
pandas \cite{mckinney2011pandas}, PyTorch \cite{paszke2019pytorch}, and OpenCLIP \cite{ilharco_gabriel_2021_5143773} packages
with the ViT-L/14-quickgelu model \cite{dosovitskiy2021image,radford2021learning} pretrained on the DFN-2B data set \cite{fang2024data}. 
In the shape-counting experiment, for image generation, prompting \texttt{Gemini-1.5-Flash}, and processing the results, we used the matplotlib \cite{matplotlibHunter}, google.generativeai \cite{team2024gemini}, pandas \cite{mckinney2011pandas}, and word2number \cite{word2number} packages;
for running the model selection methods, we used tidyverse \cite{tidyverse}. 

\myparagraph{Additional plots}
We provide some additional plots to supplement \Cref{fig:combined-experiment-plots} in the body of the paper.
\Cref{fig:additional-plots-for-CLIP} provides two additional plots for the few-shot learning experiments.
The left plot in \Cref{fig:additional-plots-for-CLIP} shows that the performance gap between
reliable and standard model selection in the left plot of \Cref{fig:combined-experiment-plots} is primarily
driven by poor tail performance. The right plot in \Cref{fig:additional-plots-for-CLIP} shows that reliable 
model selection is also reasonable at obtaining a good error rate, even though it is applied to minimizing cross-entropy loss.
\Cref{fig:additional-plot-for-shape} shows the 95th percentile for the prompt engineering experiments.
This shows a similar pattern to the right plot of \Cref{fig:combined-experiment-plots}. 

\begin{figure}[htb]
\includegraphics[width=0.5\textwidth]{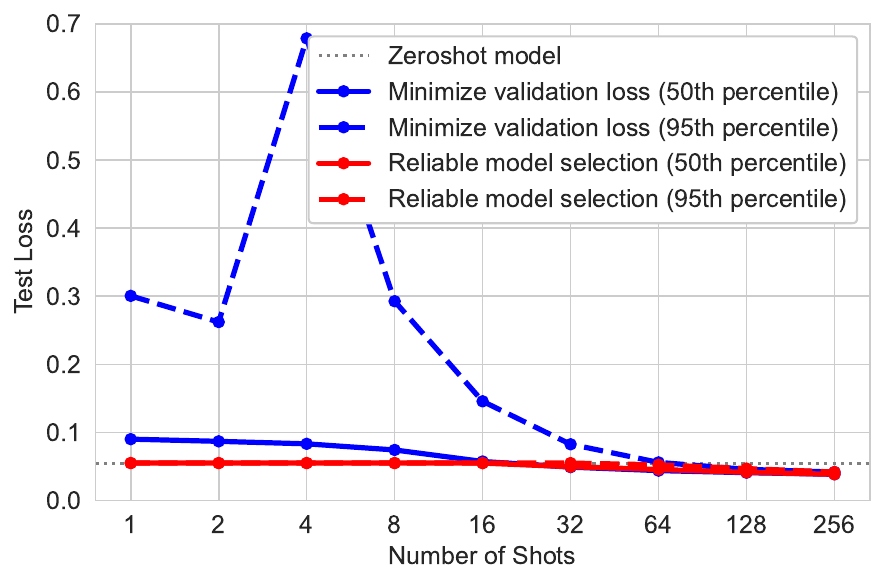}
\includegraphics[width=0.5\textwidth]{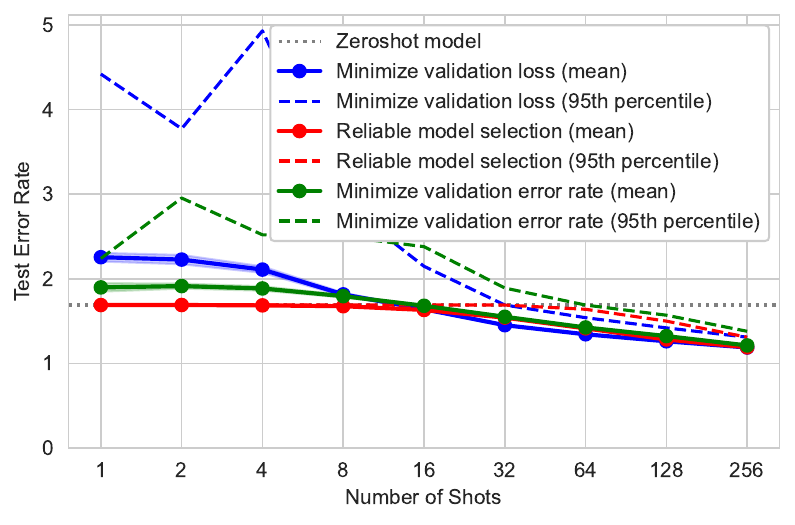}
\caption{Additional plots for the CLIP experiments. The percentiles are over the runs.
For the right plot with error rates, reliable model selection is applied to the validation loss (i.e., cross-entropy).}
\label{fig:additional-plots-for-CLIP}
\end{figure}

\begin{figure}[htb]
\centering
\includegraphics[width=0.7\textwidth]{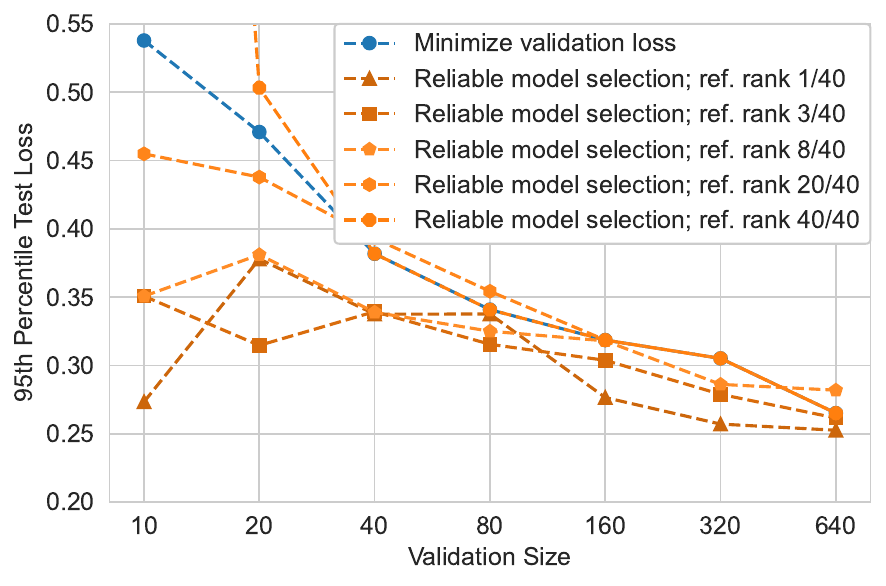}
\caption{Additional plot for the prompt engineering experiments. The percentiles are over the runs.}
\label{fig:additional-plot-for-shape}
\end{figure}

\myparagraph{More details for prompt engineering experiment}
Shape-counting images were systematically generated with the following criteria:
\begin{itemize}
    \item Each image contains between 1 and 12 shapes, sampled uniformly.
    \item Each shape belongs to one of six categories, sampled uniformly (ellipse, rectangle, triangle, hexagon, star, pentagon).
    \item Each shape is defined by a bounding box whose height and width are independently sampled uniformly between 10\% and 30\% of the image width.
    \item Shape coordinates are sampled uniformly at random, skipping placements that would cause bounding boxes to overlap or extend beyond the image boundaries.
    \item Each shape is filled with one of eight colors, sampled uniformly (red, blue, green, purple, orange, yellow, pink, cyan).
    \item The background of each image is sampled uniformly from 15 preset options: a children's ball pit, a tropical beach, blurry lights, a building, purple wavy lines, a dog, tulips, wavy glowing triangles, fading horizontal lines, distorted colorblocks, a Persian rug, a telescope's image of distant galaxies, a black and white spiral, van Gogh's \textit{Starry Night}, and an image of static white noise.
    \item Each image is square with background images rescaled to fit.
\end{itemize}

\begin{figure}[htbp]
    \centering
    \begin{tabular}{ccccc}
      \includegraphics[width=0.18\textwidth]{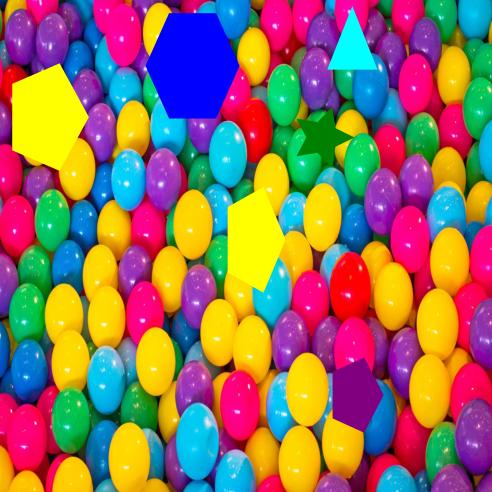} &
      \includegraphics[width=0.18\textwidth]{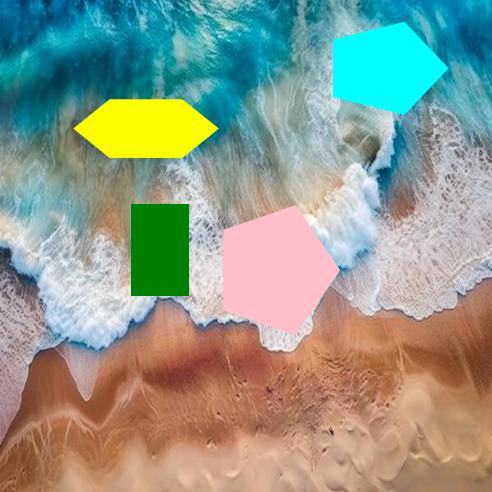} &
      \includegraphics[width=0.18\textwidth]{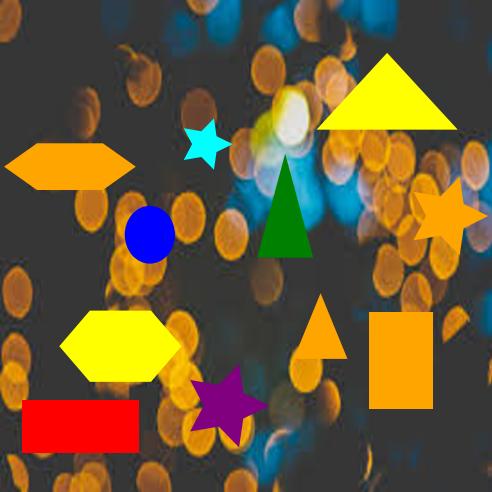} &
      \includegraphics[width=0.18\textwidth]{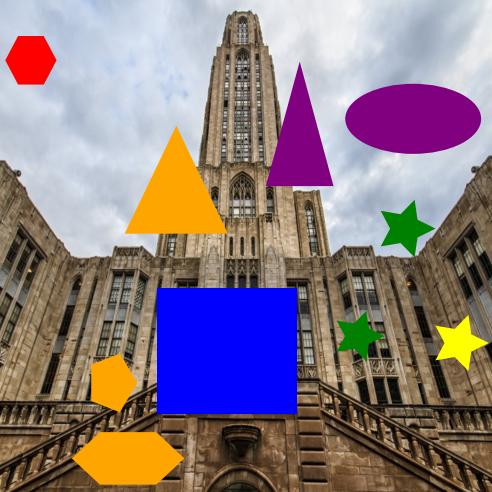} &
      \includegraphics[width=0.18\textwidth]{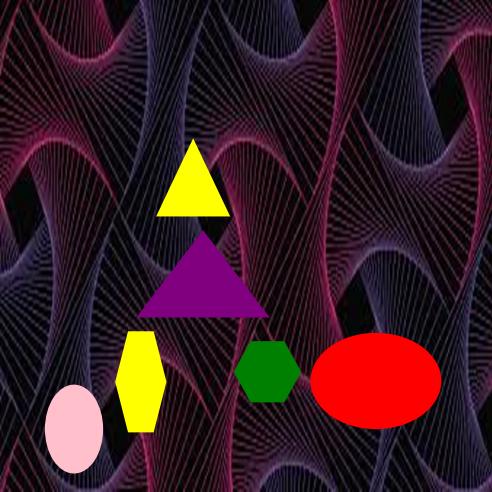} \\[4pt]
      \includegraphics[width=0.18\textwidth]{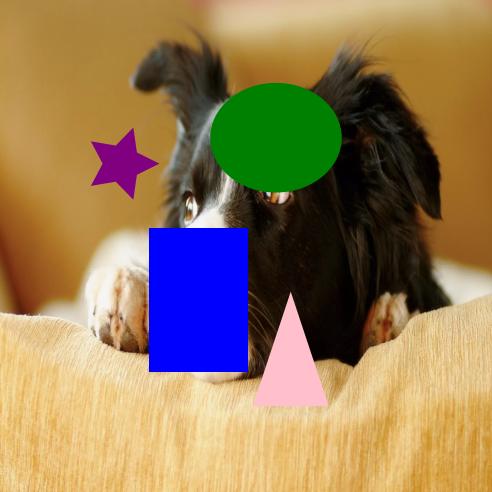} &
      \includegraphics[width=0.18\textwidth]{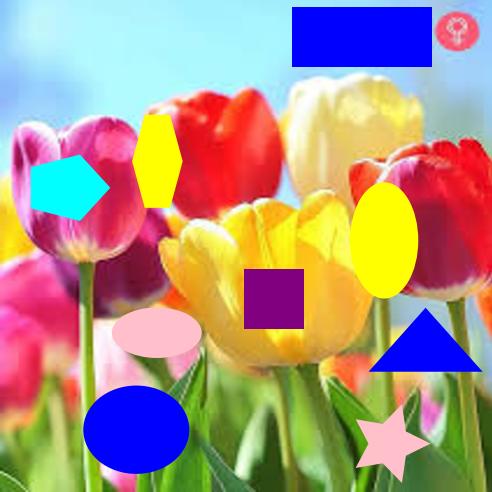} &
      \includegraphics[width=0.18\textwidth]{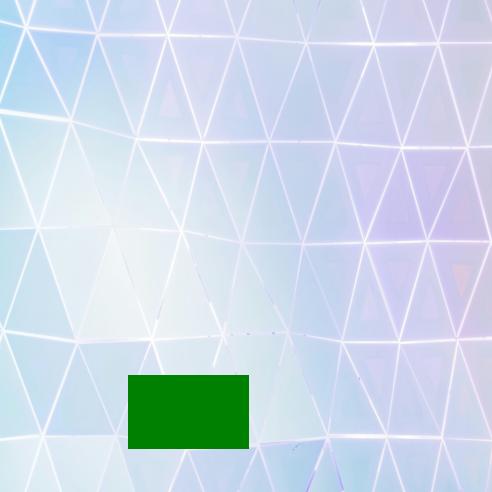} &
      \includegraphics[width=0.18\textwidth]{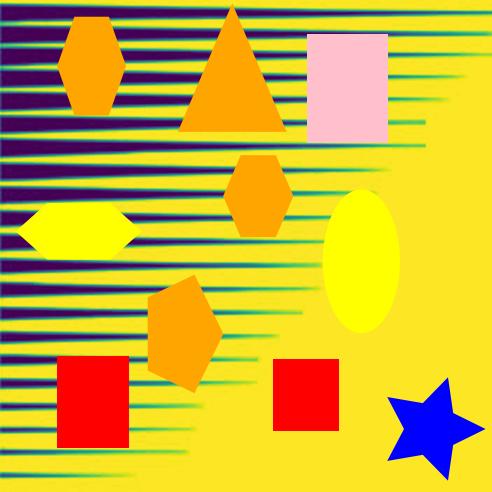} &
      \includegraphics[width=0.18\textwidth]{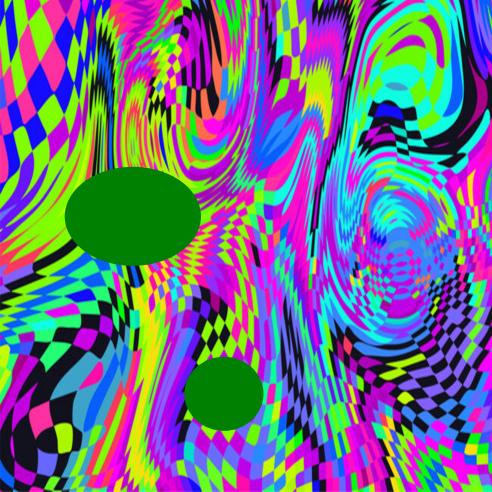} \\[4pt]
      \includegraphics[width=0.18\textwidth]{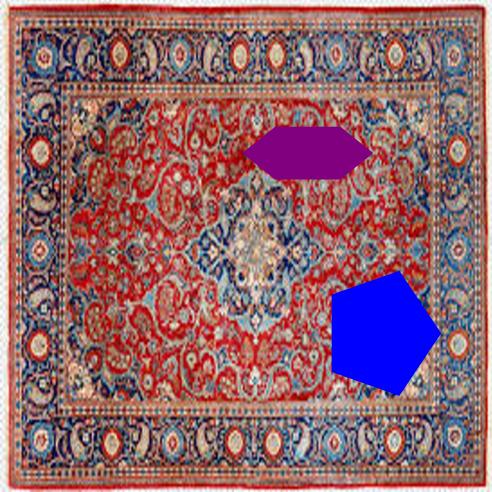} &
      \includegraphics[width=0.18\textwidth]{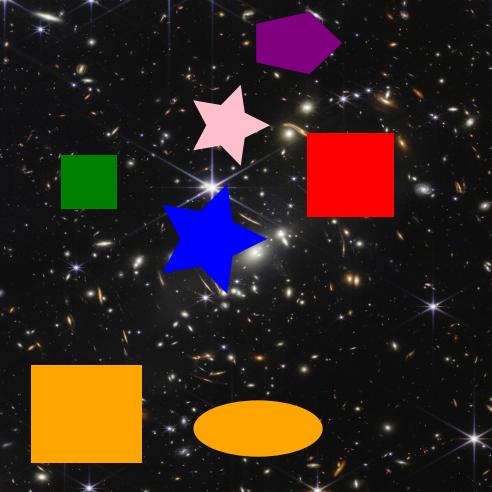} &
      \includegraphics[width=0.18\textwidth]{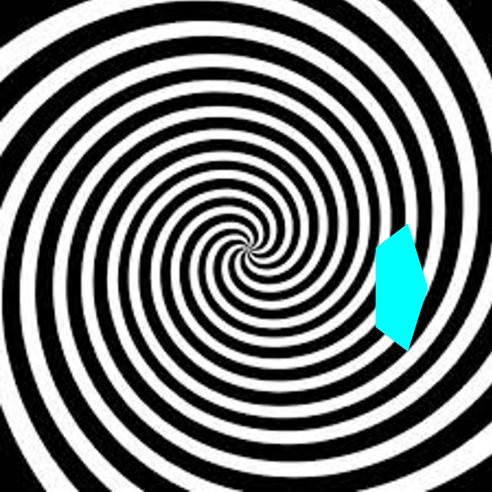} &
      \includegraphics[width=0.18\textwidth]{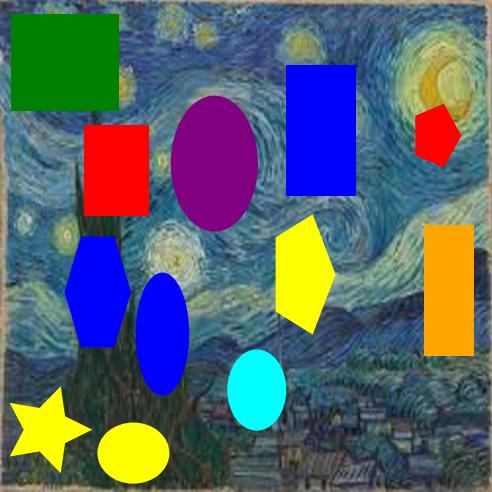} &
      \includegraphics[width=0.18\textwidth]{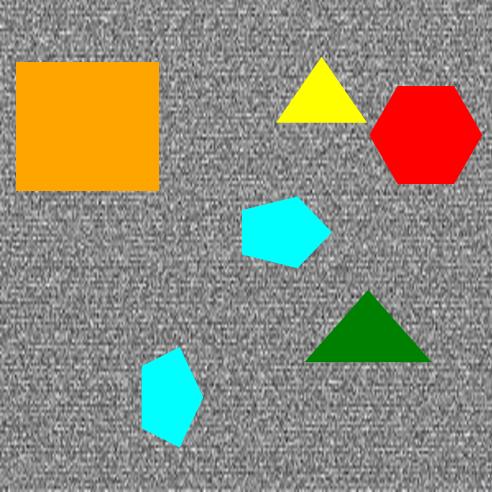}
    \end{tabular}
    \caption{A sample of images, generated by the outlined process, that were presented to Gemini showcasing all 15 backgrounds.}
    \label{fig:shapecountgrid}
  \end{figure}

We evaluate 40 distinct textual prompts on each generated image using \texttt{Gemini-1.5-Flash} at temperature 0. All images are sent with the same square sizing at 64 DPI resolution. Numerical responses are extracted from the LLM's output as follows:
\begin{enumerate}
 \item Remove all asterisks (\texttt{*}) left as formatting artifacts.
 \item Return the last token directly if it is purely numeric.
 \item Isolate content within braces \{\,\dots\,\} and within square brackets [\,\dots \,], returning 0 if a hyphenated range is detected.
 \item Perform a reverse search through each remaining word (what we have isolated in the previous step, or the entire output if an isolation was not made), first checking for numeric types and then attempting to extract numbers in word form. The first successful conversion is returned, or 0 otherwise.
 \item Clip all extracted values outside the range [0,\,12] to the nearest endpoint.
\end{enumerate}
This numerical extraction is compared to the true number of shapes in each image to determine the error, in this case absolute error.

\pgfplotsset{compat=1.18}
\newcolumntype{L}{>{\raggedright\arraybackslash}X}
\newcolumntype{C}{>{\centering\arraybackslash}X}
\renewcommand{\arraystretch}{1.2}

{\setlength{\LTcapwidth}{\linewidth}
\begin{longtable}{>{\raggedright\arraybackslash}p{0.8cm}>{\raggedright\arraybackslash}p{12cm}>{\centering\arraybackslash}p{1.3cm}}
\caption{Prompts ordered by performance on all $5,000$ examples.} \\
\toprule
\textbf{Rank} & \textbf{Prompt} & \textbf{Test Error} \\
\midrule
\endhead

\midrule
\multicolumn{3}{r}{\small\textit{Continued on next page}} \\
\endfoot

\bottomrule
\endlastfoot

1 & Analyze the image carefully to identify and count all visible shapes from the specified list, distinguishing them from any background patterns or non-relevant objects. First, scan the entire image systematically, noting each shape's outline and geometric properties. Then, categorize each detected shape by type (e.g., square vs. rectangle) while verifying no overlaps or partial occlusions are causing miscounting. Finally, tally the total number of valid shapes, ensuring no duplicates or background artifacts are included. Provide the final count as a numeric value between 1 and 12 enclosed in \verb|\mbox{}| & 0.2174 \\

2 & The image provided to you contains various shapes. How many shapes are present? Please show your work and put your final answer inside the brackets of \verb|\mbox{}|. Think carefully, this problem can be tricky! & 0.2440 \\

3 & Count the geometric shapes (e.g., circles, triangles, squares). Put your answer inside the brackets of \verb|\mbox{}|. & 0.2504 \\

4 & The image provided to you contains various shapes. How many shapes are present? Please show your work and put your final answer inside the brackets of \verb|\mbox{}| & 0.2518 \\

5 & The image provided to you contains various shapes. How many shapes are present? Please show your work and put your final answer as \verb|\mbox{#number of shapes}| & 0.2544 \\

6 & The image has shapes. How many? You need to use this: \verb|\mbox{#number of shapes}|. & 0.2728 \\

7 & Count the shapes, put your answer inside the brackets of \verb|\mbox{}|. & 0.2734 \\

8 & Look at the image and count the shapes. Once you've finished, provide your answer in this format: \verb|\mbox{}|. & 0.2822 \\

9 & Please determine how many shapes are present in the image provided to you. Note that the background should not be included in the count. Once you've calculated the total, express your answer in this format: \verb|\mbox{#total shapes}|. & 0.3004 \\

10 & You will receive an image that displays a diverse array of geometric shapes, which may include hexagons, ovals, triangles, and rectangles. Your task is to carefully and meticulously count the total number of distinct shapes that appear within the image. It is important to note that these shapes can vary significantly in size, be located in different areas of the image, and may also differ in color and orientation. Additionally, be aware that the background of the image might feature a gradient pattern that visually resembles a circular shape. This gradient, however, should not be counted as one of the shapes for the purposes of your analysis. Once you have completed your count and are confident in your assessment, please ensure that you report your final answer explicitly by placing it inside the brackets of the \verb|\mbox{}| command. & 0.3098 \\

11 & The image provided to you contains various shapes. How many shapes are present? You MUST be in the following format: \verb|\mbox{#number of shapes}| & 0.3140 \\

12 & in the pic count the shapes. put the answer inside the brackets of \verb|\mbox{}| & 0.3146 \\

13 & The image provided to you contains various shapes. How many shapes are present? You ABSOLUTELY MUST be in the following format: \verb|\mbox{#number of shapes}| & 0.3180 \\

14 & Please count the number of shapes that appear in the following image. Be aware that the background should not be counted in the calculations. Once you have your final answer, put your answer in the following format: \verb|\mbox{#number of shapes}|. & 0.3222 \\

15 & Count the shapes. After counting the total number of shapes, recount the number to make sure that the final answer is correct, and then put your answer inside the brackets of \verb|\mbox{}|. & 0.3360 \\

16 & Count the shapes. After counting the total number of shapes, double-check your answer for accuracy, and then put your answer inside the brackets of \verb|\mbox{}|. & 0.3374 \\

17 & Count the shapes seen in the image. Put your answer inside the brackets of \verb|\mbox{}|. For example, if there are 3 circles, 2 triangles, and 1 square, the total count is 6. Put your answer as \verb|\mbox{6}|. & 0.3442 \\

18 & Count the shapes. First, identify all individual shapes (e.g., circle, square, triangle). Second, count each shape and add them together. Third, put your answer for the total count inside the brackets of \verb|\mbox{}|. & 0.3504 \\

19 & The image provided contains various shapes such as triangles, squares, and circles. Add the total number of shapes that can be seen in the picture. After adding the number of shapes, include your final answer in the following format: \verb|\mbox{}|. & 0.3774 \\

20 & Analyze the image carefully. Identify and count all distinct shapes present in the image, considering their individual characteristics, such as size, color, and orientation. Be meticulous in ensuring that each unique shape is accounted for. Provide your answer inside the brackets of \verb|\mbox{}|. & 0.3914 \\

21 & The image provided to you contains several shapes, including triangles, squares, and circles. Count all the visible shapes in the picture and calculate the total number. Once you have the total, present your final answer in this format: \verb|\mbox{}|. & 0.4208 \\

22 & Identify and count all the shapes in the image (e.g., circles, squares, triangles). Begin by identifying each distinct shape. Then, tally the total number of shapes. Finally, place your total count inside the brackets of \verb|\mbox{}|. & 0.4358 \\

23 & Count the shapes that appear in the image. List all the shapes you observe (e.g., circle, triangle, square), count them to determine the total, and then put the answer inside the brackets of \verb|\mbox{}|. & 0.4540 \\

24 & Count the number of distinct shapes in the image with care and caution. There are 1-12 shapes generated. You will be graded on the L1 distance of your answer to the ground truth. Report your answers in the brackets of \verb|\mbox{}|. & 0.4546 \\

25 & Count only the shapes (e.g., circles, triangles, squares) and exclude any lines or patterns. Put your answer inside the brackets of \verb|\mbox{}|. & 0.4694 \\

26 & Count the number of distinct shapes in the image with care and caution. There are 1-12 shapes generated. If you are unable to count the number of shapes from the image, choose the final number to be somewhere in the middle. Report your answers in the brackets of \verb|\mbox{}|. & 0.4914 \\

27 & Analyze the attached image and count the number of distinct shapes present. The image may contain between 1 and 12 shapes, including rectangles, squares, circles, ovals, hexagons, pentagons, and stars. Be sure to disregard any background elements or patterns that are not one of the specified shapes. Provide a breakdown of the count for each shape type (e.g., ``3 circles, 2 squares, 1 star'') if possible, and then provide the total number of shapes. Finally, record the total number of shapes within the curly braces of the LaTeX command \verb|\mbox{}|. & 0.5020 \\

28 & Please recognize and count the different shapes in the image, and categorize them by type. Put the final number of shapes in the image in the format: \verb|\mbox{}|. & 0.5060 \\

29 & Count all complete, non-overlapping shapes visible and put your answer inside the brackets of \verb|\mbox{}|. & 0.5094 \\

30 & Thoroughly inspect the image and identify all the distinct shapes. List the shapes by their type (e.g., circles, squares, triangles), then count the total number of shapes. Finally, present your total count in the following format: \verb|\mbox{}|. & 0.5232 \\

31 & Please identify and count the different shapes in the image, and list them by type. Put the final number of shapes in the image in the format: \verb|\mbox{}|. & 0.5234 \\

32 & Count the number of distinct shapes in the image with care and caution. There are 1-12 shapes generated. If you are unable to count the number of shapes from the image, choose the final number to be 6. Report your answers in the brackets of \verb|\mbox{}|. & 0.5370 \\

33 & Carefully examine the image and identify all the distinct shapes. List the shapes by their type (e.g., circles, squares, triangles), then count the total number of shapes. Finally, present your total count in the following format: \verb|\mbox{}|. & 0.5590 \\

34 & Examine this image carefully and help me count the total number of distinct geometric shapes present. First, scan the image systematically from left to right, top to bottom, identifying each instance of rectangles, squares, circles, ovals, hexagons, pentagons, and stars. Ignore any background patterns or decorative elements that aren't complete shapes. For similar shapes that overlap or touch, count them as separate shapes if you can clearly distinguish their individual boundaries. Keep a running tally for each shape type you encounter, then sum them up for the total count. Express your final answer using the format \verb|\mbox{n}| where n is the total number of shapes you've counted. Before giving your final answer, double-check your count to ensure you haven't missed any shapes or counted any twice. & 0.6262 \\

35 & Count the number of distinct shapes in the image with care and caution. There are 1-6 shapes generated. Report your answers in the brackets of \verb|\mbox{}|. & 0.8400 \\

36 & Count the number of distinct shapes in the image with care and caution. There are 1-24 shapes generated. Report your answers in the brackets of \verb|\mbox{}|. & 0.9718 \\

37 & Count the number of distinct shapes in the image with care and caution. There are 1-12 shapes generated. Report your answers in the brackets of \verb|\mbox{}|. & 1.0892 \\

38 & The image you've received contains several geometric shapes of various types (e.g., circles, squares, triangles, etc.). Please identify and count each type of shapes separately. After counting, please show the breakdown of your findings (e.g., how many circles, squares, triangles, etc.). Finally, provide the total number of shapes in the format: \verb|\mbox{#number of shapes}|. Ensure that you answer accounts for each shape present, and please explain how you identified and counted them. & 1.1234 \\

39 & See the image and count shapes; put answer inside \verb|\mbox{}| & 1.1276 \\

40 & Count the number of distinct shapes in the image with care and caution. There are 1-100 shapes generated. Report your answers in the brackets of \verb|\mbox{}|. & 1.6580 \\

\end{longtable}
}

\jmlr{\bibliography{bib.bib}}

\begin{thebibliography}{54}
\providecommand{\natexlab}[1]{#1}
\providecommand{\url}[1]{\texttt{#1}}
\expandafter\ifx\csname urlstyle\endcsname\relax
  \providecommand{\doi}[1]{doi: #1}\else
  \providecommand{\doi}{doi: \begingroup \urlstyle{rm}\Url}\fi

\bibitem[Agarwal et~al.(2012)Agarwal, Bartlett, Ravikumar, and Wainwright]{agarwal2012information}
Alekh Agarwal, Peter~L Bartlett, Pradeep Ravikumar, and Martin~J Wainwright.
\newblock Information-theoretic lower bounds on the oracle complexity of stochastic convex optimization.
\newblock \emph{IEEE Transactions on Information Theory}, 58\penalty0 (5):\penalty0 3235--3249, 2012.

\bibitem[Attia and Koren(2024)]{attia2024free}
Amit Attia and Tomer Koren.
\newblock How free is parameter-free stochastic optimization?
\newblock In \emph{International Conference on Machine Learning (ICML)}, 2024.

\bibitem[Beck and Teboulle(2003)]{beck2003mirror}
Amir Beck and Marc Teboulle.
\newblock Mirror descent and nonlinear projected subgradient methods for convex optimization.
\newblock \emph{Operations Research Letters}, 31\penalty0 (3):\penalty0 167--175, 2003.

\bibitem[Bennett(1962)]{bennett1962probability}
George Bennett.
\newblock Probability inequalities for the sum of independent random variables.
\newblock \emph{Journal of the American Statistical Association}, 57\penalty0 (297):\penalty0 33--45, 1962.

\bibitem[Blum and Hardt(2015)]{blum2015ladder}
Avrim Blum and Moritz Hardt.
\newblock The ladder: A reliable leaderboard for machine learning competitions.
\newblock In \emph{International Conference on Machine Learning}, pages 1006--1014. PMLR, 2015.

\bibitem[Bousquet et~al.(2020)Bousquet, Klochkov, and Zhivotovskiy]{bousquet2020sharper}
Olivier Bousquet, Yegor Klochkov, and Nikita Zhivotovskiy.
\newblock Sharper bounds for uniformly stable algorithms.
\newblock In \emph{Conference on Learning Theory}, pages 610--626. PMLR, 2020.

\bibitem[Breiman et~al.(1984)Breiman, Friedman, Olshen, and Stone]{breiman1984classification}
Leo Breiman, Jerome Friedman, Richard~A. Olshen, and Charles~J. Stone.
\newblock \emph{Classification and Regression Trees}.
\newblock Chapman and Hall, New York, 1984.

\bibitem[Carmon and Hinder(2022)]{carmon2022making}
Yair Carmon and Oliver Hinder.
\newblock Making {SGD} parameter-free.
\newblock In \emph{Conference on Learning Theory (COLT)}, 2022.

\bibitem[Carmon and Hinder(2024)]{carmon2024price}
Yair Carmon and Oliver Hinder.
\newblock The price of adaptivity in stochastic convex optimization.
\newblock \emph{arXiv:2402.10898}, 2024.

\bibitem[Chen et~al.(2022)Chen, Langford, and Orabona]{chen2022better}
Keyi Chen, John Langford, and Francesco Orabona.
\newblock Better parameter-free stochastic optimization with {ODE} updates for coin-betting.
\newblock In \emph{AAAI Conference on Artificial Intelligence}, 2022.

\bibitem[Cutkosky(2019)]{cutkosky2019artificial}
Ashok Cutkosky.
\newblock Artificial constraints and hints for unbounded online learning.
\newblock In \emph{Conference on Learning Theory (COLT)}, 2019.

\bibitem[Cutkosky and Orabona(2018)]{cutkosky2018black}
Ashok Cutkosky and Francesco Orabona.
\newblock Black-box reductions for parameter-free online learning in banach spaces.
\newblock In \emph{Conference On Learning Theory}, pages 1493--1529. PMLR, 2018.

\bibitem[Defazio and Mishchenko(2023)]{defazio2023learning}
Aaron Defazio and Konstantin Mishchenko.
\newblock Learning-rate-free learning by {D}-adaptation.
\newblock In \emph{International Conference on Machine Learning (ICML)}, 2023.

\bibitem[Dosovitskiy et~al.(2021)Dosovitskiy, Beyer, Kolesnikov, Weissenborn, Zhai, Unterthiner, Dehghani, Minderer, Heigold, Gelly, Uszkoreit, and Houlsby]{dosovitskiy2021image}
Alexey Dosovitskiy, Lucas Beyer, Alexander Kolesnikov, Dirk Weissenborn, Xiaohua Zhai, Thomas Unterthiner, Mostafa Dehghani, Matthias Minderer, Georg Heigold, Sylvain Gelly, Jakob Uszkoreit, and Neil Houlsby.
\newblock An image is worth 16x16 words: Transformers for image recognition at scale.
\newblock In \emph{International Conference on Learning Representations (ICLR)}, 2021.

\bibitem[Duchi et~al.(2011)Duchi, Hazan, and Singer]{duchi2011adaptive}
John Duchi, Elad Hazan, and Yoram Singer.
\newblock Adaptive subgradient methods for online learning and stochastic optimization.
\newblock \emph{Journal of Machine Learning Research}, 12\penalty0 (7), 2011.

\bibitem[Fang et~al.(2024)Fang, Jose, Jain, Schmidt, Toshev, and Shankar]{fang2024data}
Alex Fang, Albin~Madappally Jose, Amit Jain, Ludwig Schmidt, Alexander~T Toshev, and Vaishaal Shankar.
\newblock Data filtering networks.
\newblock In \emph{The Twelfth International Conference on Learning Representations}, 2024.
\newblock URL \url{https://openreview.net/forum?id=KAk6ngZ09F}.

\bibitem[{Gemini Team} et~al.(2024){Gemini Team}, Georgiev, Lei, Burnell, Bai, Gulati, Tanzer, Vincent, Pan, Wang, et~al.]{team2024gemini}
{Gemini Team}, Petko Georgiev, Ving~Ian Lei, Ryan Burnell, Libin Bai, Anmol Gulati, Garrett Tanzer, Damien Vincent, Zhufeng Pan, Shibo Wang, et~al.
\newblock Gemini 1.5: Unlocking multimodal understanding across millions of tokens of context.
\newblock \emph{arXiv:2403.05530}, 2024.

\bibitem[Gupta et~al.(2017)Gupta, Koren, and Singer]{gupta2017unified}
Vineet Gupta, Tomer Koren, and Yoram Singer.
\newblock A unified approach to adaptive regularization in online and stochastic optimization.
\newblock \emph{arXiv:1706.06569}, 2017.

\bibitem[Hastie et~al.(2009)Hastie, Tibshirani, and Friedman]{hastie2009elements}
Trevor Hastie, Robert Tibshirani, and Jerome Friedman.
\newblock \emph{The Elements of Statistical Learning: Data Mining, Inference, and Prediction}.
\newblock Springer, New York, 2nd edition, 2009.

\bibitem[Hazan and Kale(2014)]{hazan2014beyond}
Elad Hazan and Satyen Kale.
\newblock Beyond the regret minimization barrier: optimal algorithms for stochastic strongly-convex optimization.
\newblock \emph{Journal of Machine Learning Research}, 15\penalty0 (1):\penalty0 2489--2512, 2014.

\bibitem[Hoeffding(1963)]{Hoeffding1963}
Wassily Hoeffding.
\newblock Probability inequalities for sums of bounded random variables.
\newblock \emph{Journal of the American Statistical Association}, 58\penalty0 (301):\penalty0 13--30, 1963.

\bibitem[Howard et~al.(2020)Howard, Ramdas, McAuliffe, and Sekhon]{howard2020time}
Steven~R Howard, Aaditya Ramdas, Jon McAuliffe, and Jasjeet Sekhon.
\newblock Time-uniform chernoff bounds via nonnegative supermartingales.
\newblock \emph{Probability Surveys}, 17:\penalty0 257--317, 2020.

\bibitem[Hu et~al.(2022)Hu, Li, St{\"u}hmer, Kim, and Hospedales]{hu2022pushing}
Shell~Xu Hu, Da~Li, Jan St{\"u}hmer, Minyoung Kim, and Timothy~M Hospedales.
\newblock Pushing the limits of simple pipelines for few-shot learning: External data and fine-tuning make a difference.
\newblock In \emph{Proceedings of the IEEE/CVF Conference on Computer Vision and Pattern Recognition}, pages 9068--9077, 2022.

\bibitem[Hunter(2007)]{matplotlibHunter}
J.~D. Hunter.
\newblock Matplotlib: A 2d graphics environment.
\newblock \emph{Computing in Science \& Engineering}, 9\penalty0 (3):\penalty0 90--95, 2007.
\newblock \doi{10.1109/MCSE.2007.55}.

\bibitem[Ilharco et~al.(2021)Ilharco, Wortsman, Wightman, Gordon, Carlini, Taori, Dave, Shankar, Namkoong, Miller, Hajishirzi, Farhadi, and Schmidt]{ilharco_gabriel_2021_5143773}
Gabriel Ilharco, Mitchell Wortsman, Ross Wightman, Cade Gordon, Nicholas Carlini, Rohan Taori, Achal Dave, Vaishaal Shankar, Hongseok Namkoong, John Miller, Hannaneh Hajishirzi, Ali Farhadi, and Ludwig Schmidt.
\newblock Openclip, July 2021.
\newblock URL \url{https://doi.org/10.5281/zenodo.5143773}.
\newblock If you use this software, please cite it as below.

\bibitem[Ivgi et~al.(2023)Ivgi, Hinder, and Carmon]{ivgi2023dog}
Maor Ivgi, Oliver Hinder, and Yair Carmon.
\newblock {D}o{G} is {SGD}'s best friend: A parameter-free dynamic step size schedule.
\newblock In \emph{International Conference on Machine Learning (ICML)}, 2023.

\bibitem[Jacobsen and Cutkosky(2022)]{jacobsen2022parameter}
Andrew Jacobsen and Ashok Cutkosky.
\newblock Parameter-free mirror descent.
\newblock In \emph{Conference on Learning Theory (COLT)}, 2022.

\bibitem[Kaplan et~al.(2020)Kaplan, McCandlish, Henighan, Brown, Chess, Child, Gray, Radford, Wu, and Amodei]{kaplan2020scaling}
Jared Kaplan, Sam McCandlish, Tom Henighan, Tom~B Brown, Benjamin Chess, Rewon Child, Scott Gray, Alec Radford, Jeffrey Wu, and Dario Amodei.
\newblock Scaling laws for neural language models.
\newblock \emph{arXiv:2001.08361}, 2020.

\bibitem[Kearns and Ron(1997)]{kearns1997algorithmic}
Michael Kearns and Dana Ron.
\newblock Algorithmic stability and sanity-check bounds for leave-one-out cross-validation.
\newblock In \emph{Proceedings of the tenth annual conference on Computational learning theory}, pages 152--162, 1997.

\bibitem[Kempka et~al.(2019)Kempka, Kotlowski, and Warmuth]{kempka2019adaptive}
Michal Kempka, Wojciech Kotlowski, and Manfred~K Warmuth.
\newblock Adaptive scale-invariant online algorithms for learning linear models.
\newblock In \emph{International Conference on Machine Learning (ICML)}, 2019.

\bibitem[Khaled and Jin(2024)]{khaled2024tuning}
Ahmed Khaled and Chi Jin.
\newblock Tuning-free stochastic optimization.
\newblock In \emph{International Conference on Machine Learning (ICML)}, 2024.

\bibitem[Korolev and Shevtsova(2010)]{korolev2010upper}
V~Yu Korolev and Irina~G Shevtsova.
\newblock On the upper bound for the absolute constant in the berry--esseen inequality.
\newblock \emph{Theory of Probability \& Its Applications}, 54\penalty0 (4):\penalty0 638--658, 2010.

\bibitem[Kreisler et~al.(2024)Kreisler, Ivgi, Hinder, and Carmon]{kreisler2024accelerated}
Itai Kreisler, Maor Ivgi, Oliver Hinder, and Yair Carmon.
\newblock Accelerated parameter-free stochastic optimization.
\newblock In \emph{The Thirty Seventh Annual Conference on Learning Theory}, pages 3257--3324. PMLR, 2024.

\bibitem[Krizhevsky(2009)]{krizhevsky2009learning}
Alex Krizhevsky.
\newblock Learning multiple layers of features from tiny images.
\newblock Technical report, University of Toronto, 2009.

\bibitem[Levy and Duchi(2019)]{levy2019necessary}
Daniel Levy and John~C Duchi.
\newblock Necessary and sufficient geometries for gradient methods.
\newblock \emph{Advances in Neural Information Processing Systems (NeurIPS)}, 2019.

\bibitem[Manole and Ramdas(2023)]{manole2023martingale}
Tudor Manole and Aaditya Ramdas.
\newblock Martingale methods for sequential estimation of convex functionals and divergences.
\newblock \emph{IEEE Transactions on Information Theory}, 69\penalty0 (7):\penalty0 4641--4658, 2023.

\bibitem[Maurer and Pontil(2009)]{maurer2009empirical}
Andreas Maurer and Massimiliano Pontil.
\newblock Empirical {B}ernstein bounds and sample variance penalization.
\newblock \emph{Conference on Learning Theory (COLT)}, 2009.

\bibitem[McKinney et~al.(2011)]{mckinney2011pandas}
Wes McKinney et~al.
\newblock pandas: a foundational python library for data analysis and statistics.
\newblock \emph{Python for high performance and scientific computing}, 14\penalty0 (9):\penalty0 1--9, 2011.

\bibitem[McMahan and Orabona(2014)]{mcmahan2014unconstrained}
H~Brendan McMahan and Francesco Orabona.
\newblock Unconstrained online linear learning in {H}ilbert spaces: Minimax algorithms and normal approximations.
\newblock In \emph{Conference on Learning Theory (COLT)}, 2014.

\bibitem[McMahan and Streeter(2010)]{mcmahan2010adaptive}
H~Brendan McMahan and Matthew Streeter.
\newblock Adaptive bound optimization for online convex optimization.
\newblock \emph{Conference on Learning Theory (COLT)}, 2010.

\bibitem[Miller et~al.(2020)Miller, Krauth, Recht, and Schmidt]{miller2020effect}
John Miller, Karl Krauth, Benjamin Recht, and Ludwig Schmidt.
\newblock The effect of natural distribution shift on question answering models.
\newblock In \emph{International conference on machine learning}, pages 6905--6916. PMLR, 2020.

\bibitem[Moreau(1965)]{moreau1965proximity}
Jean-Jacques Moreau.
\newblock Proximity and duality in a hilbertian space.
\newblock \emph{Bulletin of the Mathematical Society of France}, 93:\penalty0 273--299, 1965.

\bibitem[Nagpal(2014)]{word2number}
Akshay Nagpal.
\newblock word2number, 2014.
\newblock URL \url{https://github.com/akshaynagpal/w2n}.

\bibitem[Nemirovski and Yudin(1983)]{nemirovski1983problem}
Arkadi Nemirovski and David Yudin.
\newblock \emph{Problem complexity and method efficiency in optimization}.
\newblock Wiley-{I}nterscience, New York, 1983.

\bibitem[Oden and Kikuchi(1980)]{oden1980theory}
John~Tinsley Oden and Noboru Kikuchi.
\newblock Theory of variational inequalities with applications to problems of flow through porous media.
\newblock \emph{International Journal of Engineering Science}, 18\penalty0 (10):\penalty0 1173--1284, 1980.

\bibitem[Orabona(2019)]{orabona2021modern}
Francesco Orabona.
\newblock A modern introduction to online learning.
\newblock \emph{arXiv:1912.13213v7}, 2019.

\bibitem[Orabona and Tommasi(2017)]{orabona2017training}
Francesco Orabona and Tatiana Tommasi.
\newblock Training deep networks without learning rates through coin betting.
\newblock In \emph{Advances in Neural Information Processing Systems (NeurIPS)}, 2017.

\bibitem[Paszke et~al.(2019)Paszke, Gross, Massa, Lerer, Bradbury, Chanan, Killeen, Lin, Gimelshein, Antiga, Desmaison, Kopf, Yang, DeVito, Raison, Tejani, Chilamkurthy, Steiner, Fang, Bai, and Chintala]{paszke2019pytorch}
Adam Paszke, Sam Gross, Francisco Massa, Adam Lerer, James Bradbury, Gregory Chanan, Trevor Killeen, Zeming Lin, Natalia Gimelshein, Luca Antiga, Alban Desmaison, Andreas Kopf, Edward Yang, Zachary DeVito, Martin Raison, Alykhan Tejani, Sasank Chilamkurthy, Benoit Steiner, Lu~Fang, Junjie Bai, and Soumith Chintala.
\newblock Py{T}orch: An imperative style, high-performance deep learning library.
\newblock In \emph{Advances in Neural Information Processing Systems (NeurIPS)}, 2019.

\bibitem[Radford et~al.(2021)Radford, Kim, Hallacy, Ramesh, Goh, Agarwal, Sastry, Askell, Mishkin, Clark, Krueger, and Sutskever]{radford2021learning}
Alec Radford, Jong~Wook Kim, Chris Hallacy, Aditya Ramesh, Gabriel Goh, Sandhini Agarwal, Girish Sastry, Amanda Askell, Pamela Mishkin, Jack Clark, Gretchen Krueger, and Ilya Sutskever.
\newblock Learning transferable visual models from natural language supervision.
\newblock In \emph{International Conference on Machine Learning (ICML)}, 2021.

\bibitem[Recht et~al.(2019)Recht, Roelofs, Schmidt, and Shankar]{recht2019imagenet}
Benjamin Recht, Rebecca Roelofs, Ludwig Schmidt, and Vaishaal Shankar.
\newblock Do imagenet classifiers generalize to imagenet?
\newblock In \emph{International conference on machine learning}, pages 5389--5400. PMLR, 2019.

\bibitem[Roelofs et~al.(2019)Roelofs, Shankar, Recht, Fridovich-Keil, Hardt, Miller, and Schmidt]{roelofs2019meta}
Rebecca Roelofs, Vaishaal Shankar, Benjamin Recht, Sara Fridovich-Keil, Moritz Hardt, John Miller, and Ludwig Schmidt.
\newblock A meta-analysis of overfitting in machine learning.
\newblock \emph{Advances in neural information processing systems}, 32, 2019.

\bibitem[Vapnik(1999)]{vapnik1999overview}
Vladimir~N Vapnik.
\newblock An overview of statistical learning theory.
\newblock \emph{IEEE transactions on neural networks}, 10\penalty0 (5):\penalty0 988--999, 1999.

\bibitem[Wickham et~al.(2019)Wickham, Averick, Bryan, Chang, McGowan, François, Grolemund, Hayes, Henry, Hester, Kuhn, Pedersen, Miller, Bache, Müller, Ooms, Robinson, Seidel, Spinu, Takahashi, Vaughan, Wilke, Woo, and Yutani]{tidyverse}
Hadley Wickham, Mara Averick, Jennifer Bryan, Winston Chang, Lucy~D'Agostino McGowan, Romain François, Garrett Grolemund, Alex Hayes, Lionel Henry, Jim Hester, Max Kuhn, Thomas~Lin Pedersen, Evan Miller, Stephan~Milton Bache, Kirill Müller, Jeroen Ooms, David Robinson, Dana~Paige Seidel, Vitalie Spinu, Kohske Takahashi, Davis Vaughan, Claus Wilke, Kara Woo, and Hiroaki Yutani.
\newblock Welcome to the {tidyverse}.
\newblock \emph{Journal of Open Source Software}, 4\penalty0 (43):\penalty0 1686, 2019.
\newblock \doi{10.21105/joss.01686}.

\bibitem[Yadav and Bottou(2019)]{yadav2019cold}
Chhavi Yadav and L{\'e}on Bottou.
\newblock Cold case: The lost {MNIST} digits.
\newblock \emph{Advances in neural information processing systems}, 32, 2019.

\end{thebibliography}

\end{document}